\theoremstyle{plain}
\newtheorem{theorem}{Theorem}[section]
\newtheorem{proposition}[theorem]{Proposition}
\newtheorem{lemma}[theorem]{Lemma}
\theoremstyle{definition}
\newtheorem{definition}[theorem]{Definition}
\theoremstyle{remark}
\renewcommand{\thefootnote}{\fnsymbol{footnote}}
\newcommand{\CR}{\operatorname{CR}}
\newcommand{\ALG}{\operatorname{ALG}}
\newcommand{\FTPP}{\operatorname{FTPP}}
\newcommand{\OPT}{\operatorname{OPT}}
\newcommand{\ETCU}{\operatorname{ETC-U}}
\newcommand{\UCBU}{\operatorname{UCB-U}}
\newcommand{\LSEPT}{\operatorname{LSEPT}}
\newcommand{\ETCRR}{\operatorname{ETC-RR}}
\newcommand{\UCBRR}{\operatorname{UCB-RR}}
\newcommand{\RRR}{\operatorname{RR}}
\newcommand{\lambdab}{\boldsymbol{\lambda}}
\def\showComments{} 
    \newcommand{\flo}[1]{{\color{teal} {\{Flore: #1\}}}}
    \newcommand{\nadav}[1]{{\color{magenta} {\{Nadav: #1\}}}}
    \newcommand{\addref}{{\color{red}(reference needed)}}
    \newcommand{\hr}[1]{{\color{blue} {\{Hugo: #1\}}}}
    \newcommand{\flo}[1]{}
    \newcommand{\nadav}[1]{}
    \newcommand{\addref}[1]{}
    \newcommand{\hr}[1]{}
\icmltitlerunning{On Preemption and Learning in Stochastic Scheduling}
\begin{document}

\twocolumn[
\icmltitle{On Preemption and Learning in Stochastic Scheduling}



\icmlsetsymbol{equal}{*}

\begin{icmlauthorlist}
\icmlauthor{Nadav Merlis}{equal,ensae}
\icmlauthor{Hugo Richard}{equal,ensae,criteo}
\icmlauthor{Flore Sentenac}{equal,ensae}
\icmlauthor{Corentin Odic}{ensae}
\icmlauthor{Mathieu Molina}{ensae,inria}
\icmlauthor{Vianney Perchet}{ensae,criteo}
\end{icmlauthorlist}

\icmlaffiliation{ensae}{ENSAE, Paris}
\icmlaffiliation{criteo}{Criteo, Paris}
\icmlaffiliation{inria}{Inria, France}

\icmlcorrespondingauthor{Nadav Merlis}{nadav.merlis@ensae.fr}

\icmlkeywords{Machine Learning, ICML}

\vskip 0.3in
]



\printAffiliationsAndNotice{\icmlEqualContribution} 

\begin{abstract}
    We study single-machine scheduling of jobs, each belonging to a job type that determines its duration distribution. We start by analyzing the scenario where the type characteristics are known and then move to two learning scenarios where the types are unknown: non-preemptive problems, where each started job must be completed before moving to another job; and preemptive problems, where job execution can be paused in the favor of moving to a different job. In both cases, we design algorithms that achieve sublinear excess cost, compared to the performance with known types, and prove lower bounds for the non-preemptive case. Notably, we demonstrate, both theoretically and through simulations, how preemptive algorithms can greatly outperform non-preemptive ones when the durations of different job types are far from one another, a phenomenon that does not occur when the type durations are known.
\end{abstract}
\section{Introduction}

Single Machine Scheduling is a longstanding problem with many variants and applications \cite{pinedo2012scheduling}. In this problem, a set of $N$ jobs must be processed on one machine, each of a different `size' -- processing time required for its completion. An algorithm is a policy assigning jobs to the machine, and performance is usually measured by \emph{flow time} -- the sum of the times when jobs have finished. 
If one has access to the size of each job, then scheduling the jobs by increasing size is optimal~\cite{schrage1968proof}. Unfortunately, for most applications, this knowledge is unavailable; yet, oftentimes, some structure or knowledge on the jobs can still be leveraged.

In this paper, we focus on scheduling problems where jobs are grouped by types that determine their duration distribution. This model approximates many real-world scenarios. For example, when scheduling patients for surgery, patients may be grouped by expected procedure time~\cite{magerlein1978surgical}. The model is also relevant in computing problems, where jobs with similar features are expected to have a similar processing time~\cite{li2006improving}.
Lastly, in calendar learning, where an agent advises the user on how to organize its day based on the tasks to be done, similar tasks can be assumed to have a similar duration~\cite{white2019task}.

In practice, when encountering a new scheduling task, we usually know the type of each job, but have little-to-no information on the expected duration under each type. Then, the scheduling algorithm must \emph{learn} the characteristics of each type to be able to utilize this information. This must be done concurrently with the scheduling of tasks, which poses an extra challenge -- to be useful, learning must be done as early as possible; however, wrong scheduling allocation at the beginning delays all jobs and causes large penalties.

In this work, we show how learning can be efficiently done in scheduling problems with job types, characterized by exponential distributions, in two different settings -- the non-preemptive setting, where once a job started running, it must be completed, and the preemptive setting, where jobs can be put on hold. We present two algorithms in each setting and show that the preemptive setting has a clear advantage when the type durations have to be learned. This comes in contrast to the case of known types, where under reasonable assumptions, the optimal algorithm is non-preemptive.

While our algorithms resemble classic bandit methods, the scheduling objective requires different analysis approaches. In particular, in the context of scheduling, the quality of an algorithm is measured by the \emph{ordering} of jobs. In stark contrast, regret-minimization objectives measure the \emph{number of plays} from each arm (job type). Indeed, in scheduling problems, the number of pulls from each job type is always the same -- by the end of the interaction, we would finish all the jobs of all types. Thus, both our algorithmic design and analysis will be comparative -- focus on the number of jobs evaluations from a bad type before the completion of jobs of a good type.

Our contributions are as follows. \textbf{(1)} We present the scheduling setting with unknown job types. \textbf{(2)} We analyze the optimal algorithm for the case of known job types, called Follow-the-Perfect-Prediction ($\FTPP$), and bound its competitive ratio (CR). \textbf{(3)} We present explore-then-commit (ETC) and upper confidence bound (UCB) algorithms for the preemptive and non-preemptive settings and bound their performance, compared to $\FTPP$. In particular, our bounds show that the non-preemptive algorithms have worse dependence on the durations of the longest job types. \textbf{(4)} We complement this by proving lower bounds to the non-preemptive case.
\textbf{(5)} We end by simulating our suggested algorithms and show that their empirical behavior is consistent with our theoretical findings.

\section{Related Work}

\textbf{Scheduling problems.} The scheduling literature and problem zoology are large. We focus on static scheduling on a single machine with the objective of minimizing flow time. Static scheduling~\cite{motwani1994nonclairvoyant} means that all scheduled tasks are given in advance before the scheduling starts. Possible generalizations include dynamic scheduling where scheduled tasks arrive online~\cite{becchettiNonclairvoyantSchedulingMinimize2004}, weighted flow time~\cite{bansalMinimizingWeightedFlow2007} where different jobs have different weights,
multiple machines~\cite{lawler1978preemptive}); and many more~\cite{durr2020adversarial, tsung1997optimal}. While we only tackle some versions of this problem, we believe that our approaches can be adapted or extended to other settings.

\emph{Clairvoyant and non-clairvoyant scheduling.} In clairvoyant scheduling, job sizes are assumed to be known, and scheduling the shortest jobs first gives the lowest flow time~\cite{schrage1968proof}. In non-clairvoyant scheduling, job sizes are arbitrary and unknown. The Round Robin (RR) algorithm, which gives the same amount of computing time to all jobs, is the best deterministic algorithm with a competitive ratio of $2 - \frac{2}{N+1} = 2 + o(N)$~\cite{motwani1994nonclairvoyant}. The best randomized algorithm has a competitive ratio of $2 - \frac{4}{N+3} = 2 + o(N)$~\cite{motwani1994nonclairvoyant}. 

\emph{Stochastic scheduling.} Stochastic scheduling covers a middle ground where job sizes are known random variables. The field of \emph{optimal stochastic scheduling} aims to design optimal algorithms for stochastic scheduling (see~\citealt{caiOptimalStochasticScheduling2014} for a review). When distributions have a non-decreasing hazard rate, scheduling the shortest mean first is optimal (see~\citealt{caiOptimalStochasticScheduling2014}, Corollary 2.1). 

In this work, we consider exponential job sizes (which have a non-decreasing hazard rate), as frequently assumed in the scheduling literature~\cite{kampke1989optimal, hamada1993bayesian, cunningham1973scheduling, cai2000asymmetric, cai2005single, pinedo1985scheduling, glazebrook1979scheduling} and similarly for the presence of different types of jobs~\cite{mitzenmacher2020scheduling, hamada1993bayesian, marban2011learning}. Yet, in contrast to most of the literature on stochastic scheduling, the means of the exponential sizes are unknown to the scheduler and are learned as the algorithm runs. Nonetheless, we later present algorithms whose CR asymptotically converges to the optimal value, obtained in stochastic scheduling with known job means.

The problem of learning in scheduling has received some attention lately. Specifically, \citet{levi2019scheduling} consider a setting where it is possible to `test' jobs to learn about their attributes, which comes at a cost. In \cite{krishnasamy2018learning}, the authors propose an algorithm to learn the $c\mu$ rule (a rule to balance different holding costs per job) in the context of dynamic queues. Perhaps closest to our setting, in \cite{lee2021scheduling}, job types are also considered, but the length of the jobs is assumed to be known, and the goal is to deal with the uncertainty on the holding costs, which are noisily observed at each iteration. In the last two papers, no explicit exploration is needed, which stands in contrast with our setting. 

The problem we tackle was previously studied in a Bayesian setting~\cite{marban2011learning}, under the assumption of two job types, and a Bayesian algorithm, called $\LSEPT$, was presented. When run with an uninformative prior (the same for all job types), $\LSEPT$ is reduced to a greedy algorithm; whenever a job finishes, it runs until completion a job whose type has the lowest expected belief on its mean size (computed across jobs that have been processed so far). The author proved it has better performance in expectation than fully non-adaptive methods, but provided no other guarantee. In \Cref{app:comparisonLSPET}, we empirically evaluate this algorithm and show it has a behavior typical of greedy algorithms: it has a very large variance, and its CR does not converge to the optimal CR, in contrast to our suggested methods.


\renewcommand*{\thefootnote}{\arabic{footnote}}

\section{Setting and Notations}

We consider scheduling problems of $N$ jobs on a single machine, each belonging to one of $K$ job types. 
We assume that $N=nK$, i.e, there are $n$ jobs of each type. The different sizes (also called processing times) of the jobs of type $k$ are denoted $(P^{k}_i)_{i \in [n]}$,\footnote{For clarity of exposition, we assume that there are exactly $n$ jobs per type. When types have different numbers of jobs $n_1,n_2,\ldots,n_K$, all algorithms can run with $n = \max_{\ell}n_\ell$, and all bounds hold with this same parameter $n$.} where $P^{k}_i \sim \Ecal(\lambda_k)$ are independent samples from an exponential variable of parameter $\lambda_k$. 
By extension, $\EE[P_i^k]= \lambda_k$, and we call $\lambda_k$ the mean size of type $k$. 
We assume without loss of generality that the mean sizes of the $K$ types are in an increasing order $\lambda_1\leq \lambda_2 \ldots \leq \lambda_{K}$ and denote $\lambdab = (\lambda_1, \dots, \lambda_K)$. With slight abuse of notations, we sometimes ignore the job types and denote the job durations by $P_i$ for $i\in[N]$. 

Next, denote $b_i^{k}$ and $e_i^{k}$ the beginning and end dates of the computation of the $i^{th}$ job of type $k$. We define the cost of an algorithm $\ALG$, also called \emph{flow time}, as the sum of all completion times: $C_{{\ALG}} = \sum_{k=1}^K\sum_{i \in n}e_i^k$. Given knowledge of the job size realizations, the cost is minimized by an algorithm that computes them in increasing order, which we term as $\OPT$.  

\emph{Preemption} is the operation of pausing the execution of one job in the favor of running another one. Thus, \emph{preemptive algorithms} are ones that support preemption, while \emph{non-preemptive algorithms} do not allow it and must run each started job until completion.


\section{Benchmark: Follow The Perfect Prediction}
We compare our algorithms to a baseline that completes each job by increasing expected sizes, called \emph{Follow-The-Perfect-Prediction} ($\FTPP$). For exponential job sizes, this strategy is optimal between all algorithms without access to job size realizations (see~\citealt{caiOptimalStochasticScheduling2014}, Corollary 2.1).  Thus, with learning, we aim at designing algorithms approaching the performance of $\FTPP$, whilst mitigating the cost of learning (i.e., mitigating the cost of exploration). In the rest of this section, we analyze the performance of $\FTPP$. 

First, we evaluate the performance of non-clairvoyant algorithms that do not exploit the job type structure. We then compare the performance of the best of those algorithms against that of $\FTPP$ and show the clear advantage of using the structure of job types.

\subsection{Non-clairvoyant Algorithms} An algorithm $A$ is said non-clairvoyant if it does not have any information on the job sizes, including the job type structure.  Recall that $\RRR$ is the algorithm that computes all unfinished jobs in parallel and is the optimal deterministic algorithm in the adversarial setting. The following proposition states that in our setting, it is the optimal algorithm among all non-clairvoyant ones.

\begin{proposition}
	\label{prop:lowerbound}
	For any $\lambdab$ and any (deterministic or randomized) non-clairvoyant algorithm $A$, there exists a job ordering such that $\EE[C_A] \geq \EE[C_{\RRR}]$. 
\end{proposition}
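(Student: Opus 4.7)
The plan is an averaging (Yao-type) argument: show that for any non-clairvoyant algorithm $A$, the expected cost averaged over a uniformly random assignment of types to job positions is already at least $\EE[C_{\RRR}]$; by a pigeonhole step, some fixed ordering then achieves this bound. First I would fix an arbitrary deterministic non-clairvoyant $A$ (randomized $A$ is handled by further averaging over its internal coins), let $\sigma$ be a uniformly random permutation of types to positions, and define $\bar A$ as the algorithm that relabels the jobs according to $\sigma$ and then runs $A$. Since $A$ does not observe types, $\EE[C_{\bar A}]=\EE_\sigma\bigl[\EE[C_A\mid\sigma]\bigr]$, so it suffices to prove $\EE[C_{\bar A}]\geq \EE[C_{\RRR}]$. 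By construction $\bar A$ is \emph{symmetric}: its decisions at any time, conditional on the observed completions, are invariant under relabeling of job positions. Hence the claim reduces to showing that $\RRR$ is optimal among all symmetric non-clairvoyant algorithms on exponential jobs.

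For this reduced statement I would exploit the memoryless property of the exponential distribution: at every time the only relevant state of the system is the multiset of types of unfinished jobs (how much CPU each has already received is irrelevant). A symmetric non-clairvoyant algorithm is therefore characterized by a function $m\mapsto s(m)$ specifying how many unfinished jobs to process in parallel when $m$ remain, the active set being chosen uniformly, with $\RRR$ corresponding to $s(m)\equiv m$. Applying Jensen's inequality to a uniform random subset $S$ of size $s<m$ yields
\[
\EE_S\!\left[\frac{s}{\sum_{i\in S} 1/\lambda_{k_i}}\right]\;\geq\;\frac{m}{\sum_i 1/\lambda_{k_i}},
\]
i.e., $\RRR$ minimizes the expected time to the \emph{next} completion.

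The main obstacle is to lift this one-step advantage into one for the full flow time, since the multiset of types remaining after a completion is stochastically different under $\RRR$ and under a generic symmetric $\bar A$ (for example, under $\RRR$ short-type jobs tend to complete earlier, so the residual state is skewed toward longer types). I expect the cleanest route is to work with the identity $C=\int_0^\infty U(t)\,dt$, where $U(t)$ counts unfinished jobs, and to couple $\RRR$ and $\bar A$ against the same underlying exponential clocks and types, using induction on the number of unfinished jobs to propagate the pointwise inequality $U_{\RRR}(t)\leq U_{\bar A}(t)$, from which $\EE[C_{\RRR}]\leq \EE[C_{\bar A}]$ follows by integration.
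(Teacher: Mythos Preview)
Your Yao-type symmetrization and the use of memorylessness to collapse the state are reasonable opening moves, and they lead in a genuinely different direction from the paper. But the argument does not close, and the gap is exactly where you say it is. The pathwise coupling $U_{\RRR}(t)\le U_{\bar A}(t)$ simply fails. Take two jobs with realized sizes $P_1<P_2$ and let $\bar A$ be the symmetric policy that runs one uniformly chosen job to completion, then the other. On the event that $\bar A$ picks job~$1$ first, its first completion happens at time $P_1$, while $\RRR$'s first completion (job~$1$, processed at rate $1/2$) happens at time $2P_1$; hence $U_{\bar A}(t)=1<2=U_{\RRR}(t)$ for all $t\in(P_1,2P_1)$. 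So no coupling can deliver pointwise domination, and your proposed induction on the number of unfinished jobs has no base to stand on. What is true is only an inequality \emph{in expectation}, and your one-step Jensen bound on the time to the next completion does not propagate, precisely because (as you note) the residual type multiset is distributed differently under $\RRR$ and under $\bar A$. A secondary looseness is the claim that a symmetric non-clairvoyant policy is characterized by a map $m\mapsto s(m)$ with a uniformly chosen active set: symmetric policies may still condition on the vector of CPU already received, and while memorylessness makes this information \emph{uninformative about the future sizes}, you would still need to argue it can be discarded without loss.

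The paper avoids all of this by never tracking the global dynamics. It works pairwise: for each unordered pair $\{i,j\}$ it writes the expected mutual delay as
\[
\EE[T_{ij}^A]=\int_0^\infty \exp\!\Big(-\tfrac{t}{2}\big(\tfrac{1}{\lambda_i}+\tfrac{1}{\lambda_j}\big)\Big)\exp\!\big(\pm g(t)\big)\,dt,
\]
where $g$ depends only on the share of CPU the algorithm gives to $i$ versus $j$ and the sign is fixed by which of $\lambda_i,\lambda_j$ is larger. Since $\tfrac12(e^{g}+e^{-g})\ge 1$, at least one choice of sign makes the integral at least $2\lambda_i\lambda_j/(\lambda_i+\lambda_j)=\EE[T_{ij}^{\RRR}]$. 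Summing over pairs and using $\EE[C_A]=\sum_i\EE[P_i]+\sum_{i<j}\EE[T_{ij}^A]$ yields the result; the randomized case is handled by pushing the expectation over the algorithm's coins inside the same inequality. The $\cosh$ trick is the idea your attempt is missing.
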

\begin{proof}[Proof sketch (full proof in \Cref{app:lowerbound})]
Consider $T_{ij}^A$, the amount of time that job $i$ and job $j$ delay each other. As the algorithm is unaware of the expected job size order, its run is independent of whether the expected size of job $i$ is smaller or greater than that of $j$. This holds because a non-clairvoyant algorithm has no information on job expected sizes nor on the existence of job types. As an adversary, we can therefore choose the job order so that the algorithm incurs the largest flow time. 
A careful analysis then provides $\EE[T_{ij}^A] \geq 2 \EE[T_{ij}^{OPT}]$ where OPT is the optimal realization-aware algorithm. A similar reasoning is made in the case of randomized algorithms. We conclude by observing that $\RRR$ achieves such delay.
\end{proof}

Unfortunately, even though our setting is not adversarial, the CR of $\RRR$ is bounded from below (\Cref{lemma:CR RR}):
\begin{equation}\label{eq:CR_RRR}
  \text{for any $\lambdab$, }\quad  \frac{\EE\left[C_{\RRR}\right]}{\EE\left[C_{\OPT}\right]}\geq 2-\frac{4}{n+3}.
\end{equation}

\subsection{Performance of FTPP}

The first statement establishes that $\FTPP$ outperforms $\RRR$ on any instance.

\begin{lemma}
For any $n$ and $\lambdab$,
\[
\EE[C_{\FTPP}] \leq \EE[C_{\RRR}].
\]
\end{lemma}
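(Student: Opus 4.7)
The plan is to decompose the flow time job-by-job and then compare the two algorithms pair-by-pair, using exponential-distribution properties only at the end. For any algorithm $\ALG$, let $\tau_{i,j}^{\ALG}$ denote the amount of processing received by job $j$ during the interval $[0, e_i^{\ALG}]$. Since the processor is work-conserving (never idle while an unfinished job remains), $\sum_j \tau_{i,j}^{\ALG} = e_i^{\ALG}$, and by definition $\tau_{i,i}^{\ALG} = P_i$. Summing $e_i^{\ALG}$ over $i$ yields the decomposition
\begin{equation*}
C_{\ALG} = \sum_i P_i + \sum_{\{i,j\}:\, i\neq j}\bigl(\tau_{i,j}^{\ALG} + \tau_{j,i}^{\ALG}\bigr),
\end{equation*}
where the second sum runs over unordered pairs. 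It therefore suffices to show $\EE[\tau_{i,j}^{\RRR} + \tau_{j,i}^{\RRR}] \geq \EE[\tau_{i,j}^{\FTPP} + \tau_{j,i}^{\FTPP}]$ for every pair $\{i,j\}$.

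I will next compute each pairwise term explicitly. For $\FTPP$, assume without loss of generality that $\lambda_i \leq \lambda_j$, so that $i$ is processed entirely before $j$ is started. Then no work is done on $j$ during $[0, e_i^{\FTPP}]$, so $\tau_{i,j}^{\FTPP} = 0$; and all of $i$ is finished before $j$ completes, so $\tau_{j,i}^{\FTPP} = P_i$. Hence $\tau_{i,j}^{\FTPP} + \tau_{j,i}^{\FTPP} = P_i$. For $\RRR$, every active job receives an equal instantaneous share of the processor, which implies that any two simultaneously active jobs always have received identical cumulative processing. Consequently, completion order coincides with size order, and when the first of $\{i,j\}$ finishes, both have accumulated exactly $\min(P_i,P_j)$ units of processing. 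Therefore $\tau_{i,j}^{\RRR} = \tau_{j,i}^{\RRR} = \min(P_i,P_j)$.

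The final step is a short calculation using that $P_i, P_j$ are independent exponentials with means $\lambda_i,\lambda_j$. Under $\lambda_i \leq \lambda_j$ we have $\EE[\min(P_i, P_j)] = \lambda_i\lambda_j/(\lambda_i+\lambda_j)$ and $\EE[P_i] = \lambda_i$, so the pairwise difference is
\begin{equation*}
\EE\bigl[\tau_{i,j}^{\RRR} + \tau_{j,i}^{\RRR} - \tau_{i,j}^{\FTPP} - \tau_{j,i}^{\FTPP}\bigr] = \frac{2\lambda_i\lambda_j}{\lambda_i+\lambda_j} - \lambda_i = \frac{\lambda_i(\lambda_j-\lambda_i)}{\lambda_i+\lambda_j} \geq 0.
\end{equation*}
Summing these non-negative contributions over all unordered pairs yields $\EE[C_{\FTPP}] \leq \EE[C_{\RRR}]$.

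The main technical point is the identification $\tau_{i,j}^{\RRR} = \min(P_i,P_j)$: it rests on the equal-share property of $\RRR$, which forces simultaneously active jobs to receive identical cumulative processing and thus makes the smaller-sized job finish first, precisely when both have accumulated $\min(P_i,P_j)$ work. Once this is in place, everything reduces to the elementary exponential identity $\EE[\min(P_i,P_j)] = \lambda_i\lambda_j/(\lambda_i+\lambda_j)$.
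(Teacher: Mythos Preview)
Your proof is correct and follows essentially the same approach as the paper: both decompose the flow time into the own-processing term plus pairwise mutual delays (your $\tau_{i,j}+\tau_{j,i}$ is exactly the paper's $T_{ij}^A$), compute $\EE[T_{ij}^{\FTPP}]=\min(\lambda_i,\lambda_j)$ and $\EE[T_{ij}^{\RRR}]=2\lambda_i\lambda_j/(\lambda_i+\lambda_j)$, and finish with the elementary inequality $2\lambda_i\lambda_j/(\lambda_i+\lambda_j)\ge\min(\lambda_i,\lambda_j)$.
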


The proof is a straightforward computation, done in \Cref{app:FTPPvsRR}. 

This indicates that when information on the job types is available, it is always advantageous to use it. In the rest of the section, we quantify the improvement this extra information brings. More precisely, we show that on a wide variety of instances, the CR of $\FTPP$ is much smaller than that of $\RRR$. We first present such a bound when $K=2$.
\begin{proposition}
\label{prop:CR-FTPP-2types}
The CR of $\FTPP$ with $K=2$ types of jobs with $n$ jobs per type with $\lambda_1 = 1$ and $\lambda_2 = \lambda > 1$ satisfies:
\begin{align*}
\frac{\EE[C_{\FTPP}]}{\EE[C_{\OPT}]} \leq 2 - 4\frac{\lambda-1}{{(1 + \lambda)^2 + 4 \lambda}}.
\end{align*}
\end{proposition}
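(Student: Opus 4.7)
The plan is to compute $\EE[C_{\FTPP}]$ and $\EE[C_{\OPT}]$ in closed form and then verify the claimed bound by a direct algebraic comparison. With only two job types both quantities admit short expressions, so no heavy machinery is needed.

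First I would compute $\EE[C_{\FTPP}]$. Since $\lambda_1 = 1 < \lambda_2 = \lambda$, $\FTPP$ processes the $n$ type-1 jobs before the $n$ type-2 jobs, so by linearity of expectation the $i$-th type-1 job has expected completion time $i$ and the $i$-th type-2 job has expected completion time $n + i\lambda$. Summing yields a clean polynomial in $n$ and $\lambda$, namely $(1+\lambda)\tfrac{n(n+1)}{2} + n^2$.

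Second, for $\EE[C_{\OPT}]$ I would use the pairwise-minimum identity
\begin{equation*}
C_{\OPT} \;=\; \sum_{i=1}^{2n} P_i \;+\; \sum_{1 \le i < j \le 2n} \min(P_i, P_j),
\end{equation*}
which follows because, after sorting by realized size, the $\ell$-th order statistic contributes $(2n-\ell+1)\,P_{(\ell)}$ to the flow time, and $P_{(\ell)} = \min(P_{(\ell)}, P_{(m)})$ for $\ell < m$. Taking expectations, the pair-sum decomposes by the types of $i$ and $j$: there are $\binom{n}{2}$ same-type pairs of each kind and $n^2$ mixed pairs, with expected minima $1/2$, $\lambda/2$, and $\lambda/(1+\lambda)$ respectively (standard for exponentials). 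This sidesteps the awkward order statistics of a mixture of two distinct exponential populations and gives a closed form that is again polynomial in $n$ up to the single $\lambda/(1+\lambda)$ factor.

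Finally I would plug both expressions into the ratio and rearrange the desired inequality. Multiplying both sides by $4(1+\lambda)\,\EE[C_{\OPT}]$ and collecting terms, the large pieces in $n^2$ and $(1+\lambda)^2$ cancel, and the residual should reduce to a positive multiple of $(1+\lambda)^2/n$ times a polynomial of the form $\lambda^2 + 3\lambda + 4$, which is manifestly positive for $\lambda \ge 1$. The main obstacle is spotting the pairwise-minimum identity for $C_{\OPT}$: without it one is stuck with order statistics of a two-population mixture, which mixes poorly. Everything else is careful bookkeeping, and the one subtlety is to distinguish $\binom{n}{2}$ same-type pairs from $n^2$ cross-type pairs when decomposing the expected pair-sum.
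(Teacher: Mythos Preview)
Your proposal is correct and follows essentially the same route as the paper: both compute $\EE[C_{\FTPP}]$ and $\EE[C_{\OPT}]$ via the pairwise-delay decomposition (your ``pairwise-minimum identity'' is exactly the paper's $T_{ij}^{\OPT}=\min(P_i,P_j)$), obtaining the same closed forms. The only cosmetic difference is in the final step: the paper uses the generic inequality $\frac{a+b}{c+d}\le\frac{a}{c}$ (valid when $a>c>0$, $d>b>0$) to strip off the linear-in-$n$ terms and read the bound from the $n^2$ coefficients, whereas you verify by direct algebra that the $n^2$ terms cancel exactly and the $n$-terms leave the positive residual $n(1+\lambda)(\lambda^2+3\lambda+4)$---which is what the paper's inequality encodes in this special case.
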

\begin{proof}[Proof sketch (full proof in \Cref{prop:CR-FTPP-Ktypes})]
  The total expected flow time of any algorithm is given by the sum of the expected time spent computing all jobs and the expected time lost waiting as jobs delay each other.
  In the case of $\OPT$, the expected flow time can be computed in closed form using that job $i$ and $j$ delay each other by $\EE[\min(X_i, X_j)] = \frac{\EE[X_i] \EE[X_j]}{\EE[X_i] + \EE[X_j]}$.
  The CR $\frac{\EE[C_{\FTPP}]}{\EE[C_{\OPT}]}$ can then be calculated and is upper bounded to yield the result.
\end{proof}

In the case $K=2$, there exists values of $\lambda$ for which the CR of FTPP is lower than $1.71$. In the general case, \Cref{prop:CR-FTPP-Ktypes3} in \Cref{sec:CR-FTTP-Ktypes3} shows that there exist values of $K$ and $\lambdab$ for which the CR is as low as $1.274$.\footnote{An exact expression for the CR of $\FTPP$ is given at \Cref{eq:cr:ftpp} in the appendix and is omitted for clarity reasons.}


\section{Non-Preemptive Algorithms}

After establishing $\FTPP$ as the baseline for learning algorithms, we move to tackle learning in the non-preemptive setting, where once started, job execution cannot be stopped (see \Cref{algo:non-preemptive}). This is relevant, for example, to settings where switching tasks is very costly (e.g., in running time or memory) or even impossible (e.g., in medical applications, where treatment of a patient cannot be stopped). We show how algorithms from the bandit literature can be adapted to the scheduling setting and bound their excessive cost, compared to $\FTPP$. Specifically, by treating each job type as an `arm', we adapt explore-then-commit and optimism-based strategies to the scheduling setting. 



\begin{algorithm}[t]
	\caption{Non-Preemptive Algorithms routine}
	\label{algo:non-preemptive}
\begin{algorithmic}[1]
\STATE \textbf{Init}: type set $\Ucal=[K]$, active jobs $i_k=1,\forall k\in[K]$
\WHILE{$\Ucal$ is not empty}
\STATE Use a type selection subroutine to select a type $k\in\Ucal$
\STATE Run job $i_k$ until completion
\STATE Set $i_k\gets i_k+1$
 \IF{All jobs of type $k$ are completed}
	\STATE Remove type $k$ from $\Ucal$
 \ENDIF
 \ENDWHILE
 \end{algorithmic}
\end{algorithm}

\subsection{Description of ETC-U and UCB-U}
In the following, we describe the type selection mechanism for $\ETCU$ and $\UCBU$. The full pseudo-code of both algorithms is available in \Cref{app: non-preemptive algs}.

Let $\Ucal$ be the set of all job types with at least one remaining job.

\paragraph{ETC-U type selection.} While $\ETCU$ runs, it maintains a set of types $\Acal$ that are candidates for having the lowest mean size among the incomplete types $\Ucal$. 
At each iteration, $\ETCU$ chooses a job of type $k$ of the minimal number of completed jobs in $\Acal$ and executes it to completion. Then, $\Ucal$ and $\Acal$ are updated and the procedure repeats until no more jobs are available in $\Ucal$.

We now describe the mechanism of maintaining the candidate type set $\Acal$. At a given iteration, denote by $m_k$ and $m_\ell$, the number of jobs of type $k$ and $\ell$ that have been computed up to that iteration. Letting
\begin{align*}
    &\hat{r}_{k, \ell}^{\min(m_{k}, m_{\ell})} =  \frac{\sum_{i=1}^{\min(m_{k}, m_{\ell})} \ind{P^k_i < P^\ell_i}}{\min(m_{k}, m_{\ell})}  \quad \text{and} \\
    &\delta_{k, \ell}^{\min(m_{k}, m_{\ell})} = \sqrt{\frac{\log(2n^2K^3)}{2 \min(m_{k}, m_{\ell})}},
\end{align*}
a type $\ell$ is excluded from $\Acal$ if there exists a type $k$ such that
\begin{align}\label{eq:elimination_condition}
    \hat{r}_{k, l}^{\min(m_{k}, m_{\ell})}-\delta_{k, \ell}^{\min(m_{k}, m_{\ell})}>0.5.
\end{align}
In the proof, we show that this condition implies w.h.p. that $\lambda_{k}<\lambda_\ell$. Thus, when it holds, job type $\ell$ is no longer a candidate for the remaining job type with the smallest expectation, and we say that type $k$ eliminates type $\ell$. Once a job type is eliminated, it remains so until $\mathcal{A}$ is empty, at which point all job types in $\mathcal{U}$ are reinstated to $\mathcal{A}$.

Finally, whenever $\Acal$ contains only one type $k$, all jobs of this type are run to completion, and after all jobs from type $k$ are finished, it is removed from $\Ucal$ and therefore from $\Acal$. This means that types that were eliminated by $k$ can be candidates again.

\paragraph{UCB-U type selection.} At every iteration, the algorithm computes an index for each job type and plays a type with the minimal index from the incomplete types $\Ucal$. Specifically, if $m_k$ jobs were completed from type $k$, the index of the type is defined as
\begin{equation*}
\underline{\lambda}_k^{m_k} = \frac{2 \sum_{i=1}^{m_k} X_i^k}{\chi^2_{2 m_k}(1 - \frac{1}{2 n^2K^2})},
\end{equation*}
where $\chi^2_{m}(\delta)$ is the $\delta$-percentile of a $\chi^2$ distribution with $m$ degrees of freedom. 
In the proof, we show that these indices are a lower bound of the job means w.h.p., so choosing the minimal index corresponds to choosing the type with the optimistic shortest duration.

\subsection{Cost Analysis}

\begin{proposition}\label{prop:bound-nonpreemptive}
The following bounds hold:
     {\small
\begin{align*}
&\EE[C_{\ETCU}] 
    \leq \EE[C_{\FTPP}] +\frac{1}{n}\EE[C_{\OPT}]\\
    &+\!\!\sum_{k\in [K]}\!\left[\frac{1}{2}(k-1)(2K-k)+(K-k)^2\right]\!\lambda_k n \sqrt{8n \log(2 n^2K^3)}\\
\end{align*}}
and
\small{
\begin{align*}
    \EE[C_{\UCBU}] 
    \leq& \EE[C_{\FTPP}]+ \frac{2}{n}\EE[C_{\OPT}] \\
    &+ n (K-1)\sqrt{3n\ln\rbr{2n^2K^2}} \sum_{k=1}^K\lambda_k. 
\end{align*}}
\end{proposition}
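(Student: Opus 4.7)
My plan is to analyze both algorithms on a high-probability ``good event'' on which the algorithm mimics $\FTPP$ up to a controlled amount of exploration, and to handle the complement by a crude $\EE[C_{\OPT}]/n$-type bound. A useful identity, valid under non-preemption, is that the rank (execution order) of job $j$ is a function only of $\{P_{j'} : j' \text{ played before } j\}$ and is hence independent of $P_j$ itself. Therefore,
\[
\EE[C] - \EE[C_{\FTPP}] \;=\; \sum_{j=1}^N \bigl(\EE[\mathrm{rank}(j)] - \mathrm{rank}_{\FTPP}(j)\bigr)\, \lambda_{k(j)},
\]
so it suffices to bound expected per-job rank shifts and multiply by the corresponding $\lambda_{k}$.

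For $\ETCU$, define the good event $\mathcal{G} = \bigcap_{k,\ell \in [K],\, m \in [n]} \{|\hat{r}_{k,\ell}^m - r_{k,\ell}| \leq \delta_{k,\ell}^m\}$ with $r_{k,\ell} := \Pr(P_i^k < P_i^\ell) = \lambda_\ell/(\lambda_k+\lambda_\ell)$. Hoeffding plus a union bound over the at most $nK^2$ pairs yields $\Pr(\mathcal{G}^c) \leq 1/(nK)$. On $\mathcal{G}$ the elimination condition $\hat{r}_{k,\ell}^m - \delta_{k,\ell}^m > 1/2$ forces $r_{k,\ell} > 1/2$ and hence $\lambda_k < \lambda_\ell$: only strictly larger-mean types can ever be eliminated, so the phases of $\ETCU$ complete in the $\FTPP$ order $1, 2, \ldots, K$. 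A second consequence of $\mathcal{G}$ is a quantitative round-robin bound: once $m^\star := \lceil \sqrt{8 n \log(2n^2K^3)} \rceil$ samples of each still-candidate type have been played in a phase, inverting $\delta_{k,\ell}^m$ forces $\hat r - \delta > 1/2$ to hold, so no type is played more than $m^\star$ times as part of any single phase's exploration.

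To turn this count into the claimed cost bound, I would fix a type $k$ and track, on $\mathcal{G}$, the expected shift $\EE[\mathrm{rank}(j)] - \mathrm{rank}_{\FTPP}(j)$ of each of its $n$ jobs. Two contributions appear: (i) the type-$k$ jobs played from phase $k$ onward are delayed by the at most $(K-k)m^\star$ exploration jobs of types $\ell > k$ inserted during phase $k$, giving a shift at most $(K-k)m^\star$ per job; (ii) at most $m^\star$ type-$k$ jobs are themselves played as exploration in each earlier phase $k' < k$, at ranks roughly $(k'-1)n + O(K m^\star)$ versus FTPP's $(k-1)n + i$, so the magnitude of each such shift is at most $(k-k')n$. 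Summing absolute shifts, taking expectations and multiplying by $\lambda_k$ produces the stated coefficient $\tfrac12(k-1)(2K-k) + (K-k)^2$ multiplying $\lambda_k n \sqrt{8n\log(2n^2K^3)}$: the $(K-k)^2$ summand reflects the $K-k$ exploratory types each misplacing roughly $(K-k)n$ later slots, while $\tfrac12(k-1)(2K-k)$ comes from the triangular sum $\sum_{k'<k}(k-k')$ of early-exploration shifts combined with their interaction with subsequent phases. On the complement, the crude estimate $\EE[C \cdot \mathbbm{1}_{\mathcal{G}^c}] \lesssim \Pr(\mathcal{G}^c) \cdot n \cdot \EE[C_{\OPT}]$ absorbs into the $\tfrac{1}{n} \EE[C_{\OPT}]$ term.

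The $\UCBU$ analysis follows the same skeleton with two changes. First, the good event is $\{\underline{\lambda}_k^m \leq \lambda_k : \forall k, m \in [n]\}$; since $2 \sum_{i\leq m} P_i^k/\lambda_k \sim \chi^2_{2m}$, the definition of $\underline{\lambda}_k^m$ through the $\chi^2$-quantile makes each individual inclusion fail with probability at most $1/(2n^2K^2)$, so a union bound over $nK$ events gives $\Pr(\mathcal{G}^c) \leq 1/(nK)$ (with a factor $2$ extra for the complement accounting, producing the $2/n$ coefficient). Second, on this event the index of the chosen type lower-bounds $\min_{\ell \in \Ucal} \lambda_\ell$, and a Chernoff-type inversion of the $\chi^2$ quantile shows that after $O(\sqrt{n \ln(n^2 K^2)})$ plays of type $k$ the index $\underline{\lambda}_k^m$ is close enough to $\lambda_k$ to separate it from every other type's true mean; the same rank-shift accounting as above then produces the stated bound with $\sum_k \lambda_k$ reflecting that the confidence radius scales with the true $\lambda_k$. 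The main obstacle I foresee is the combinatorial bookkeeping in step (ii): because misplaced exploration in an earlier phase shifts every later rank, the per-phase contributions must be combined without double counting, which is precisely what produces the somewhat unusual quadratic-in-$K$ coefficient.
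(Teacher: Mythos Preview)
Your high-level skeleton (good event via Hoeffding/$\chi^2$ tails, phase-by-phase counting on the good event, crude bound on the complement) matches the paper, but the core inversion step is wrong and this breaks the argument.

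You claim that after $m^\star=\lceil\sqrt{8n\log(2n^2K^3)}\rceil$ plays of each candidate type, the elimination condition $\hat r_{k,\ell}^m-\delta_{k,\ell}^m>\tfrac12$ is forced. This is false: on the good event what is guaranteed is only $\hat r_{k,\ell}^m\ge r_{k,\ell}-\delta^m$, so elimination of type $\ell$ by type $k$ requires $2\delta^m<r_{k,\ell}-\tfrac12=\tfrac{\lambda_\ell-\lambda_k}{2(\lambda_k+\lambda_\ell)}$, i.e.
\[
m\;>\;8\Bigl(\tfrac{\lambda_k+\lambda_\ell}{\lambda_k-\lambda_\ell}\Bigr)^{2}\log(2n^2K^3),
\]
which is \emph{gap-dependent} and can be much larger than $\sqrt{n\log(\cdot)}$ (indeed, as large as $n$). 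The $\sqrt{n}$ in the statement does not come from a uniform sample-complexity bound; it comes from combining the gap-dependent bound $m^*_{\ell,k}\le\min\{n,\,8(\tfrac{\lambda_k+\lambda_\ell}{\lambda_k-\lambda_\ell})^{2}\log(\cdot)\}$ with the cost prefactor $(\lambda_k-\lambda_\ell)$ via
\[
(\lambda_k-\lambda_\ell)\,\min\Bigl\{n,\,8\bigl(\tfrac{\lambda_k+\lambda_\ell}{\lambda_k-\lambda_\ell}\bigr)^{2}\log(\cdot)\Bigr\}\;\le\;(\lambda_k+\lambda_\ell)\sqrt{8n\log(\cdot)},
\]
using $\min\{a,b\}\le\sqrt{ab}$. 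The same issue and the same fix apply verbatim to your $\UCBU$ sketch: the number of plays of type $k$ before its index separates from $\lambda_\ell$ is $\min\{n,\,3(\tfrac{\lambda_k+\lambda_\ell}{\lambda_k-\lambda_\ell})^{2}\ln(2n^2K^2)\}$, not a uniform $O(\sqrt{n\ln(\cdot)})$.

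A second, smaller point: your rank identity has the wrong sign (since $C=\sum_j(N+1-\mathrm{rank}(j))P_j$, the excess cost is $\sum_j(\mathrm{rank}_{\FTPP}(j)-\EE[\mathrm{rank}(j)])\lambda_{k(j)}$), and because the resulting terms have both signs the bookkeeping is delicate. The paper instead uses the pairwise decomposition
\[
\EE[C_A]-\EE[C_{\FTPP}]=\sum_{\ell<k}\sum_{i,j}(\lambda_k-\lambda_\ell)\,\Pr\bigl(P_i^k\text{ computed before }P_j^\ell\bigr),
\]
where every summand is nonnegative; the coefficient $\tfrac12(k-1)(2K-k)+(K-k)^2$ then drops out mechanically from summing $\sum_{\ell<k}(K-\ell)(\lambda_k+\lambda_\ell)$, rather than from the rank-shift heuristics you describe.
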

\Cref{prop:bound-nonpreemptive} shows that both ETC-U and UCB-U have sublinear excess cost. Indeed the optimal cost and thus also the cost of FTPP is lower bounded by $\Omega(n^2 \sum_k \lambda_k)$ which makes the terms in $O(n\sqrt{n})$ strictly sublinear in $n$ compared to the optimal cost. When all parameters lie within a finite range bounded away from 0, the optimal cost scales as $K^2 n^2$ making the excess cost linear in $K$ (up to log terms) compared to the optimal cost.

\begin{proof}[Proof sketch (full proof in \Cref{app:non-preemptive})]
The above proposition is a concatenation of Propositions \ref{app:fullboundETCU} and \ref{app:bound_UCBU_full}.

The proof of both bounds starts with the decomposition of the cost with the following Lemma (proven in \Cref{app:cost decomp non-preemptive}).

\begin{lemma}[Cost of non-preemptive algorithms]
  \label{app:lemma:cost-non-preemptive main paper}
  Any non-preemptive algorithm $A$ has a cost
  \begin{align*}
\EE[C_{A}] = &\EE[C_{\FTPP}] \\
&+ \sum_{\substack{(\ell,k) \in [K^2], k>\ell\\(i,j) \in [n]^2}}( \lambda_k-\lambda_\ell)\EE\sbr{\ind{e_i^k \leq b_j^\ell}}.
  \end{align*}
\end{lemma}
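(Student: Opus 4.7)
The plan is to expand $\EE[C_A]$ as a sum over ordered pairs of jobs, isolate an independence property that factors each term into a mean processing time times a scheduling probability, and then collect terms to recover $\EE[C_{\FTPP}]$ plus the advertised residual.

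Since $A$ is non-preemptive, every job $(j,\ell)$ satisfies $e_j^\ell = b_j^\ell + P_j^\ell$, and $b_j^\ell$ is exactly the total processing time of jobs completed before $(j,\ell)$ starts. Hence
\[
e_j^\ell \;=\; P_j^\ell \;+\; \sum_{(i,k)\neq (j,\ell)} P_i^k\,\ind{e_i^k \leq b_j^\ell}.
\]
Summing over $(j,\ell)$ and relabeling the inner index gives $C_A = \sum_{(i,k)} P_i^k + \sum_{(i,k)\neq(j,\ell)} P_i^k\,\ind{e_i^k \leq b_j^\ell}$.

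The key step — and what I expect to be the main technical obstacle — is showing that for every ordered pair $(i,k)\neq(j,\ell)$,
\[
\EE\!\left[P_i^k\,\ind{e_i^k \leq b_j^\ell}\right] \;=\; \lambda_k\,\EE\!\left[\ind{e_i^k \leq b_j^\ell}\right].
\]
The observation is that in a non-preemptive schedule the indicator $\ind{e_i^k\leq b_j^\ell}$ coincides with the indicator that $(i,k)$ is picked before $(j,\ell)$ in the execution order. Let $\tau$ be the first time $A$ selects one of these two jobs: just before $\tau$ both jobs are still pending, so the history $\mathcal{F}_\tau$ consists only of the processing times of previously-completed jobs, none of which is $(i,k)$ or $(j,\ell)$. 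Consequently the indicator is $\mathcal{F}_\tau$-measurable while $P_i^k$ is independent of $\mathcal{F}_\tau$ and still distributed as $\mathcal{E}(\lambda_k)$, and the factorization follows by the tower property. Carefully formalizing the filtration and stopping time is the only delicate point; everything else is bookkeeping.

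Given the factorization, write $q_{(i,k),(j,\ell)} := \EE[\ind{e_i^k \leq b_j^\ell}]$ and note $q_{(i,k),(j,\ell)} + q_{(j,\ell),(i,k)} = 1$ since exactly one of any two distinct jobs precedes the other. Regrouping the double sum by unordered pairs yields a contribution of $\lambda_k$ from each same-type pair and $\lambda_\ell + (\lambda_k-\lambda_\ell)\,q_{(i,k),(j,\ell)}$ from each cross-type pair with $k>\ell$. Collecting the $\lambda$-only pieces gives $\tfrac{n(n+1)}{2}\sum_k \lambda_k + n^2\sum_\ell (K-\ell)\lambda_\ell$, which a direct computation identifies with $\EE[C_{\FTPP}]$ (in FTPP, $\EE[e_i^k] = n\sum_{\ell<k}\lambda_\ell + i\lambda_k$, and summing over $(i,k)$ gives exactly that expression). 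What remains is precisely $\sum_{k>\ell,\,(i,j)\in[n]^2}(\lambda_k-\lambda_\ell)\,q_{(i,k),(j,\ell)}$, matching the lemma statement.
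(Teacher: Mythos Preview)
Your proof is correct and follows essentially the same route as the paper's: both expand the cost as $\sum_a P_a + \sum_{a\ne b} P_a\,\ind{a\text{ before }b}$, invoke the independence of $P_i^k$ from the scheduling indicator (which the paper states only as ``taking the expectation finishes the proof'' but you justify carefully via the stopping time $\tau$), and then collect terms. The only cosmetic difference is that the paper subtracts $C_{\FTPP}$ pathwise before taking expectations, whereas you take expectations first and identify the constant piece with the closed-form $\EE[C_{\FTPP}]$; the bookkeeping is otherwise identical.
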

This lemma is obtained by computing explicitly the expected cost of algorithm $\FTPP$ and using the fact that the realized length of the jobs conditioned on their type is independent of their start date.

Then the two proofs diverge.

For $\ETCU$, the first step is to prove that condition \eqref{eq:elimination_condition} implies w.h.p that $\lambda_k\leq \lambda_\ell$, which implies that the type in $\mathcal{U}$ with the smallest mean is never eliminated. Then, for the sake of the analysis, the run of the algorithm is divided into phases. In phase number $\ell$, type $\ell$ is the job type remaining with the smallest mean. We then bound the total number of samples before an arm with a large mean is eliminated at phase $\ell$.

The proof for UCB also starts by showing that w.h.p., arm indices lower bound the true means. Then, under the condition that the bounds hold, we upper bound the number of times an arm of type $k\geq \ell$ can be pulled while type $\ell$ is still active.
\end{proof}

The bounds in \Cref{prop:bound-nonpreemptive} hold for any value of the parameters. When the parameter values are far from each other, tighter bounds hold. We give here these tighter bounds for $K=2$ when $\lambda_2\geq 3\lambda_1$. A more general version of this bound is given in the appendix, Propositions \ref{app:fullboundETCU} and \ref{app:bound_UCBU_full}.

\begin{lemma}\label{lem:tighternonpreemptive}
    If $K=2$ and $\lambda_2\geq 3\lambda_1$, the following bounds hold:
    {\small
    \begin{align*}
    \EE[C_{\ETCU}] 
    \leq \EE[C_{\FTPP}] +12\lambda_2n \log(2 n^2K^3)+\frac{2}{n}\EE[C_{\OPT}],
    \end{align*}}
and
    {\small 
    \begin{align*}
    \EE[C_{\UCBU}] 
    \leq \EE[C_{\FTPP}] + \frac{9}{2}\lambda_2 n\ln\rbr{2n^2K^2} 
    + \frac{4}{n}\EE[C_{\OPT}].
    \end{align*}}
\end{lemma}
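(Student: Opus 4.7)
The approach is to revisit the analyses underlying \Cref{prop:bound-nonpreemptive} and leverage the assumption $\lambda_2\geq 3\lambda_1$ to convert the worst-case $\sqrt{n\log(\cdot)}$ factor into a gap-dependent $\log(\cdot)$ factor. The key quantitative observation is that the Bernoulli probability $r := \Pr(P^1<P^2) = \lambda_2/(\lambda_1+\lambda_2)$ satisfies $r-\tfrac12 = (\lambda_2-\lambda_1)/(2(\lambda_1+\lambda_2)) \geq \tfrac14$ precisely when $\lambda_2 \geq 3\lambda_1$, and symmetrically $\lambda_2/(\lambda_2-\lambda_1) \leq \tfrac32$; both facts turn the gap-dependent bandit arguments into pure logarithmic factors.

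First I would invoke the non-preemptive cost decomposition. For $K=2$ it simplifies to
\begin{equation*}
  \EE[C_A] = \EE[C_{\FTPP}] + (\lambda_2-\lambda_1)\sum_{i,j\in[n]} \EE\sbr{\ind{e_i^2 \leq b_j^1}}.
\end{equation*}
Because each type-$2$ job can precede at most $n$ type-$1$ jobs (and no type-$1$ job starts after the algorithm commits exclusively to type 1), the double sum is controlled by $n\cdot\EE[M_2]$, where $M_2$ is the number of type-$2$ jobs completed before some type-$1$ job starts. It therefore suffices to bound $\EE[M_2]$ by a constant multiple of $\log(2n^2K^3)$ for $\ETCU$ and of $\log(2n^2K^2)$ for $\UCBU$, plus a small failure-event contribution.

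For $\ETCU$, on the high-probability event $\mathcal{E}$ that all confidence intervals $[\hat r^m_{1,2}-\delta^m,\,\hat r^m_{1,2}+\delta^m]$ cover the truth (which fails with probability at most $1/n$ by Hoeffding and a union bound, by our choice of $\delta^m$), the elimination rule \eqref{eq:elimination_condition} fires as soon as $2\delta^m < r - \tfrac12$, i.e.\ after $O(\log(2n^2K^3))$ interleaved pulls. Since $\ETCU$ enforces $|m_1-m_2|\leq 1$ throughout exploration, this controls $M_2$ on $\mathcal{E}$. The complementary event contributes at most $\Pr(\mathcal{E}^c)\cdot(\text{worst-case expected cost}) \leq \tfrac{2}{n}\EE[C_{\OPT}]$, using that any non-preemptive algorithm has expected cost within a constant factor of $\EE[C_{\OPT}]$ (e.g.\ the round-robin bound). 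For $\UCBU$ the argument is parallel: on the high-probability event $\mathcal{F}$ that $\underline{\lambda}_k^{m_k}\leq \lambda_k$ for all $k,m_k$, type $2$ is played only while $\underline{\lambda}_2^{m_2}\leq \underline{\lambda}_1^{m_1}\leq \lambda_1$. Standard $\chi^2$ tail bounds give $\underline{\lambda}_2^{m_2}\geq \lambda_2\bigl(1-c\sqrt{\log(2n^2K^2)/m_2}\bigr)$ for a universal $c$, and requiring this to exceed $\lambda_1$ while using $\lambda_2/(\lambda_2-\lambda_1)\leq \tfrac32$ yields $m_2 = O(\log(2n^2K^2))$. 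The failure term is handled exactly as above, the slightly larger factor $4/n$ reflecting the weaker coverage of the $\chi^2$ interval.

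The main obstacle is tuning the numerical constants so that they match the claimed $12$ and $9/2$. For $\ETCU$ this calls for a careful one-sided Hoeffding step together with the exact form of $\delta^m$, plus the finer count $\sum_{i=1}^{m^*}(n-i+1)\leq nm^*$ instead of the loose $nm^*$. For $\UCBU$ it requires a sharp two-sided tail bound on both the percentile $\chi^2_{2m}(1-\tfrac{1}{2n^2K^2})$ and on $\sum_i X_i^2/\lambda_2$ up to the correct lower-order terms. The structural part of the argument---cost decomposition, good-event bound on $M_2$, and failure-event accounting---is otherwise a routine specialization of the proof of \Cref{prop:bound-nonpreemptive} to the regime where the gap is bounded away from zero.
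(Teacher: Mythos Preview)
Your proposal is correct and follows essentially the same route as the paper. The paper does not give a standalone proof of this lemma; it simply states that the bounds are the $K=2$ specialization of the gap-dependent versions of Propositions~\ref{app:fullboundETCU} and~\ref{app:bound_UCBU_full}, which is exactly what your plan amounts to: use \Cref{app:lemma:cost-non-preemptive main paper}, split on the good event, bound the number of type-$2$ completions before type $1$ is exhausted via the gap (using $(\lambda_2+\lambda_1)/(\lambda_2-\lambda_1)\le 2$, equivalently $r-\tfrac12\ge\tfrac14$), and absorb the bad event into a $\tfrac{c}{n}\EE[C_{\OPT}]$ term.

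One small imprecision: your failure-event step reads ``$\Pr(\mathcal{E}^c)\cdot(\text{worst-case expected cost})$''. The paper does not bound the full cost on $\bar{\mathcal{E}}$; it bounds the \emph{excess} term $(\lambda_2-\lambda_1)\sum_{i,j}\EE[\ind{e_i^2<b_j^1}\mid\bar{\mathcal{E}}]\Pr(\bar{\mathcal{E}})\le n^2(\lambda_2-\lambda_1)\Pr(\bar{\mathcal{E}})$ and then invokes $\EE[C_{\OPT}]\ge\tfrac{n^2}{4}\sum_k\lambda_k$. Your ``round-robin bound on any non-preemptive cost'' would also work but yields a looser constant, so if you want to hit the stated $\tfrac{2}{n}$ and $\tfrac{4}{n}$ you should bound the excess, not the total.
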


The bounds of this section seem quite discouraging --  they imply that the existence of even one type of extremely large duration has grave implications on the cost of any algorithm. Unfortunately, for any non-preemptive algorithm, an extra cost w.r.t. $\FTPP$  scaling as $n\lambda_K$ is unavoidable. Indeed, in the beginning, no information on the mean types is available, and any started job will be fully computed, delaying all remaining $nK-1$ jobs (see  \Cref{app: lower bound large diff} for a formal proof).


\subsection{Lower bound}

We end this section by analyzing lower bounds for any non-preemptive scheduling algorithm. In particular, we focus on the dependency of the excessive cost, compared to $\FTPP$, as a function of $n$. We focus on lower bounds for the case of $K=2$, providing a lower bound when $\lambda_1$ and $\lambda_2$ are close to each other and showing that in this case, the excess cost increases as $n\sqrt{n}$.
\begin{proposition}[Dependency in $n$]
\label{prop:dependency in n}
For any $\lambda_1<\lambda_2$, the flow time of any (possibly randomized) non-preemptive algorithm $A$ satisfies:
\begin{align*}
\EE[C_A] &\geq \EE[C_{\FTPP}] + (\lambda_2 - \lambda_1)n^2\frac{\exp(-n\frac{(\lambda_2 - \lambda_1)^2}{\lambda_1 \lambda_2})}{8}.
\end{align*}
In particular, for any $\lambda_2 \leq \lambda_1\rbr{1 + \frac{1}{\sqrt{n}}}$,
\begin{align*}
    \EE[C_A] &\geq \EE[C_{\FTPP}] + (\lambda_1+\lambda_2)n\sqrt{n} \frac{e^{-1/4}}{24}.
\end{align*}
\end{proposition}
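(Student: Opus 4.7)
The plan is to combine the cost decomposition of \Cref{app:lemma:cost-non-preemptive main paper} with a two-hypothesis Le Cam/Bretagnolle--Huber argument that pits the instance $\lambdab=(\lambda_1,\lambda_2)$ against its label-swapped counterpart $\lambdab'=(\lambda_2,\lambda_1)$. Specializing that lemma to $K=2$ reduces the excess cost to a counting problem:
\[
\EE[C_A] - \EE[C_{\FTPP}] \;=\; (\lambda_2 - \lambda_1)\,\EE[I],\quad I \;:=\; \sum_{i,j=1}^{n}\ind{e_i^{2}\le b_j^{1}}.
\]
For every pair of a type-$2$ and a type-$1$ job, non-preemption forces one of them to run entirely before the other, so $I\in\{0,\dots,n^2\}$ and the complementary count of type-$1$-before-type-$2$ pairs equals $n^2 - I$.

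Next I would introduce the change of measure. Let $P$ (resp.\ $P'$) denote the joint law of the $2n$ realized durations under $\lambdab$ (resp.\ $\lambdab'$). The schedule, and in particular $I$, is a measurable function of these durations together with independent algorithmic randomness, so by the data-processing inequality it suffices to upper bound $\mathrm{KL}(P\|P')$. By independence of durations across jobs and the closed-form KL between two exponentials,
\[
\mathrm{KL}(P\|P') \;=\; n\bigl(\mathrm{KL}(\Ecal(\lambda_1)\|\Ecal(\lambda_2)) + \mathrm{KL}(\Ecal(\lambda_2)\|\Ecal(\lambda_1))\bigr) \;=\; n\,\tfrac{(\lambda_2-\lambda_1)^2}{\lambda_1\lambda_2},
\]
which matches the exponent in the claim. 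Applied under $\lambdab'$, \Cref{app:lemma:cost-non-preemptive main paper} gives an excess cost of $(\lambda_2-\lambda_1)\,\EE'[n^2-I]$, since $\FTPP$ now prefers label-$2$ jobs; note that $\EE[C_{\FTPP}]$ is invariant under the relabeling.

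The final step combines a Markov-type bound with Bretagnolle--Huber applied to the threshold event $E=\{I\ge n^2/2\}$:
\[
\EE[I] + \EE'[n^2-I] \;\ge\; \tfrac{n^2}{2}\bigl(P(E)+P'(E^c)\bigr) \;\ge\; \tfrac{n^2}{4}\exp\!\bigl(-\mathrm{KL}(P\|P')\bigr).
\]
Taking the worse of the two instances (both are valid instances with a fast and a slow type of means $\lambda_1$ and $\lambda_2$) yields the first claimed bound, with the factor $1/8$ arising from the worst-of-two halving. For the second inequality, substitute $\lambda_2 = \lambda_1(1 + c/\sqrt{n})$ with $c\in(0,1/2]$ into the first bound: the exponent is then at most $c^2 \le 1/4$ and $(\lambda_2-\lambda_1)n^2 \ge c\lambda_1 n^{3/2}$, which, after using $\lambda_1+\lambda_2 \le 2\lambda_2$ and optimizing $c$, produces the stated $n\sqrt{n}$ rate up to constants.

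The main obstacle is applying Bretagnolle--Huber cleanly to an adaptive, randomized algorithm whose observed information grows over time as jobs complete. This is handled cleanly because a non-preemptive schedule always runs every job to completion, so the full duration vector is eventually revealed, and the data-processing inequality is agnostic to the particular measurable map from durations (and internal randomness) to the final schedule. The rest of the argument is bookkeeping: computing the KL of two exponentials and tracking the constants through the worst-of-two reduction.
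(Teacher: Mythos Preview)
Your proposal is correct and follows essentially the same route as the paper: the cost decomposition of \Cref{app:lemma:cost-non-preemptive main paper}, the two-hypothesis comparison between $(\lambda_1,\lambda_2)$ and its label-swapped version, the product KL computation $n(\lambda_2-\lambda_1)^2/(\lambda_1\lambda_2)$, Bretagnolle--Huber on the threshold event $\{I\ge n^2/2\}$, and the worst-of-two halving that produces the factor $1/8$. The only loose end is the derivation of the second inequality: your inequality $\lambda_1+\lambda_2\le 2\lambda_2$ points the wrong way for a lower bound (the paper instead uses $\lambda_1+\lambda_2\le 3\lambda_1$, which is what turns the $8$ into $24$), and ``optimizing $c$'' is not quite the right framing since the statement fixes $\lambda_2$; but these are bookkeeping issues, not a gap in the argument.
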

\begin{proof}[Proof sketch (full proof in \Cref{app: lower bound small diff})]
We start with the decomposition of \Cref{app:lemma:cost-non-preemptive main paper} and look at the event 
\begin{align*}
    E = \ind{\sum_{(i,j) \in [n]^2} \ind{e_j^2<b_i^1} \geq n^2 / 2}.
\end{align*}
Notice that a since the algorithm is non-preemptive, a job that terminates delays any incomplete job by its full duration. Thus, the event indicates that the `number of times' a job of type 2 delayed any job of type 1 is at least $n^2/2$. Since $\lambda_2>\lambda_1$, this scenario would lead to a large excess cost.

Then, using standard information-theoretic tools, we lower bound the probability that either $E$ occurs in the original scheduling problem or $\bar{E}$ occurs in a problem where the type order has been switched. In both problems, the relevant event causes an excess cost of $\Omega(n^2)$, and substituting the exact probability of this failure case concludes the proof.
\end{proof}


\section{Preemptive Algorithms}\label{sec:preemptive}

In this section, we show how to leverage preemption to get better theoretical and practical performances.

In practice, we allow preemption by discretizing the computation time into small time slots of length $\Delta$. Then, at every iteration, one or multiple job types are selected depending on some algorithm-specific criteria. The current running job(s) of the selected type is allocated computation time $\Delta$ instead of being run to completion. As before, we employ both an explore-then-commit strategy and an optimism-based strategy. In both cases, the only dependence of the resulting algorithm on the discretization size is due to the discretization error (the time between the end of a job and the end of a window), which decreases with the discretization step. We omit that discretization error of at most $\Delta N$ is the bounds.

Note that in practice, any implementation of $\RRR$ proceeds in a similar manner. For instance, in  \cite{motwani1994nonclairvoyant},  the discretization step is assumed much smaller than the length of the jobs. In particular, when we say we run jobs in parallel, in practice, they cyclically run in a $\RRR$ manner with a small discretization step.

\begin{algorithm}[t]
	\caption{Preemptive Algorithms routine}
	\label{algo:preemptive}
\begin{algorithmic}[1]
\STATE \textbf{Init}: type set $\Ucal=[K]$, active jobs $i_k=1,\forall k\in[K]$
\WHILE{$\Ucal$ is not empty}
\STATE Use a type selection subroutine to select a type $k\in\Ucal$
\STATE Run job $i_k$ for $\Delta$ time units
 \IF{$i_k$ was completed}
	\STATE Set $i_k\gets i_k+1$
 \ENDIF
 \IF{All jobs of type $k$ are completed}
	\STATE Remove type $k$ from $\Ucal$
 \ENDIF
 \ENDWHILE
 \end{algorithmic}
\end{algorithm}
\subsection{ETC-RR and UCB-RR}

\paragraph{ETC-RR type selection.} As $\ETCU$, {$\ETCRR$ maintains a set of types $\Acal$ that are candidates for lowest mean size among the set $\Ucal$ of types with at least one remaining jobs. The main difference is that the job type selected is the one in $\Acal$ with the lowest total run-time (not the one with the lowest number of computed jobs).

The statistics needed to construct $\Acal$ are different from the ones used in $\ETCU$.
At a given time, $\beta_{k, \ell}$ is the number of times a job of type $k$ has finished while $\ell$ and $k$ were both active. Moreover, we define
\begin{align*}
\hat{r}_{k, \ell} =  \frac{\beta_{k, \ell}}{\beta_{k, \ell}+\beta_{\ell, k}} \quad \text{and}\quad 
\delta_{k, \ell} = \sqrt{\frac{\log(2n^2K^3)}{2(\beta_{k, \ell}+\beta_{\ell, k})}}.
\end{align*}
The elimination rule is the same as the one of $\ETCU$, using these modified statistics.

 \textit{Reducing the number of algorithm updates}: In practice, both the statistics and the chosen types are not updated at every iteration; active jobs run in parallel (meaning in a round-robin style), and the statistics are updated every time a job terminates. This formulation of the algorithm is the one we implement(see pseudo-code in \Cref{app: preemptive algs}).


 \paragraph{UCB-RR type selection.} 
For each job type $k \in [K]$, we introduce $T_k(t)$, the number of times job type $k$ has been chosen up to iteration $t$, and the random variables $(x_k^s)_{s}$ s.t.:
\begin{align*}
x_k^s = \sum_{t}\ind{a(t) = k, T_k(t)=s \text{ and the job finishes}} .
\end{align*}
It is the indicator that a job of type $k$ is completed when this type is picked for the $s^{th}$ time by the algorithm. We define the empirical means as:
\begin{align*}
\hat{\mu}_k(T):= \frac{1}{T}\sum_{s=1}^{T}x_k^s,
\end{align*}
and define the index for each arm $k$ as
\begin{align*}
u_k(t)= \max \left\{\tilde{\mu} \in[0,1]: d\left(\hat{\mu}_k(T_k(t)), \tilde{\mu}\right) \leq \frac{\log n^2}{T_k(t)}\right\},
\end{align*}
with $d(x,y)$ the Kullback-Leibler divergence between $x$ and $y$. A job type with the largest index is selected.

 \textit{Reducing the number of algorithm updates}: As for $\ETCRR$, the running jobs and statistics are not updated at every iteration. Suppose type $k^*$ is chosen at iteration $t$. If $k^*$ is the last remaining type, it is run until the end. Otherwise, let $\ell$ be the type with the second largest index. We define
 \begin{align*}
\tilde{\mu}^{\gamma}_k(T):= \frac{1}{T+2^{\gamma}}\sum_{s=1}^{T}x_k^s,
\end{align*}
 and 
 \begin{align*}
\tilde{u}^\gamma_k&(t) \\
&\!\!= \max \left\{\tilde{\mu} \in[0,1]: d\left(\tilde{\mu}^{\gamma}_k(T_k(t)), \tilde{\mu}\right) \leq \frac{\log n^2}{T_k(t)+2^\gamma}\right\}.
\end{align*}
This would be the new index of arm $k$, were it to run for additional $2^\gamma$ iterations with no job terminating during this additional iterations.  Then, we set
\[
\gamma^* = \argmax_{\gamma} {\tilde{u}^\gamma_{k^*}(t)\geq u_\ell(t)},
\]
and type $k^*$ is allocated $2^{\gamma^*} $ iterations with no statistics update.

 \subsection{Cost Analysis}
\begin{proposition}
\label{prop:bound-preemptive}
The following bounds hold:
\begin{align*}
	\EE[C_{\ETCRR}]\leq &\EE[C_{\FTPP}]+\frac{12K}{n}\EE[C^{OPT}] \nonumber\\
     & +4n\sqrt{n\log(2n^2K^3)}\sum_{k=1}^{K-1} (K-k)^2\lambda_k.
\end{align*}
and for any $\Delta\leq \frac{\lambda_1}{4}$ and $n\geq \max(20,10\ln(K))$,
\begin{align*}
	\EE[C_{\UCBRR}]&\leq  \EE[C_{\FTPP}]+\frac{12K}{n}\EE[C^{OPT}] \nonumber\\
     & +6n\sqrt{2n\log(2n^2K^2)+2}\sum_{k=1}^{K-1} (K-k)\lambda_k.
	\end{align*}
\end{proposition}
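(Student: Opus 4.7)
The plan is to mirror the non-preemptive analysis of Proposition~\ref{prop:bound-nonpreemptive} while adapting the delay bookkeeping to the round-robin setting. The first step is to establish a preemptive analogue of Lemma~\ref{app:lemma:cost-non-preemptive main paper}, expressing
\[
\EE[C_A] - \EE[C_{\FTPP}] = \sum_{k > \ell}(\lambda_k - \lambda_\ell)\,\EE[D_{k,\ell}],
\]
where $D_{k,\ell}$ counts pairs $(i,j)$ for which the $i$-th type-$k$ job finishes before the $j$-th type-$\ell$ job (i.e., the orderings that $\FTPP$ would have gotten right). The memoryless property of exponentials is crucial here: a preempted type-$\ell$ job retains its $\mathcal{E}(\lambda_\ell)$ residual distribution, so partially-processed jobs can be analyzed as if fresh.

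For $\ETCRR$, the key probabilistic fact is that whenever two types $k,\ell$ are simultaneously in the active set $\Acal$ and processed in parallel, the next completion between them belongs to type $k$ with probability $\lambda_\ell/(\lambda_k+\lambda_\ell)$, which exceeds $1/2$ iff $\lambda_k<\lambda_\ell$. Thus $\hat{r}_{k,\ell}$ is a Hoeffding-concentrated Bernoulli estimator with threshold $1/2$, and $\delta_{k,\ell}$ is calibrated so that a union bound over pairs and sample sizes yields failure probability $O(1/n^2)$. On the good event, no correct type is ever eliminated, and a suboptimal $k$ can coexist with a better $\ell$ only until $\beta_{k,\ell}+\beta_{\ell,k}$ reaches $O(\log(2n^2K^3)/(\tfrac{1}{2}-\tfrac{\lambda_\ell}{\lambda_k+\lambda_\ell})^2)$. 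Bounding $D_{k,\ell}$ by the number of type-$k$ completions before that threshold, trading the gap by $1/\sqrt{n}$ in the worst case, and combining the $(K-k)$ worse types with the $(K-k)$ phases in which $k$ is progressively compared yields the $(K-k)^2\lambda_k n\sqrt{n\log(2n^2K^3)}$ term. The bad event is handled by the crude bound $\EE[C_A \mathbf{1}_{\text{bad}}] \le O(K\,\EE[C_{\OPT}]/n^2)$, which contributes the $\frac{12K}{n}\EE[C_{\OPT}]$ slack after gathering constants.

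For $\UCBRR$, I would recast the $\Delta$-step selection as a Bernoulli best-arm problem: pulling type $k$ is a Bernoulli trial with success probability $p_k = 1 - e^{-\Delta/\lambda_k}$ (job completes within the $\Delta$ window), which is monotonically decreasing in $\lambda_k$, so selecting the largest $u_k(t)$ matches the KL-UCB optimistic rule. Since $u_k(t)$ is the KL-UCB upper confidence bound at level $\log(n^2)/T_k(t)$, the standard KL-UCB good event holds with probability at least $1 - O(1/n^2)$, and on this event the number of $\Delta$-step selections of a suboptimal $k$ while the best remaining $\ell$ is active is bounded by $O(\log(n^2)/\mathrm{KL}(p_k,p_\ell))$. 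The assumption $\Delta\le \lambda_1/4$ bounds all $p_k$ away from $1$, uniformly controlling the KL constants, and $n\ge \max(20,10\ln K)$ ensures the exploration term dominates lower-order additive constants. Translating pulls back into time via $\Delta$ (using $\mathrm{KL}(p_k,p_\ell)\asymp \Delta(\lambda_k-\lambda_\ell)^2/(\lambda_k\lambda_\ell)$) and plugging into the decomposition produces the $(K-k)\lambda_k n\sqrt{n\log(2n^2K^2)}$ rate. The lazy doubling schedule inflates counts by at most a factor of two, since by definition of $\gamma^*$ the next doubling level would already break the optimism inequality $\tilde u_{k^*}^{\gamma^*+1}\ge u_\ell$.

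The main obstacle is the cost decomposition itself. In the non-preemptive case a completed bad job blocks every remaining job by its full duration, whereas under preemption delays accrue fractionally and are spread across many overlapping active intervals, so the analogue of Lemma~\ref{app:lemma:cost-non-preemptive main paper} needs a careful exchange argument against $\FTPP$, combined with the memoryless property, to pair each ``wrong-order'' completion with the specific $\FTPP$ slot it displaces. A secondary technicality is partitioning the run into phases indexed by the best remaining type and verifying that eliminated or under-indexed types remain so across phase boundaries; this is what ultimately produces the $(K-k)$ and $(K-k)^2$ combinatorial factors cleanly from the per-pair analyses.
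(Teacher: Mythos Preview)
Your high-level plan matches the paper's: a preemptive cost-decomposition lemma, then a phase-based Hoeffding argument for $\ETCRR$ and a Bernoulli KL-UCB argument for $\UCBRR$ with $p_k = 1 - e^{-\Delta/\lambda_k}$. However, the decomposition you state at the outset is not correct, and this is precisely the obstacle you flag at the end without resolving.

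You write $\EE[C_A] - \EE[C_{\FTPP}] = \sum_{k>\ell}(\lambda_k-\lambda_\ell)\EE[D_{k,\ell}]$ with $D_{k,\ell}$ counting pairs where job $i$ of type $k$ \emph{finishes} before job $j$ of type $\ell$. Under preemption this equality fails: two jobs can overlap, so the delay $D_{ij}^A$ is neither $0$ nor $P_i$ but some fraction of $P_i$. The paper does not obtain an equality; it proves the \emph{inequality}
\[
\EE[C_A] \;\le\; \EE[C_{\FTPP}] + \sum_{k>\ell}\sum_{i,j}(\lambda_k-\lambda_\ell)\,\PP\bigl(e_j^k < b_i^\ell\bigr) \;+\; (K-1)n\sum_{\ell}\lambda_\ell,
\]
where the indicator is on $\{e_j^k < b_i^\ell\}$ (type-$k$ job finishes before the type-$\ell$ job even \emph{starts}), not on completion order. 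The derivation uses the crude bound $D_{ij}^A \le P_i\ind{b_i < e_j}$, then splits $\ind{b_i^\ell < e_j^k}$ into $\ind{e_i^\ell < b_j^k}$ plus two ``straddling'' indicators $\ind{b_i^\ell < e_j^k \le e_i^\ell}$ and $\ind{b_j^k \le e_i^\ell < e_j^k}$. The type-wise non-preemptive property (at most one job per type running at any instant) is what caps each straddling sum by $1$, producing the additive $(K-1)n\sum_\ell\lambda_\ell$. This extra term is then absorbed into $\frac{12K}{n}\EE[C_{\OPT}]$ via $\EE[C_{\OPT}]\ge \frac{n^2}{4}\sum_\ell\lambda_\ell$; it is \emph{not} a bad-event contribution as you suggest. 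Without this term your bookkeeping would not close, and the memoryless-exchange argument you sketch does not by itself produce an equality.

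A smaller point for $\ETCRR$: once you have the correct decomposition, bounding the number of type-$k$ completions in phase $o$ by $m^*_{o,k}$ directly is delicate because the number of $k$-completions among the first $m$ comparisons is itself random and correlated with the comparison outcomes. The paper handles this by conditioning on $\{k\in\Acal(t),\Fcal_o(t)\}$ and converting the count of $k$-completions into $\frac{\lambda_o}{\lambda_k+\lambda_o}$ times the count of \emph{either}-type completions (via the termination-probability ratio), which is then deterministically capped by $m^*_{o,k}$ on the good event.
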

\begin{proof}[Proof sketch (full proof in \Cref{app:preemptive})]
The above proposition is a combination of Propositions \ref{app:bound-ETC-RR} and \ref{prop:bound-UCBRR}.

Both algorithms belong to the following family of type-wise non-preemptive algorithms.
\begin{definition}
Recall that $b_i^{k}$ and $e_i^{k}$ are the beginning and end dates of the computation of the $i^{th}$ job of type $k$. A \textbf{type-wise non-preemptive algorithm} is an algorithm that computes jobs of the same type one after another, i.e., $\forall i \in [n], \forall k \in [K], e_i^{k}\leq b_{i+1}^{k}$. 
\end{definition}
The following Lemma, proven in \Cref{app:cost decomp preemptive} bounds the expected cost of any type-wise non-preemptive algorithms.  
\begin{lemma}[Cost of type-wise non-preemptive algorithms]\label{app:lemma:cost-algo main paper}
  Any type-wise non-preemptive algorithm $A$ has the following upper bound on its cost:
  \begin{align*}
 \EE[C_{A}] \leq &\EE[C_{\FTPP}] \\
  &+  \sum_{\substack{\small(\ell,k) \in [K^2], k>\ell\\(i,j) \in [n]^2}}(\lambda_k-\lambda_\ell)\EE\sbr{\ind{e_j^k< b_i^\ell} }\\
 &+(K-1)n\sum_{k=1}^K\lambda_{k}.
  \end{align*}  
\end{lemma}

\begin{figure}[t]
    \centering
    \includegraphics[width=0.45\textwidth]{./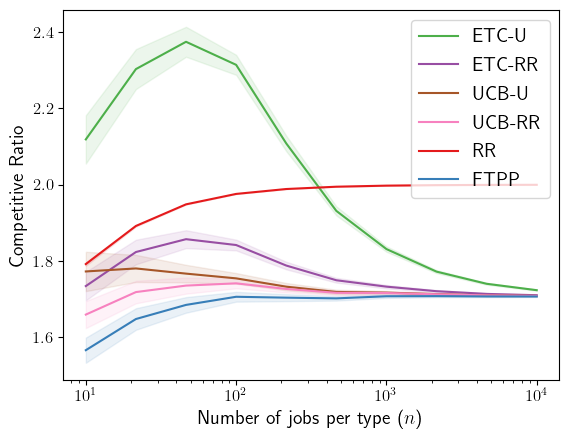} 
    \caption{CR of all algorithms with varying number of jobs, $\lambda_1=1$, $\lambda_2=0.25$, averaged over $400$ seeds.}
    \label{fig:cr linear scale}
\end{figure}

The proof of this Lemma again involves computing explicitly the cost of $\FTPP$ and using that the realization of a job length is independent of its start date. A first upper bound is obtained by noting that a job started before another delays the former in expectation by at most its expected length. The second element to the proof is the fact that at every job termination, at most a job of each other type is currently active. This observation leads to the upper bound on the additional cost of preemption and the last term of the expression of the Lemma. 
Note that this last term implies that our upper bound will include a term scaling as $n\lambda_K$, which would indicate that preemptive algorithms have an extra learning cost scaling as the highest mean type. However, we strongly believe this to be an artefact of the analysis.

Given the decomposition, the two proofs diverge.

The analysis of $\ETCRR$ is split into phases, as the analysis of $\ETCU$. However,  the bound on the number of `bad' jobs computed in each phase requires more care because of independence arguments. Specifically, the upper bound is derived from concentration bounds on the computed statistics, and an additional bound on the number of successful jobs of each type when two types are run in parallel. The details on how to deal with those two non-independent events can be found in \Cref{app:ETC-RR-k-types}.

For $\UCBRR$, the first step is to distinguish two types of `failures' of the index. In the first failure case, the index deviates below the true mean. We show that this happens with probability $O(1/n^2)$ (\Cref{app:boundtaul}), independently of $\Delta$. The second type of failure is when the index of a sub-optimal arm is much larger than its true mean. Here, we show that the upper bound on the number of iterations where this happens does diverge as $\Delta$ goes to zero. However,  the algorithm only incurs a cost on the `bad pull' of a type when the selected job terminates. The probability of job termination decreases as $\Delta$ decreases, which compensates for the rise in the upper bound (\cref{eq:bad_pull_diverge}).

\end{proof}

\section{Experiments}

\begin{figure}[t]
    \centering
    \includegraphics[width=0.45\textwidth]{./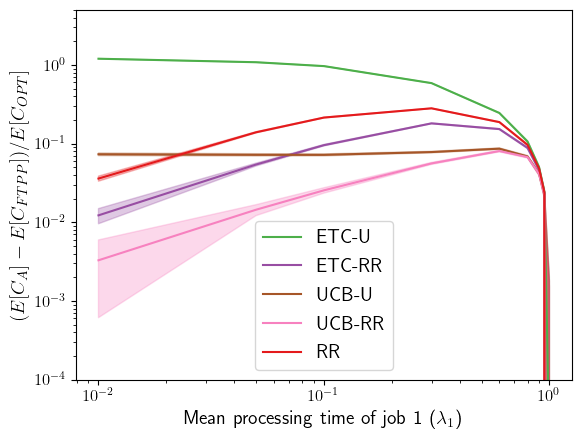} 
    \caption{Normalized excess cost of all algorithms w.r.t. $\FTPP$ with varying value of $\lambda_1$, for $\lambda_2=1$ and $n=50$, averaged over $5,000$ seeds.}
    \label{fig:cr-diff log scale}
\end{figure}

In this section, we design synthetic experiments to compare $\ETCU$, $\ETCRR$, $\UCBRR$,$\UCBU$, $\RRR$ and $\FTPP$. All code is written in Python. We use matplotlib~\cite{hunter2007matplotlib} for plotting, and numpy~\cite{harris2020array} for array manipulations. The above libraries use open-source licenses. Computations are run on a laptop.\footnote{The code to reproduce experiemnts is available at \url{https://github.com/hugorichard/ml4a-scheduling}.} 

The first experiment plots the CR of each algorithm for two types of jobs and fixed values of $\lambda_1,\lambda_2$ as $n$ varies (see \Cref{fig:cr linear scale}). 
Even though all our suggested algorithms have the same asymptotic performance, their non-asymptotic behavior drastically varies. As predicted by theory, the preemptive versions of the algorithm consistently outperform the non-preemptive ones.

In the second experiment, $n=50$ and $\lambda_2=1$ are fixed, while $\lambda_1$ varies in $(0,1)$ (see \Cref{fig:cr-diff log scale}). To be able to discern performance gaps when $\lambda_1$ is small, we plot the difference between the CR of different algorithms and $\FTPP$ at a logarithmic scale. 
Here, for small values of $\lambda$, both preemptive methods outperform the non-preemptive ones. This corresponds with the improvement in the dominant error term of the preemptive cost upper bounds, as a function of $\lambda_2$.

\subsection{Discussion}

\paragraph{Preemptive vs. Non-Preemptive} The competitive ratio of all algorithms is asymptotically the one of $\FTPP$. Indeed, it always holds that $\EE[C_{\OPT}]\geq (\lambda_1+\lambda_2) \frac{n^2}{4}$ (\Cref{eq: optimal cost lower bound}), so by Propositions \ref{prop:bound-preemptive} and \ref{prop:bound-nonpreemptive}, for any algorithm $A$ among $\ETCU$, $\ETCRR$, $\UCBU$ and $\UCBRR$:
\begin{align*}
&\text{CR}_{A} = \text{CR}_{\FTPP}+ \mathcal{O}\left(\sqrt{\frac{\log(n)}{n}}\right).
\end{align*}
On the one hand, the leading term in the cost is the same for all algorithms. On the other hand, the error term can be much smaller in the case of preemptive algorithms.

To illustrate this claim, let us consider the case where there are two types of jobs of expected sizes $\lambda_1$ and $\lambda_2$, respectively. 
Instantiating the bounds of Propositions \ref{prop:bound-nonpreemptive} and \ref{prop:bound-preemptive} to this setting, we get:
\begin{align}
  \EE[C_{\ETCU}] &\leq \EE[C_{\FTPP}] + n  (\lambda_1 + \lambda_2) \sqrt{8n \log(2 n^2 K^3)} \nonumber\\
  &\quad+ \frac{8}{n}\EE[C_{\OPT}], \label{eq:boundETCULambadependency}
\end{align}
and
\begin{align}
	\EE[C_{\ETCRR}]\leq& \EE[C_{\FTPP}] + 2n  \lambda_1(\sqrt{4n \log(2 n^2 K^3)}+1)\nonumber\\
    &+\frac{16}{n}\EE[C_{\OPT}].\label{eq:boundETCRRLambadependency}
\end{align}
If $\lambda_2 \gg \lambda_1$ the bound in Equation \eqref{eq:boundETCULambadependency} is much larger than the bound in Equation \eqref{eq:boundETCRRLambadependency}, which is consistent with what we observe in \Cref{fig:cr-diff log scale}. In particular, one can observe that for small $\lambda_1$, non-preemptive algorithms converge to a strictly positive error (due to the unavoidable dependence in $\lambda_2=1$), while the error of the preemptive algorithms diminishes. This empirically supports our claim that the $n\lambda_K$-dependence, as appears in the preemptive cost decomposition of \Cref{app:lemma:cost-algo main paper}, is only due to a proof artefact.


\paragraph{Optimism-based vs. explore-then-commit. } 

In the simulations, we see that optimism-based algorithms perform much better than their ETC counterparts. In traditional bandit settings, it is well known that the regret of ETC strategies is a constant-times larger than that of optimism-based strategies. Here, we believe that in addition to that,  a  second phenomenon, not reflected in the analysis, renders the optimism-based strategies better than the other ones. Because of the structure of the cost, a pull of a `bad job' at the beginning is much more expensive than the same pull done later in the interaction (as it delays more jobs). Optimism-based strategies explore continuously as they run, whereas ETC strategies have all the exploration at the beginning, when it is more expensive. Again, this phenomenon stands in contrast with traditional bandits, where only the number of `bad pulls' matter, and not their position.

\section{Conclusion and Future Work}

We designed and analyzed a family of algorithms for static scheduling on a single machine in the presence of job types. The special cost structure of this problem differs from that of traditional bandit problems, and early mistakes carry much more weight than late ones, as they delay more jobs. This modified cost directly impacts the performance of algorithms; although all suggested algorithms asymptotically have the same CR as the optimal algorithm that knows job type sizes ($\FTPP$), their non-asymptotic performances differ.

When preemption is allowed, algorithms that explore job types with a strategy inspired by the worst-case optimal deterministic algorithm $\RRR$ have a clear advantage over non-preemptive learning algorithms. Thus, because of the cost structure, the performance is impacted not only by the number of exploratory steps but also by the nature of the exploratory steps. 

Due to the ubiquitousness of scheduling problems, we believe that our results could be extended to many other variants of this setting. In particular, it would be interesting to take our algorithmic principles and test them on real-world scheduling problems. Whether our current assumption on the exponential distribution of job sizes can be removed is an exciting direction for future work. We believe that many of the proofs for the non-preemptive case can be extended to other well-behaved distributions. However, in the preemptive case, our proofs do heavily rely on the properties of the exponential distribution, mainly the memoryless increments property. Without it, we can no longer average over different sections of a job evaluation to estimate its expected duration. Nonetheless, we believe that the results are still generalizable although proving the bounds would be technically much harder.

Moreover, we believe that elements from our works can be taken to other online learning settings outside the scope of scheduling. Specifically, we believe that the notion of types serves as a reasonable approximation that allows the integration of learning to many online problems. We also think that the study of cost functions that are sensitive the early exploration is of great interest.


\section*{Acknowledgements}
\begin{wrapfigure}{r}{0.1\linewidth}
\vspace{-.25cm}
\hspace{-.35cm}
\includegraphics[width=1.2\linewidth]{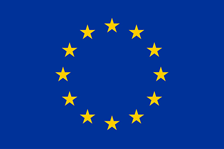}
\vspace{-.2cm}
\end{wrapfigure}
This project has received funding from the European Union’s Horizon 2020 research and innovation programme under the Marie Skłodowska-Curie grant agreement No 101034255. 

Nadav Merlis is partially supported by the Viterbi Fellowship, Technion. 

Mathieu Molina is supported by the French National Research Agency (ANR) through grant ANR-20-CE23-0007.

Vianney Perchet acknowledges support from the French National Research Agency (ANR) under grant number (ANR-19-CE23-0026 as well as the support grant, as well as from the grant “Investissements d’Avenir” (LabEx Ecodec/ANR-11-LABX-0047).
\bibliographystyle{icml2023}
\bibliography{biblio}


\clearpage
\appendix
\onecolumn
\section{Benchmark FTPP}

In this section, for all jobs $i \in [N]$, we call $P_i$ the job size of job $i$. Jobs are ordered in increasing order of their expected size (Notation $P_i$ and $P_{i \mod n}^{\lceil i/n\rceil}$ denote the same job). For any algorithm $A$, we note $T_{ij}^A$ for each $(i,j) \in [N]^2$ the amount of time job $i$ and job $j$ delay each other under algorithm $A$.

\subsection{Cost of OPT and FTPP, CR of $\RRR$}

Let us express the expected cost of any algorithm in terms of $T_{i,j}^A$ for $k \in [K]$:
    \begin{equation}\label{eq:costanyalgo}
		 \EE[C_A]=\mathbb{E}\left[\sum_{i=1}^{N} P_i + \sum_{i=1}^{N} \sum_{j=i+1}^{N} T_{i,j}^A\right] 
   \end{equation}

\begin{lemma}[Cost of OPT]
\label{lemma:cost opt}
The cost of OPT is given by
\[
\EE[C_{\OPT}] = n^2 \left(\sum_{\ell=1}^K \frac{1}{4} \lambda_\ell + \sum_{\ell=1}^K \sum_{k=\ell+1}^K \frac{\lambda_k \lambda_\ell}{\lambda_k + \lambda_\ell}\right) +  \frac{3n}{4} \sum_{\ell=1}^K \lambda_\ell.
\]
\end{lemma}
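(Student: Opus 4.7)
The plan is to apply the pairwise decomposition \eqref{eq:costanyalgo} of the flow time and evaluate both the self-contribution $\sum_i P_i$ and the pairwise delay terms $T_{ij}^{\OPT}$ explicitly using properties of the exponential distribution.

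\textbf{Step 1: Reduce to a min over pairs.} Because $\OPT$ has full access to the realized sizes $P_1,\ldots,P_N$ and schedules them in increasing order, it is a non-preemptive Shortest Processing Time policy. For any pair $(i,j)$, the shorter job is executed first and delays the longer job by exactly its full realized length, so
\[
T_{ij}^{\OPT} = \min(P_i, P_j).
\]

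\textbf{Step 2: Expected pairwise delay.} For two independent exponentials with means $\mu_1$ and $\mu_2$, the minimum is exponential with mean $\mu_1\mu_2/(\mu_1+\mu_2)$. Hence, when jobs $i$ and $j$ are of the same type $\ell$, $\EE[\min(P_i,P_j)] = \lambda_\ell/2$; when they are of distinct types $\ell$ and $k$, $\EE[\min(P_i,P_j)] = \lambda_\ell\lambda_k/(\lambda_\ell+\lambda_k)$.

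\textbf{Step 3: Counting and assembling.} There are $\binom{n}{2}$ unordered pairs within each type and $n^2$ unordered cross-type pairs for each $\ell<k$. Combined with $\sum_i\EE[P_i] = n\sum_\ell \lambda_\ell$, plugging into \eqref{eq:costanyalgo} gives
\[
\EE[C_{\OPT}] = n\sum_\ell \lambda_\ell + \sum_\ell \frac{n(n-1)}{2}\cdot\frac{\lambda_\ell}{2} + n^2\sum_{\ell<k}\frac{\lambda_\ell\lambda_k}{\lambda_\ell+\lambda_k}.
\]
Grouping the $\sum_\ell \lambda_\ell$ coefficient as $n+\tfrac{n(n-1)}{4} = \tfrac{n^2}{4}+\tfrac{3n}{4}$ yields the stated expression.

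The proof is essentially a direct calculation; the only subtlety is Step~1, where one must justify that $\OPT$ is non-preemptive (so that the earlier-scheduled job delays the later one by its full length). This follows because, with all realizations known, SPT and SRPT coincide and are optimal for the sum of completion times. The remaining algebra is elementary, so I do not expect any real obstacle.
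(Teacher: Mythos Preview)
Your proposal is correct and follows essentially the same approach as the paper: apply the pairwise decomposition \eqref{eq:costanyalgo}, use $T_{ij}^{\OPT}=\min(P_i,P_j)$ since $\OPT$ schedules by increasing realized size, evaluate $\EE[\min(P_i,P_j)]$ via the exponential-minimum formula, and count within-type and cross-type pairs. The paper's proof is slightly terser (it does not spell out the SPT/SRPT remark), but the structure and computations are identical.
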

Note that this lemma implies the following inequality, which will be used in other proofs:
\begin{equation}\label{eq: optimal cost lower bound}
    \EE[C_{\OPT}] \geq \frac{n^2}{4} \left(\sum_{\ell=1}^K  \lambda_\ell\right)
\end{equation}
\begin{proof}
    We apply \Cref{eq:costanyalgo} with $A = \OPT$. In that case, for any to jobs $(i,j)\in [N]$, $i\neq j$, as the shortest job is scheduled first, we have  
    \[
    \EE[T_{ij}^A] = \EE[\min(P_i,P_j)].
    \]
    So $\EE[T_{ij}^A] = \frac{\lambda_k\lambda_\ell}{\lambda_k+\lambda_\ell}$ if job $i$ is of type $k$ and job $j$ is of type $\ell$.
	\begin{align}
		\EE[C_{\OPT}]&=\mathbb{E}\left[\sum_{i=1}^{N} P_i + \sum_{i=1}^{N} \sum_{j=i+1}^{N} T_{i,j}^A\right] \nonumber\\
  &=\sum_{\ell=1}^K\sum_{i=1}^n\left(\lambda_\ell+ \sum_{j=i+1}^n\frac{\lambda_\ell}{2}+\sum_{k=\ell+1}^K\sum_{j=1}^n\frac{\lambda_k\lambda_\ell}{\lambda_k+\lambda_\ell}\right)\nonumber\\
  &=\sum_{\ell=1}^K \left(n \lambda_\ell+\frac{n(n-1)}{2}\frac{\lambda_\ell}{2}+n^2\sum_{k=\ell+1}^K\frac{\lambda_k\lambda_\ell}{\lambda_k+\lambda_\ell}\right)\nonumber\\
  &=n^2 \left(\sum_{\ell=1}^K \frac{1}{4} \lambda_\ell + \sum_{\ell=1}^K \sum_{k=\ell+1}^K \frac{\lambda_k \lambda_\ell}{\lambda_k + \lambda_\ell}\right) +  \frac{3n}{4} \sum_{k=1}^K \lambda_k.\nonumber
  \label{eq:cost_opt}
\end{align}

\end{proof}

\begin{lemma}[Cost of FTPP]
\label{lemma:cost FTPP}
The cost of FTPP is given by:
\[
\EE[C_{\FTPP}] = n^2 \left( \frac12 \sum_{\ell=1}^K \lambda_\ell + \sum_{\ell=1}^K (K - \ell) \lambda_\ell\right) + n\left(\frac{\sum_{\ell=1}^K \lambda_\ell }{2}\right)
\]
\end{lemma}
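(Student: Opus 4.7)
The plan is to exploit \Cref{eq:costanyalgo} exactly as was done for $\OPT$ in \Cref{lemma:cost opt}. The only thing that is special about $\FTPP$ is the particularly simple form of the pairwise delay terms $T^{\FTPP}_{i,j}$: since $\FTPP$ runs the jobs non-preemptively in increasing order of expected size, for any two jobs $i<j$ (in the $\FTPP$ ordering) job $i$ is completed entirely before job $j$ starts, so $T^{\FTPP}_{i,j}=P_i$ deterministically. Plugging this into \eqref{eq:costanyalgo} gives
\begin{equation*}
\EE[C_{\FTPP}]=\sum_{i=1}^{N}\EE[P_i]+\sum_{i=1}^{N}\sum_{j=i+1}^{N}\EE[P_i]=\sum_{i=1}^{N}(N-i+1)\,\EE[P_i],
\end{equation*}
which is just the standard identity for the flow time of a non-preemptive in-order schedule.

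Next I would group the sum by type. Under the $\FTPP$ ordering, the jobs of type $\ell$ occupy positions $(\ell-1)n+1,\ldots,\ell n$, and each such job has expected size $\lambda_\ell$. Thus
\begin{equation*}
\EE[C_{\FTPP}]=\sum_{\ell=1}^{K}\lambda_\ell\sum_{i=1}^{n}\bigl(Kn-(\ell-1)n-i+1\bigr)=\sum_{\ell=1}^{K}\lambda_\ell\left(n^2(K-\ell+1)-\frac{n(n+1)}{2}+n\right).
\end{equation*}
Expanding $(K-\ell+1)=(K-\ell)+1$ and collecting the $n^2$ and $n$ coefficients yields
\begin{equation*}
\EE[C_{\FTPP}]=n^2\left(\frac12\sum_{\ell=1}^{K}\lambda_\ell+\sum_{\ell=1}^{K}(K-\ell)\lambda_\ell\right)+n\cdot\frac{\sum_{\ell=1}^{K}\lambda_\ell}{2},
\end{equation*}
which is the claimed expression.

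There is no real obstacle here beyond bookkeeping; the only point that requires a line of justification is the identity $T^{\FTPP}_{i,j}=P_i$ for $i<j$, which is immediate from the definition of $T^{A}_{i,j}$ as the mutual delay combined with the fact that $\FTPP$ is non-preemptive and schedules $i$ strictly before $j$. Everything else is an arithmetic reduction of $\sum_{i=1}^{n}\bigl((K-\ell+1)n-i+1\bigr)$, exactly analogous to the corresponding step in the proof of \Cref{lemma:cost opt}.
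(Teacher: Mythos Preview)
Your proof is correct and follows essentially the same approach as the paper: both start from \Cref{eq:costanyalgo}, identify the pairwise delay under $\FTPP$ (you use the deterministic relation $T^{\FTPP}_{i,j}=P_i$ for $i<j$, the paper uses the equivalent $\EE[T^{\FTPP}_{i,j}]=\min(\lambda_k,\lambda_\ell)$), and then reduce the resulting sum by grouping over types. The only difference is cosmetic---you first collapse to $\sum_i (N-i+1)\EE[P_i]$ before grouping, whereas the paper groups by type pairs directly---but the computations are otherwise identical.
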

\begin{proof}
We apply \Cref{eq:costanyalgo} with $A = \OPT$, so that $\EE[T_{ij}^A] = \min(\lambda_k, \lambda_\ell)$ if job $i$ is of type $k$ and job $j$ is of type $\ell$
    \begin{align}
		\EE[C_{\FTPP}]&=\mathbb{E}\left[\sum_{i=1}^{N} P_i + \sum_{i=1}^{N} \sum_{j=i+1}^{N} T_{i,j}^A\right] \nonumber\\
  &=\sum_{\ell=1}^K\sum_{i=1}^n\left(\lambda_\ell+ \sum_{j=i+1}^n\lambda_\ell+\sum_{k=\ell+1}^K\sum_{j=1}^n\lambda_\ell\right)\nonumber\\
  &=\sum_{\ell=1}^K \left(n \lambda_\ell+\frac{n(n-1)}{2}\lambda_\ell+n^2\sum_{k=\ell+1}^K\lambda_\ell\right)\nonumber\\
  &=n^2 \left( \frac12 \sum_{\ell=1}^K \lambda_\ell + \sum_{\ell=1}^K (K - \ell) \lambda_\ell\right) + n\left(\frac{\sum_{\ell=1}^K \lambda_\ell }{2}\right).\nonumber
  \label{eq:cost_opt}
\end{align}
\end{proof}

\begin{lemma}[CR of $\RRR$]
\label{lemma:CR RR}
For any For any $\lambdab$, the following lower bound holds:
\[
\frac{\EE[C_{\RRR}]}{\EE[C_{\OPT}]} \geq 2-\frac{4}{n+3}.
\]
\end{lemma}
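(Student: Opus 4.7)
The plan is to compute $\EE[C_{\RRR}]$ and $\EE[C_{\OPT}]$ via the pairwise decomposition in \eqref{eq:costanyalgo}, and then reduce the statement to a simple comparison of $\EE[\sum_i P_i]$ with $\EE[C_{\OPT}]$. The key identity to establish is that under $\RRR$ the pairwise delay is exactly twice that under $\OPT$, after which the claim becomes an algebraic manipulation of \Cref{lemma:cost opt}.

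The first step is to derive the pathwise pairwise-delay formula under $\RRR$. Reindex the jobs by their realizations in increasing order, $P_{(1)}\leq\cdots\leq P_{(N)}$ with $N=nK$. A direct induction on the number of surviving jobs (Round-Robin runs all survivors in parallel at equal speed, so between the $(j-1)$-th and $j$-th completion the $N-j+1$ survivors each progress at rate $1/(N-j+1)$) yields the pathwise formula
\[
C_{(i)}^{\RRR} \;=\; \sum_{j<i} P_{(j)} + (N-i+1)\,P_{(i)} \;=\; P_{(i)} + \sum_{j\neq i}\min(P_{(i)},P_{(j)}).
\]
Reading off the pairwise delay in \eqref{eq:costanyalgo}, this gives $T_{ij}^{\RRR}=2\min(P_i,P_j)$ almost surely, regardless of the original labelling.

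Combining with $T_{ij}^{\OPT}=\min(P_i,P_j)$ (already used in the proof of \Cref{lemma:cost opt}) and taking expectations in \eqref{eq:costanyalgo} yields the clean identity
\[
\EE[C_{\RRR}] \;=\; 2\,\EE[C_{\OPT}] \;-\; \EE\!\Big[\sum_{i=1}^{N} P_i\Big],
\]
so $\EE[C_{\RRR}]/\EE[C_{\OPT}] = 2 - \EE[\sum_i P_i]/\EE[C_{\OPT}]$. It then suffices to show $\EE[\sum_i P_i]/\EE[C_{\OPT}] \leq 4/(n+3)$. Since $\EE[\sum_i P_i]=n\sum_k\lambda_k$, and dropping the nonnegative cross terms $n^2\sum_{\ell<k}\lambda_k\lambda_\ell/(\lambda_k+\lambda_\ell)$ in \Cref{lemma:cost opt} gives $\EE[C_{\OPT}]\geq \tfrac{n(n+3)}{4}\sum_k\lambda_k$, the bound follows immediately. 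As a sanity check, the inequality is tight when $K=1$, in which case $2-4/(n+3)$ is exactly the CR of $\RRR$.

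The only mild obstacle is justifying the pathwise formula for $C_{(i)}^{\RRR}$ in the first step, since it requires careful bookkeeping across successive completion phases; once the identity $T_{ij}^{\RRR}=2\min(P_i,P_j)$ is in hand, the rest is a short algebraic manipulation of the closed form provided by \Cref{lemma:cost opt}.
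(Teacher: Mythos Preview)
Your proposal is correct and follows essentially the same route as the paper: both arguments use $T_{ij}^{\RRR}=2\min(P_i,P_j)$ together with \eqref{eq:costanyalgo} to obtain $\EE[C_{\RRR}]/\EE[C_{\OPT}]=2-\EE[\sum_i P_i]/\EE[C_{\OPT}]$, and then drop the nonnegative cross terms in \Cref{lemma:cost opt} to get $\EE[C_{\OPT}]\ge \tfrac{n(n+3)}{4}\sum_k\lambda_k$. The only difference is that you provide a more explicit pathwise derivation of $T_{ij}^{\RRR}=2\min(P_i,P_j)$ via the order-statistics formula for $C_{(i)}^{\RRR}$, whereas the paper simply asserts this identity from the description of Round-Robin.
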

\begin{proof}
    For $\RRR$, any two jobs are run in parallel until one terminates, thus:
     \[
    \EE[T_{ij}^{\RRR}] = 2\EE[\min(P_i,P_j)].
    \]
    Thus, by equation \ref{eq:costanyalgo}: 
    \[
    \EE[C_{\RRR}] = \sum_{i=1}^N\EE[P_i]+\sum_{j=1}^N 2\EE[\min(P_i,P_j)].
    \]
    On the other hand:
    \[
    \EE[C_{\OPT}] = \sum_{i=1}^N\EE[P_i]+\sum_{j=1}^N \EE[\min(P_i,P_j)].
    \]
    Thus:
    \begin{align*}
      \frac{\EE[C_{\RRR}]}{\EE[C_{\OPT}]}&= 2-\frac{\sum_{i=1}^N P_i}{\EE[C_{\OPT}]}\\
      &=2-\frac{n\sum_{\ell=1}^K \lambda_\ell}{n^2 \left(\sum_{\ell=1}^K \frac{1}{4} \lambda_\ell + \sum_{\ell=1}^K \sum_{k=\ell+1}^K \frac{\lambda_k \lambda_\ell}{\lambda_k + \lambda_\ell}\right) +  \frac{3n}{4} \sum_{\ell=1}^K \lambda_\ell}\\
      &\geq 2-\frac{n\sum_{\ell=1}^K \lambda_\ell}{n^2 \left(\sum_{\ell=1}^K \frac{1}{4} \lambda_\ell \right) +  \frac{3n}{4} \sum_{\ell=1}^K \lambda_\ell}\\
      &=2-\frac{4}{n+3}.
    \end{align*}
With the second line obtained by Lemma \ref{lemma:cost opt}.
\end{proof}
\subsection{CR of FTPP}

\subsubsection{CR with $K$ types}
\label{app:CR-FTPP-Ktypes}

\begin{proposition}[Upper bound on the CR in function of $\lambdab$]
\label{prop:CR-FTPP-Ktypes}
The CR of FTPP with $K$ types of jobs with $n$ jobs per type satisfies:
\[
\frac{\EE[C_{\FTPP}]}{\EE[C_{\OPT}]} \leq 2 - f_K(\lambdab)
\]
where $f_K(\lambdab) = \frac{ 2 \sum_{\ell=1}^K \sum_{k=\ell+1}^K \frac{\lambda_k \lambda_\ell}{\lambda_k + \lambda_\ell}- \sum_{\ell=1}^K (K - \ell) \lambda_\ell}{\sum_{\ell=1}^K \frac{1}{4} \lambda_\ell + \sum_{\ell=1}^K \sum_{k=\ell+1}^K \frac{\lambda_k \lambda_\ell}{\lambda_k + \lambda_\ell}}$
\end{proposition}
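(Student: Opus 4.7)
The plan is to plug the closed-form expressions from \Cref{lemma:cost opt} and \Cref{lemma:cost FTPP} directly into the ratio $\EE[C_{\FTPP}]/\EE[C_{\OPT}]$ and show that the bound $2 - f_K(\lambdab)$ is exactly the ratio of the $n^2$-order terms in these two quantities. The inequality CR $\leq 2 - f_K(\lambdab)$ then boils down to a simple comparison between the sub-leading ($n$-order) terms, which can be handled with an elementary bound on the harmonic-mean expression $\lambda_k\lambda_\ell/(\lambda_k+\lambda_\ell)$.

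More concretely, write $\EE[C_{\OPT}] = n^2 C + n D$ and $\EE[C_{\FTPP}] = n^2 P + n Q$, where
\[
C = \tfrac14\textstyle\sum_\ell \lambda_\ell + \sum_{\ell<k}\frac{\lambda_k\lambda_\ell}{\lambda_k+\lambda_\ell},\quad D = \tfrac34\textstyle\sum_\ell \lambda_\ell,\quad P = \tfrac12\textstyle\sum_\ell\lambda_\ell + \sum_\ell (K-\ell)\lambda_\ell,\quad Q = \tfrac12\textstyle\sum_\ell\lambda_\ell.
\]
A direct algebraic check shows that $2C - A = P$, where $A$ is the numerator of $f_K(\lambdab)$. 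Hence $2 - f_K(\lambdab) = P/C$. By cross-multiplication, the target inequality $(n^2 P + nQ)/(n^2 C + nD) \leq P/C$ simplifies to $QC \leq PD$, i.e.\ $Q/D \leq P/C$. Since $Q/D = 2/3$, it suffices to prove $P/C \geq 2/3$, i.e.\ $3P \geq 2C$.

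For the final step, I use the elementary bound $\lambda_k\lambda_\ell/(\lambda_k+\lambda_\ell) \leq \min(\lambda_k,\lambda_\ell) = \lambda_\ell$ whenever $\ell < k$ (valid because the $\lambda_\ell$ are sorted in increasing order). Summing over $\ell<k$ gives $\sum_{\ell<k}\frac{\lambda_k\lambda_\ell}{\lambda_k+\lambda_\ell} \leq \sum_\ell (K-\ell)\lambda_\ell$, and substituting this into $3P - 2C$ yields
\[
3P - 2C \geq \textstyle\sum_\ell\lambda_\ell + \sum_\ell (K-\ell)\lambda_\ell \geq 0,
\]
which closes the proof.

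The only mildly non-obvious step is recognizing that $2 - f_K(\lambdab)$ is exactly the ratio of the leading $n^2$-terms of $\EE[C_{\FTPP}]$ and $\EE[C_{\OPT}]$; once that observation is in hand, the rest is routine arithmetic. No obstacles are expected beyond keeping track of the bookkeeping between the $n^2$ and $n$ contributions.
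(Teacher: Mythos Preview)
Your proof is correct and follows essentially the same route as the paper's. Both compute $\EE[C_{\OPT}]$ and $\EE[C_{\FTPP}]$ via \Cref{lemma:cost opt} and \Cref{lemma:cost FTPP}, identify $2-f_K(\lambdab)$ with the ratio $P/C$ of the $n^2$-order coefficients, and then reduce the inequality to the harmonic-mean bound $\frac{\lambda_k\lambda_\ell}{\lambda_k+\lambda_\ell}\le\lambda_\ell$ for $\ell<k$. The only cosmetic difference is that the paper packages the last step as the elementary fact ``if $a>c>0$ and $d>b>0$ then $(a+b)/(c+d)\le a/c$'' (applied with $a=n^2P$, $b=nQ$, $c=n^2C$, $d=nD$), which requires the slightly stronger $P\ge C$, whereas you cross-multiply directly and only need $3P\ge 2C$; both follow from the same harmonic-mean estimate.
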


Note that instantiating this bound with $K=2$ types of jobs, $n$ jobs per type, $\lambda_1 = 1$ and $\lambda_2 = \lambda > 1$, we get  \Cref{prop:CR-FTPP-2types}:
\[
\frac{\EE[C_{\FTPP}]}{\EE[C_{\OPT}]} \leq 2 - 4\frac{\lambda-1}{{(1 + \lambda)^2 + 4 \lambda}}.
\]
\begin{proof}[Proof of \Cref{prop:CR-FTPP-Ktypes}]

    Compute $\EE[C_{\OPT}]$ using \Cref{lemma:cost opt}, $\EE[C_{\FTPP}]$ using \Cref{lemma:cost FTPP}.
    
	The competitive ratio of $\FTPP$ is given by:
    \begin{equation}
      \label{eq:cr:ftpp}
	\CR_{\FTPP} = \frac{\EE[C_{\FTPP}]}{\EE[C_{\OPT}]} =\frac{n^2 \left( \frac12 \sum_{\ell=1}^K \lambda_\ell + \sum_{\ell=1}^K (K - \ell) \lambda_\ell\right) + n(\frac{1}{2}\sum_{\ell=1}^K \lambda_\ell )}{n^2 \left(\sum_{\ell=1}^K \frac{1}{4} \lambda_\ell + \sum_{\ell=1}^K \sum_{k=\ell+1}^K \frac{\lambda_k \lambda_\ell}{\lambda_k + \lambda_\ell}\right) + n( \frac{3}{4} \sum_{\ell=1}^K \lambda_\ell)}
    \end{equation}

For any values $a,b,c,d \in \RR_+^4$, 
\begin{equation}\label{eq:boundfrac}
  \text{if $a>c>0$ and $d>b>0$, then } \frac{a+b}{c+d}\leq \frac{a}{c}. 
\end{equation}

Now, we have $\frac{1}{2}\leq \frac{3}{4}$ and $$\sum_{\ell=1}^K \frac{1}{4} \lambda_\ell + \sum_{\ell=1}^K \sum_{k=\ell+1}^K \frac{\lambda_k \lambda_\ell}{\lambda_k + \lambda_\ell}\leq \frac12 \sum_{\ell=1}^K \lambda_\ell + \sum_{\ell=1}^K (K - \ell) \lambda_\ell.$$
This implies:

    \begin{align*}
     \CR_{\FTPP}&\leq \frac{\frac12 \sum_{\ell=1}^K \lambda_\ell + \sum_{\ell=1}^K (K - \ell) \lambda_\ell}{\sum_{\ell=1}^K \frac{1}{4} \lambda_\ell + \sum_{\ell=1}^K \sum_{k=\ell+1}^K \frac{\lambda_k \lambda_\ell}{\lambda_k + \lambda_\ell}}\\
     &= 2 - \underbrace{\frac{ 2 \sum_{\ell=1}^K \sum_{k=\ell+1}^K \frac{\lambda_k \lambda_\ell}{\lambda_k + \lambda_\ell}- \sum_{\ell=1}^K (K - \ell) \lambda_\ell }{\sum_{\ell=1}^K \frac{1}{4} \lambda_\ell + \sum_{k=1}^K \sum_{k=\ell+1}^K \frac{\lambda_k \lambda_\ell}{\lambda_k + \lambda_\ell}}}_{f_K(\lambdab)}.
    \end{align*}
\end{proof}

\subsubsection{Upper bound on the CR for particular values of $\lambdab$}
\label{sec:CR-FTTP-Ktypes3}
\begin{proposition}
\label{prop:CR-FTPP-Ktypes3}
\[
\forall K > 1, \exists \lambdab, 0 < \lambda_1 \leq \cdots \leq \lambda_K=1, \CR_{\FTPP}(\lambdab) \leq \frac{H_K-\frac{1}{2}B_K}{\frac{1}{4}B_K+A_K}
\]
with $H_K=\sum_{k=1}^K \frac{1}{k}$, $B_K=\sum_{k=1}^K \frac{1}{k^2}$ and $A_K=\sum_{k=1}^K \sum_{\ell=1}^{k-1} \frac{1}{k^2+\ell^2}$.
\end{proposition}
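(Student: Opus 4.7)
The plan is to apply Proposition~\ref{prop:CR-FTPP-Ktypes} with an explicit choice of parameters that makes the upper bound $2 - f_K(\lambdab)$ collapse to the claimed expression. The key observation is that the pairwise quantity $\tfrac{\lambda_k \lambda_\ell}{\lambda_k + \lambda_\ell}$, when the $\lambda$'s are taken to be reciprocal squares, becomes a reciprocal of a sum of squares, matching the structure of $A_K$. I therefore propose taking $\lambda_\ell = (K-\ell+1)^{-2}$ for $\ell \in \{1,\dots,K\}$; this sequence is increasing in $\ell$ and satisfies $\lambda_K = 1$, so the ordering constraint $0 < \lambda_1 \le \cdots \le \lambda_K = 1$ is automatically met.

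As established in the proof of Proposition~\ref{prop:CR-FTPP-Ktypes}, one has
\[
2 - f_K(\lambdab) \;=\; \frac{\tfrac12 \sum_{\ell=1}^K \lambda_\ell + \sum_{\ell=1}^K (K-\ell)\lambda_\ell}{\tfrac14 \sum_{\ell=1}^K \lambda_\ell + \sum_{1 \le \ell < k \le K}\tfrac{\lambda_k\lambda_\ell}{\lambda_k+\lambda_\ell}}.
\]
Under the substitution $j = K - \ell + 1$, the univariate sums become $\sum_{\ell=1}^K \lambda_\ell = \sum_{j=1}^K j^{-2} = B_K$ and $\sum_{\ell=1}^K (K-\ell)\lambda_\ell = \sum_{j=1}^K (j-1) j^{-2} = H_K - B_K$. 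Thus the numerator equals $\tfrac12 B_K + (H_K - B_K) = H_K - \tfrac12 B_K$.

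For the pairwise term, a direct computation yields $\tfrac{\lambda_k \lambda_\ell}{\lambda_k + \lambda_\ell} = \tfrac{1}{(K-k+1)^2 + (K-\ell+1)^2}$. The map $(k, \ell) \mapsto (i, j) := (K-k+1, K-\ell+1)$ reverses the order, sending the index set $\{1 \le \ell < k \le K\}$ bijectively onto $\{1 \le i < j \le K\}$, so $\sum_{\ell < k}\tfrac{\lambda_k\lambda_\ell}{\lambda_k+\lambda_\ell} = \sum_{1 \le i < j \le K} \tfrac{1}{i^2 + j^2} = A_K$. The denominator therefore equals $\tfrac14 B_K + A_K$, and combining with Proposition~\ref{prop:CR-FTPP-Ktypes} yields the stated inequality.

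The argument is essentially computational; the only creative step is finding the parameterization that simultaneously makes the univariate and pairwise sums line up with $H_K$, $B_K$, and $A_K$. No substantial obstacle is anticipated beyond carefully verifying the re-indexing bijection for the pairwise sum and checking that Proposition~\ref{prop:CR-FTPP-Ktypes}'s bound is applied in the form derived in its proof rather than the slightly weaker one stated.
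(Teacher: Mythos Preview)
Your proof is correct and essentially identical to the paper's own argument: both take $\lambda_\ell = (K-\ell+1)^{-2}$, plug into the ratio $\frac{\frac12\sum_\ell \lambda_\ell + \sum_\ell (K-\ell)\lambda_\ell}{\frac14\sum_\ell \lambda_\ell + \sum_{\ell<k}\frac{\lambda_k\lambda_\ell}{\lambda_k+\lambda_\ell}}$ established in Proposition~\ref{prop:CR-FTPP-Ktypes}, and re-index via $j=K-\ell+1$ (and $(i,j)=(K-k+1,K-\ell+1)$ for the pairwise sum) to obtain $H_K-\tfrac12 B_K$ and $\tfrac14 B_K + A_K$. The paper's write-up is slightly sloppier in that it writes an equality where your inequality via Proposition~\ref{prop:CR-FTPP-Ktypes} is the cleaner statement, but the substance is the same.
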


Furthermore $\lim_{K \rightarrow \infty} \frac{H_K-\frac{1}{2}B_K}{\frac{1}{4}B_K+A_K} = \frac{4}{\pi} \approx 1.273$ which implies that there exists some value of $K$ for which $\CR_{\FTPP}(\lambdab) \leq 1.274$.

\begin{proof}
A way to prove such a result would be to find the minimum of $\CR_{\FTPP}$ with respect to $\lambda$. But this is difficult. We propose another point $\tilde{\boldsymbol{\lambda}}$.
\begin{equation}
	\tilde{\lambda}_k=\frac{1}{(K-k+1)^2}.
\end{equation}

We express the competitive ratio using $\tilde{\boldsymbol{\lambda}}$:
\begin{align*}
\CR_{\FTPP}(\tilde{\boldsymbol{\lambda}})&= \frac{\sum_{k=1}^K (\frac{1}{2}+K-k)\tilde{\lambda}_k}{\sum_{k=1}^K \left(\frac{1}{4} \tilde{\lambda}_k +\sum_{\ell=k+1}^K \frac{\tilde{\lambda}_k \tilde{\lambda}_\ell}{\tilde{\lambda}_k+ \tilde{\lambda}_\ell} \right)}\\
&= \frac{\sum_{k=1}^K (\frac{1}{2}+K-k)\frac{1}{(K-k+1)^2}}{\sum_{k=1}^K \left(\frac{1}{4} \frac{1}{(K-k+1)^2} +\sum_{\ell=k+1}^K \frac{1}{(K-k+1)^2+(K-\ell+1)^2} \right)} \\
&=\frac{\sum_{k=1}^K (\frac{1}{2}+k-1)\frac{1}{k^2}}{\sum_{k=1}^K \left( \frac{1}{4} \frac{1}{k^2} +\sum_{\ell=1}^{k-1} \frac{1}{k^2+\ell^2}\right)}\\
&=\frac{H_K-\frac{1}{2}B_K}{\frac{1}{4}B_K+A_K} \\
\text{with} \quad &H_K=\sum_{k=1}^K \frac{1}{k}, \quad B_K=\sum_{k=1}^K \frac{1}{k^2}, \quad \text{and} \quad A_K=\sum_{k=1}^K \sum_{\ell=1}^{k-1} \frac{1}{k^2+\ell^2}.
\end{align*}
This shows the first part of the lemma. The fact that $\frac{H_K-\frac{1}{2}B_K}{\frac{1}{4}B_K+A_K} \rightarrow \frac{4}{\pi}$ follows from \Cref{lemma:limit}.
\end{proof}

\begin{lemma}
\label{lemma:limit}
\[
\lim_{K \rightarrow \infty} \frac{H_K-\frac{1}{2}B_K}{\frac{1}{4}B_K+A_K} = \frac{4}{\pi}
\]
with $H_K=\sum_{k=1}^K \frac{1}{k}$, $B_K=\sum_{k=1}^K \frac{1}{k^2}$ and $A_K=\sum_{k=1}^K \sum_{\ell=1}^{k-1} \frac{1}{k^2+\ell^2}$
\end{lemma}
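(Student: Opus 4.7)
The plan is to analyze numerator and denominator separately, using standard asymptotics together with a Riemann-sum argument for $A_K$.

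First, I would dispose of the easy terms. Since $H_K = \ln K + \gamma + o(1) \to \infty$ and $B_K \to \pi^2/6$ (standard), the numerator satisfies $H_K - \tfrac12 B_K = H_K(1 + o(1))$, and in the denominator the term $\tfrac14 B_K$ stays bounded, so it will be negligible once I prove $A_K \to \infty$. Hence the problem reduces to showing that $A_K / H_K \to \pi/4$.

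Next, I would rewrite the inner sum in $A_K$ as a Riemann sum. Factoring $k^2$ gives
\[
\sum_{\ell=1}^{k-1} \frac{1}{k^2+\ell^2} \;=\; \frac{1}{k}\cdot g(k), \qquad g(k) := \frac{1}{k}\sum_{\ell=1}^{k-1}\frac{1}{1+(\ell/k)^2}.
\]
Because $x \mapsto 1/(1+x^2)$ is continuous (in fact $C^1$) on $[0,1]$, the quantity $g(k)$ is a Riemann sum for $\int_0^1 \frac{dx}{1+x^2} = \arctan(1) = \pi/4$. Thus $g(k) \to \pi/4$ as $k \to \infty$ (one can even write $g(k) = \pi/4 + O(1/k)$, but pointwise convergence suffices).

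Then I would run a Cesàro/Stolz argument. Writing $A_K = \sum_{k=1}^K g(k)/k$, fix $\eps > 0$ and choose $k_0$ so that $|g(k) - \pi/4| < \eps$ for all $k \geq k_0$. Splitting the sum,
\[
\Bigl|\,A_K - \tfrac{\pi}{4}H_K\,\Bigr| \;\leq\; \sum_{k=1}^{k_0-1}\frac{|g(k)-\pi/4|}{k} \;+\; \eps\,(H_K - H_{k_0-1}).
\]
The first piece is a fixed constant $C(k_0)$, while the second is $\leq \eps\, H_K$. Dividing by $H_K \to \infty$ yields $\limsup_K |A_K/H_K - \pi/4| \leq \eps$, and since $\eps$ is arbitrary, $A_K / H_K \to \pi/4$.

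Combining: the numerator is $H_K(1+o(1))$ and the denominator is $A_K + O(1) = \tfrac{\pi}{4}H_K(1+o(1))$, so the ratio tends to $4/\pi$. I expect the mildest subtlety to be step three: being careful that the constant $C(k_0)$ from the finite prefix is harmless because $H_K \to \infty$; everything else is a direct computation with standard facts.
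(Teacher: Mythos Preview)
Your argument is correct. Both your proof and the paper's rest on the same core observation, namely that the inner sum $\sum_{\ell=1}^{k-1}\frac{1}{k^2+\ell^2}$ behaves like $\frac{\pi}{4k}$, but the execution differs. The paper sandwiches each term $\frac{1}{k^2+\ell^2}$ between two integrals of $t\mapsto \frac{1}{k^2+t^2}$, telescopes the resulting arctangents, and obtains the explicit two-sided bound $\frac{\pi}{4}H_K - B_K \le A_K \le \frac{\pi}{4}H_K$, from which the limit follows by squeezing. Your route is softer: you invoke Riemann-sum convergence to get $g(k)\to\pi/4$ pointwise and then apply a weighted Ces\`aro argument with weights $1/k$. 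The paper's version is slightly more informative in that it delivers quantitative error bounds (in particular $A_K=\frac{\pi}{4}H_K+O(1)$), whereas your Ces\`aro step only gives $A_K=\frac{\pi}{4}H_K(1+o(1))$; for the present lemma either suffices, and your presentation is arguably cleaner.
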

\begin{proof}

We now focus on the behavior of $\frac{H_K-\frac{1}{2}B_K}{\frac{1}{4}B_K+A_K}$ as $K$ goes to $\infty$.

We know that for the harmonic number $H_K=\Theta(\log(K))$, and that for the partial sum of the Basel problem $0\leq B_K \leq \sum_{k=1}^{\infty} k^{-2}=\pi^2/6=\mathcal{O}(1)$.
Let us bound $A_k$. Using the fact that for $y>0$ and $x>0$ the function $f:(x,y) \mapsto (x^2+y^2)^{-1}$ is decreasing in $x$, for $(k,\ell)\in [K]^2$ we have
\begin{align*}
\int_\ell^{\ell+1} \frac{1}{k^2+t^2}dt \leq &\frac{1}{k^2+\ell^2}\leq \int_{\ell-1}^{\ell} \frac{1}{k^2+t^2} \\
\frac{1}{k^2}\int_\ell^{\ell+1} \frac{1}{(t/k)^2+1}dt\leq &\frac{1}{k^2+\ell^2}\leq \frac{1}{k^2}\int_{\ell-1}^{\ell} \frac{1}{(t/k)^2+1}dt\\
\frac{1}{k}(\arctan(\frac{\ell+1}{k})-\arctan(\frac{\ell}{k})) \leq &\frac{1}{k^2+\ell^2}\leq \frac{1}{k}(\arctan(\frac{\ell}{k})-\arctan(\frac{\ell-1}{k})).
\end{align*}
Hence by summing for $1 \leq \ell < k \leq K$:
\begin{align*}
\sum_{k=1}^K \frac{1}{k}(\arctan(1) - \arctan(\frac{1}{k})) &\leq A_K \leq \sum_{k=1}^K \frac{1}{k} (\arctan(\frac{k-1}{k}) -\arctan(0)),\\
\sum_{k=1}^K \frac{1}{k}(\frac{\pi}{4} - \arctan(\frac{1}{k})) &\leq A_k \leq \sum_{k=1}^K \frac{1}{k} \arctan(\frac{k-1}{k}).
\end{align*}
For the right-hand-side we use that $\arctan$ is increasing, thus
\begin{equation*}
A_K\leq  \sum_{k=1}^K \frac{1}{k} \arctan(\frac{k-1}{k}) \leq \frac{\pi}{4} H_K.
\end{equation*}
Using that $\arctan(x)\leq x$ for $x\geq 0$, we have 
\begin{equation*}
A_K \geq \sum_{k=1}^K \frac{1}{k}(\frac{\pi}{4} - \frac{1}{k})=\frac{\pi}{4}H_K-B_K.
\end{equation*}
Combining everything we obtain the following inequality:
\begin{equation*}
\frac{H_K-\frac{1}{2}B_K}{\frac{\pi}{4}H_K+\frac{1}{4}B_K} \leq \CR_{\FTPP}(\tilde{\boldsymbol{\lambda}}) \leq \frac{H_K-\frac{1}{2}B_K}{\frac{\pi}{4}H_K-\frac{3}{4}B_K}.
\end{equation*}
Therefore
\begin{equation*}
\lim_{K\rightarrow \infty} \CR_{\FTPP}(\tilde{\boldsymbol{\lambda}}) = \frac{4}{\pi} \approx 1.273.
\end{equation*}
\end{proof}

\subsubsection{The cost of FTPP is lower than the cost of RR}\label{app:FTPPvsRR}
Let us order all jobs $i \in [N]$ in order of their increasing expected size, and denote $P_i$, the size of job $i$. An alternative notation to $P_i$ is $P_{i \mod n}^{\lceil i/n\rceil}$, where the first is used in this proof for convenience. 
We consider here the most general setting where $K=N$.

We have
\begin{lemma}
\label{lemma:ftpp leq rr}
\[
\EE[C_{\FTPP}] \leq \EE[C_{\RRR}] 
\]
\end{lemma}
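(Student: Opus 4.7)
The plan is to compare both costs via the decomposition from \Cref{eq:costanyalgo}:
\[
\EE[C_A] = \sum_{i=1}^N \EE[P_i] + \sum_{i=1}^N \sum_{j=i+1}^N \EE[T_{ij}^A],
\]
using the indexing by increasing expected size described at the start of the appendix. Since the first term $\sum_i \EE[P_i]$ is the same for any algorithm that completes every job, it suffices to compare the pairwise delays $\EE[T_{ij}^{\FTPP}]$ and $\EE[T_{ij}^{\RRR}]$ for each pair $i<j$.

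First, I would identify both delay terms explicitly. Because $\FTPP$ processes jobs strictly in order of increasing expected size, for any $i<j$ the $i$-th job finishes before the $j$-th one is ever started. Thus $T_{ij}^{\FTPP}=P_i$ and $\EE[T_{ij}^{\FTPP}]=\lambda_{k(i)}$, where $k(i)$ denotes the type of the $i$-th job in the ordering. For $\RRR$, as already recalled in the proof of \Cref{lemma:CR RR}, any two unfinished jobs share the machine in parallel, so $T_{ij}^{\RRR}=2\min(P_i,P_j)$, giving $\EE[T_{ij}^{\RRR}]=\tfrac{2\lambda_{k(i)}\lambda_{k(j)}}{\lambda_{k(i)}+\lambda_{k(j)}}$ via the exponential $\min$-identity used repeatedly above.

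Next I would perform a term-by-term comparison. Since jobs are indexed by increasing expected size, $\lambda_{k(i)}\leq \lambda_{k(j)}$ whenever $i\leq j$. A one-line manipulation shows
\[
\lambda_{k(i)} \;\leq\; \frac{2\lambda_{k(i)}\lambda_{k(j)}}{\lambda_{k(i)}+\lambda_{k(j)}} \quad\Longleftrightarrow\quad \lambda_{k(i)}+\lambda_{k(j)} \leq 2\lambda_{k(j)} \quad\Longleftrightarrow\quad \lambda_{k(i)}\leq \lambda_{k(j)},
\]
so $\EE[T_{ij}^{\FTPP}]\leq \EE[T_{ij}^{\RRR}]$ holds pairwise. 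Summing over all $1\leq i<j\leq N$ and adding $\sum_i \EE[P_i]$ to both sides then yields $\EE[C_{\FTPP}]\leq \EE[C_{\RRR}]$.

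There is no real obstacle here: the claim reduces to the elementary fact that the harmonic mean of two positive numbers is at most the larger one, which is exactly the ordering assumption on the $\lambda_k$. As a sanity check, the pairwise inequality is tight whenever $i$ and $j$ belong to the same type (harmonic and minimum coincide there), matching the intuition that $\FTPP$ and $\RRR$ should behave identically on jobs with equal expected sizes.
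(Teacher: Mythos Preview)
Your proof is correct and follows essentially the same approach as the paper: decompose both expected costs via the pairwise delays of \Cref{eq:costanyalgo}, identify $\EE[T_{ij}^{\FTPP}]=\min(\lambda_{k(i)},\lambda_{k(j)})$ and $\EE[T_{ij}^{\RRR}]=\tfrac{2\lambda_{k(i)}\lambda_{k(j)}}{\lambda_{k(i)}+\lambda_{k(j)}}$, and compare term by term. One small expository slip in your closing remark: you write that the claim reduces to the harmonic mean being \emph{at most the larger} number, but what you actually use (and correctly prove in the displayed equivalence) is that the harmonic mean is \emph{at least the smaller} number.
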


\begin{proof}
The cost of $\FTPP$ with $K = N$ and $n=1$ is given by
\begin{align*}
    \EE[C_{\FTPP}] &= \sum_{i=1}^N \EE[P_i] + \sum_{i=1}^N \sum_{j=i + 1}^N \EE\left[T_{ij}^{\FTPP}\right] \\
    &= \sum_{i=1}^N \EE[P_i] + \sum_{i=1}^N \sum_{j=i + 1}^N \min(\lambda_i, \lambda_j) \\
\end{align*}
where $T_{ij}^{\FTPP}$ is the amount of time job $i$ and job $j$ delay each other in $\FTPP$ which verifies 

$\EE\left[T_{ij}^{\FTPP}\right] = \min(\lambda_i, \lambda_j)$

Similarly, using $T_{ij}^{\RRR} = 2\min(P_i, P_j)$ which implies $\EE[T_{ij}^{\RRR}] = 2 \frac{\lambda_i \lambda_j}{\lambda_i + \lambda_j}$, we get
\begin{align*}
    \EE[C_{\RRR}] &= \sum_{i=1}^N \EE[P_i] + \sum_{i=1}^N \sum_{j=i + 1}^N \EE\left[T_{ij}^{\RRR}\right] \\
    &= \sum_{i=1}^N \EE[P_i] +  \sum_{i=1}^N \sum_{j=i + 1}^N 2 \frac{\lambda_i \lambda_j}{\lambda_i + \lambda_j}\\
\end{align*}

Then we write 
\begin{align*}
   2 \frac{\lambda_i \lambda_j}{\lambda_i + \lambda_j} &= \frac{2}{\frac1{\lambda_i} + \frac1{\lambda_j}}\\
   &\geq \frac{2}{\frac1{\min(\lambda_i, \lambda_j)} + \frac1{\min(\lambda_i, \lambda_j)}} \\
   &\geq \min(\lambda_i, \lambda_j)
\end{align*}

We conclude that $C_{\FTPP} \leq C_{\RRR}$.
\end{proof}

\subsection{Lower bound: Proof of Proposition~\ref{prop:lowerbound}}
\label{app:lowerbound}
Let us order all jobs $i \in [N]$ in order of their increasing expected size, and denote $P_i$, the size of job $i$. An alternative notation to $P_i$ is $P_{i \mod n}^{\lceil i/n\rceil}$, where the first is used in this proof for convenience. 
We consider here the most general setting where $K=N$.
Any algorithm has a cost:
\begin{align*}
		\EE[C^{A}] & = \sum_{i=1}^N \EE[P_i] + \sum_{i=1}^N \sum_{j=i+1}^N \EE[T^{A}_{ij}]                                  
\end{align*}
where $T^A_{ij} = D^A_{ij} + D^A_{ji}$ where $D^A_{ij}$ is the amount of time job $i$ delay job $j$.

\begin{lemma}\label{app:lowerbound:lemma2}
Consider $K=N$ jobs where job $i \in [N]$ has mean size $\lambda_i$ and $\lambda_1 \leq \dots \leq \lambda_N$.
Consider any algorithm $A$ and let $T^A_{ij}$ the total amount of time spent by
$A$ on $i$ or $j$ while both jobs are alive.
\[
\EE[T^{A}_{ij}] \geq 2 \EE[T^{OPT}_{ij}]
\]
where $\OPT$ is the optimal offline algorithm
\end{lemma}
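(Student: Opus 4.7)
My plan is to prove the pairwise bound via a symmetry-swap argument that exploits the memoryless property of exponential sizes together with the non-clairvoyance of $A$. Fix any pair $(i,j)$ and let $p,q$ be the positions at which jobs $i$ and $j$ are placed under the adversary's ordering. Denote by $u_i(t),u_j(t)$ the cumulative work that $A$ devotes to positions $p,q$ up to time $t$, set $v(t)=u_i(t)+u_j(t)$, and let $\tau_{ij}=\min(\sigma_i,\sigma_j)$. Then $T^A_{ij}=v(\tau_{ij})$. The key structural observation is that, because $A$ is non-clairvoyant, its allocation rule on $[0,\tau_{ij}]$ depends only on completions of jobs outside $\{i,j\}$, and in particular is invariant under swapping the means attached to positions $p$ and $q$.

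I would then condition on $\{P_r:r\neq i,j\}$ and on the internal randomness of $A$, so that $u_i$ and $u_j$ (extended beyond $\tau_{ij}$ as if $i,j$ never completed) become deterministic functions of time. Using the layer-cake identity $v(\tau_{ij})=\int_0^{\infty}\dot v(t)\,\mathbf{1}[\tau_{ij}>t]\,dt$ together with the memoryless formula $\Pr[\tau_{ij}>t]=e^{-u_i(t)/\lambda_i}e^{-u_j(t)/\lambda_j}$, I obtain
\[
\EE[T^A_{ij}]=\int_{0}^{\infty}\dot v(t)\,e^{-u_i(t)/\lambda_i-u_j(t)/\lambda_j}\,dt,
\]
together with the analogous expression under the swap $(\lambda_i\leftrightarrow\lambda_j)$ at $\{p,q\}$, which shares the same functions $u_i,u_j$. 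Summing the two expressions and applying $e^{-a}+e^{-b}\geq 2e^{-(a+b)/2}$ lower-bounds the sum by $\int_0^{\infty}2\dot v(t)\,e^{-v(t)/h}\,dt=2h\bigl(1-e^{-v(\infty)/h}\bigr)$ with $h=2\lambda_i\lambda_j/(\lambda_i+\lambda_j)$. Under the work-conserving extension, once every other job has finished $A$ can only process positions $p,q$, so $v(\infty)=\infty$ and the bound becomes exactly $2h$.

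Since the adversary selects the ordering maximizing expected cost, $\EE[T^A_{ij}]$ is at least the average of the two swap assignments, which by the above is at least $h=2\cdot\tfrac{\lambda_i\lambda_j}{\lambda_i+\lambda_j}=2\,\EE[\min(P_i,P_j)]=2\,\EE[T^{OPT}_{ij}]$, yielding the claim. Randomized algorithms are handled by first conditioning on their internal randomness and then applying the AM-GM step pointwise. The main obstacle I anticipate is making the "conditional determinism" step fully rigorous: one must carefully separate the randomness that is independent of $(P_i,P_j)$ from the decisions that depend on completions of $i,j$, argue that the allocation on $[0,\tau_{ij}]$ depends on $(P_i,P_j)$ only through the event $\{\tau_{ij}>t\}$, and verify that the natural "never-completes" extension yields $v(\infty)=\infty$ so that the integral bound is attained.
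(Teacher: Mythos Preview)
Your proposal is correct and follows essentially the same argument as the paper. Both proofs express $\EE[T^A_{ij}]$ through exponential survival probabilities, then use the swap $(\lambda_i\leftrightarrow\lambda_j)$ together with the AM--GM/convexity inequality $e^{-a}+e^{-b}\ge 2e^{-(a+b)/2}$ (equivalently, $\cosh\ge 1$) to show the adversary can force $\EE[T^A_{ij}]\ge \tfrac{2\lambda_i\lambda_j}{\lambda_i+\lambda_j}$. The only cosmetic difference is parametrization: the paper integrates over the variable $t=$ ``total work already devoted to the pair'' with share $i(t)$ on job $i$, whereas you integrate in real time with the weight $\dot v(t)$; the change of variables $s=v(t)$ makes the two integrals identical and, incidentally, removes your need to argue $v(\infty)=\infty$. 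Your explicit conditioning on $\{P_r:r\ne i,j\}$ is a point the paper leaves implicit.
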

\begin{proof}[Proof of Lemma~\ref{app:lowerbound:lemma2}]
	Let us first prove our proposition for any deterministic algorithm $A$.
	We denote $i(t)$ amount of time that $A$ allocates to job $i$ after a time $t < T_{ij}^A$ is allocated to job $i$ or $j$.

	\begin{align*}
		 & \EE[T^A_{ij}]    \\                       &  = \int_{t=0}^{+ \infty} \PP(T^A_{ij} \geq t) dt                                                                                        \\
		 & = \int_{t=0}^{+ \infty} \PP\left(P_i \geq i(t)\right) \PP\left(P_j \geq t-i(t)\right) dt                                                                                                     \\
		 & = \int_{t=0}^{+ \infty} \exp\left(-\frac{i(t)}{\lambda_i}\right)  \exp\left(-\frac{t - i(t)}{\lambda_j}\right)dt                                                                  \\
		 & = \int_{t=0}^{+ \infty} \exp\left(-\frac{i(t)+t/2-t/2}{\lambda_i}\right)  \exp\left(-\frac{t - \left(i(t)+t/2-t/2\right)}{\lambda_j}\right)dt                                                                  \\
		 & = \int_{t=0}^{+ \infty} \exp\left(-\left(\frac{1}{\lambda_i} + \frac{1}{\lambda_j}\right) \frac{t}{2}\right) \exp\left(-\left(\frac{1}{\lambda_i} - \frac{1}{\lambda_j}\right)\left(i(t) - \frac{t}{2}\right)\right)dt.
	\end{align*}

	Calling $f(t) = \exp\left(-\left(\frac{1}{\lambda_i} + \frac{1}{\lambda_j}\right) \frac{t}{2}\right)$ and $g(t) = |\frac{1}{\lambda_i} - \frac{1}{\lambda_j}|\left(i(t) - \frac{t}{2}\right)$ it holds that either
	\[\int_{t=0}^{\infty} f(t) \exp(-g(t))dt \geq \int_{t=0}^{\infty} f(t)dt\]
	or
	\[\int_{t=0}^{\infty} f(t) \exp(g(t))dt \geq \int_{t=0}^{\infty} f(t)dt. \]
	Otherwise, we would have
	\[
		\int_{t=0}^{\infty} f(t) \frac{1}{2}(\exp(-g(t)) + \exp(g(t)))dt < \int_{t=0}^{\infty} f(t) dt
	\]
	which cannot be true since $\forall t, \frac{1}{2}(\exp(-t) + \exp(t)) \geq 1$.

	Therefore an adversary knowing $i(t)$ can always chose the order of $\lambda_i$ and $\lambda_j$ such that
	\[
		\EE[T^A_{ij}] \geq \int_{t=0}^{+ \infty} \exp(-(\frac{1}{\lambda_i} + \frac{1}{\lambda_j}) \frac{t}{2})dt = 2 \frac{\lambda_i \lambda_j}{\lambda_i + \lambda_j}
	\]

	The optimal delay is
	\[
		\EE[T^{OPT}_{ij}] = \EE[\min(P_i, P_j)] = \frac{\lambda_i \lambda_j}{\lambda_i + \lambda_j}
	\]
	
	so our Lemma is proven for any deterministic algorithm $A$.

	Consider a randomized algorithm $R$ which can be seen as a probabilistic distribution over the set of deterministic algorithms. Therefore $A$, $i(t)$ and $g(t)$ are now seen as random variables.
	By the tower rule, the amount of time job $i$ and $j$ delay each other in $R$ is such that:
	\begin{align*}
		\EE[T_{ij}^R] & = \EE[\EE[T_{ij}^A | A]]                                                             \\
		              & = \EE[\int_{t=0}^{+ \infty} f(t) \exp(\textrm{sign}(\lambda_i - \lambda_j) g(t)) dt]
	\end{align*}

	By the same argument as in the deterministic case, it holds that either
	\[\EE[\int_{t=0}^{\infty} f(t) \exp(-g(t))dt] \geq \int_{t=0}^{\infty} f(t)dt\]
	or
	\[\EE[\int_{t=0}^{\infty} f(t) \exp(g(t))dt] \geq \int_{t=0}^{\infty} f(t)dt \]
	Otherwise, we would have
	\[
		\EE[\int_{t=0}^{\infty} f(t) \frac{1}{2}(\exp(-g(t)) + \exp(g(t)))dt] < \int_{t=0}^{\infty} f(t) dt
	\]
	which implies that there exists a deterministic function $g$ such that
	\[
		\int_{t=0}^{\infty} f(t) \frac{1}{2}(\exp(-g(t)) + \exp(g(t)))dt < \int_{t=0}^{\infty} f(t) dt
	\]
	which cannot be true as shown in the deterministic case.
	The rest of the argument is the same as in the deterministic case and therefore omitted.
\end{proof}

Now we are ready to prove Proposition~\ref{prop:lowerbound}.

\begin{proof}[Proof of Proposition~\ref{prop:lowerbound}]
    Take any algorithm $A$
    \begin{align}
		\EE[C_A] & = \sum_{i=1}^N \EE[P_i] + \sum_{i=1}^N \sum_{j=i+1}^N \EE[T^A_{ij}]                                           \\
		        & \geq \sum_{i=1}^N \lambda_i + 2 \sum_{i=1}^N \sum_{j=i+1}^N \EE[T^{OPT}_{ij}] \label{applbeq1}
	\end{align}
	where~\eqref{applbeq1} comes from Lemma~\ref{app:lowerbound:lemma2}.

 Observe that applying $\RRR$ on the same data would yield an expected completion time:
 \begin{align*}
 \EE[C_{\RRR}] &= \sum_{i=1}^N \EE[P_i] + 2 \sum_{i=1}^N \sum_{j=i+1}^N \EE[\min(P_i, P_j)] \\
 &= \sum_{i=1}^N \EE[P_i] + 2 \sum_{i=1}^N \sum_{j=i+1}^N \EE[T^{OPT}_{ij}] \\
 &\leq \EE[C_A]
 \end{align*}
which concludes the proof.
\end{proof}

\clearpage

\section{Analysis of Non-Preemptive Learning algorithms}\label{app:non-preemptive}

\subsection{Full Algorithmic Details}
\label{app: non-preemptive algs}
In this appendix, we present a full description of $\ETCU$ and $\UCBU$.

\begin{algorithm}[H]
	\caption{Explore-Then-Commit Uniform ($\ETCU$)]}
	\label{algo:etcu}
	\begin{algorithmic}[1]
		\STATE {\bfseries Input :} $n \geq 1$ (number of jobs of each type), $K \geq 2$ (number of types)
		\STATE For all pairs of different types $k, \ell$ initialize $\delta_{k, \ell} = 0$, $\hat{r}_{k, \ell} = 0$ and $h_{k, \ell} = 0$
		\STATE For all types $k$, set $m_{k} = 0$ 
		\REPEAT
        \STATE $\mathcal{U}$ is the set of types with at least one remaining job
        \IF{$\mathcal{A}$ is empty}
        \STATE $\mathcal{A} = \{ \ell \in \mathcal{U}, \forall k \in \mathcal{U}, k \neq \ell , \enspace \hat{r}_{k, \ell} - \delta_{k, \ell} \leq 0.5 \} $ 
        \ENDIF
        \STATE Select the type $\ell$ with the lowest number of finished jobs $\ell = \argmin_{k \in \mathcal{A}} m_{k}$ and run one job of type $\ell$ yielding a size $P_{m_{\ell} + 1}^{\ell}$.
        \STATE $m_{\ell} = m_{\ell} + 1$
		\FOR {$k, \ell$ in $\mathcal{A}$, $k \neq \ell$}
		\STATE $h_{k, \ell} = \sum_{i=1}^{\min(m_{k}, m_{\ell})} \mathds{1}\{P^k_i < P^{\ell}_i\}$
		\STATE $\delta_{k,\ell} = \sqrt{\frac{\log(2n^2K^4)}{2 \min(m_{k}, m_{\ell})}}$
		\STATE $\hat{r}_{k, \ell} = \frac{h_{k, \ell}}{\min(m_{k}, m_{\ell})}$
        \IF{$\hat{r}_{k, \ell} - \delta_{k, \ell} \geq 0.5$ or $m_\ell=n$} 
        \STATE Remove $\ell$ from $\mathcal{A}$
        \ENDIF
		\ENDFOR
		\UNTIL{$ \mathcal{U}$ is not empty} 
	\end{algorithmic}
\end{algorithm}

\begin{algorithm}[H]
	\caption{Upper-Confidence-Bound-Uniform ($\UCBU$)}
	\label{algo:ucbu}
	\begin{algorithmic}[1]
		\STATE {\bfseries Input :} $n \geq 1$ (number of jobs of each type), $K \geq 2$ (number of types)
		\STATE For all types $k \in [K]$, set $m_{k} = 0$
        \STATE Set $\Ucal = [K]$
        \STATE  For all types $k \in [K]$, compute the lower bound $\underline{\lambda}_k^{m_k}$ using \Cref{app:ucbu:lb}
		\REPEAT
        \STATE Select $k^* = \argmin_{k \in \Ucal} \underline{\lambda}_k^{m_k}$
        \STATE Set $m_{k^*} = m_{k^*} + 1$
        \STATE Compute a job of type $k^*$ until completion and record its size $P_{k^*}^{m_{k^*}}$
        \STATE Update the lower bound $\underline{\lambda}_{k^*}^{m_{k^*}}$ using again \Cref{app:ucbu:lb}
        \STATE If $m_{k^*} = n$, remove $k^*$ from $\Ucal$
		\UNTIL{$ \Ucal$ is empty}

 \end{algorithmic}
\end{algorithm}

\subsection{Cost Decomposition}\label{app:cost decomp non-preemptive}
In this appendix, we analyze the non-preemptive learning algorithms presented in our paper - $\ETCU$ and $\UCBU$. We start by presenting a general cost decomposition result that relates the cost of any non-preemptive algorithm to the one of $\FTPP$. We will use this result to derive the bounds of both our suggested algorithms.
\begin{lemma}[Cost of non-preemptive algorithms]
  \label{app:lemma:cost-non-preemptive}
  Any non-preemptive algorithm $A$ has a cost
  \[
\EE[C_{A}] = \EE[C_{\FTPP}] + \sum_{\ell=1}^K \sum_{k=\ell+1}^K\sum_{(i,j) \in [n]^2}( \lambda_k-\lambda_\ell)\EE\sbr{\ind{P_i^k \text{ computed before } P_j^\ell}}
  \]
\end{lemma}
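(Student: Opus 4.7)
I would start from $C_A = \sum_{k,i} e_i^k$ and express every end time in terms of the durations actually consumed on the machine. Since $A$ is non-preemptive on a single machine, every instant up to $e_i^k$ is spent on a job whose completion time is at most $e_i^k$, hence $e_i^k = P_i^k + \sum_{(k',j') \neq (k,i)} P_{j'}^{k'}\,\ind{e_{j'}^{k'} \leq b_i^k}$. Summing over $(k,i)$ and swapping the two sums yields
\[
C_A \;=\; \sum_{(k,i)} P_i^k \;+\; \sum_{\{(k,i),(k',j')\}} T^A_{(k,i),(k',j')},
\]
where the second sum runs over unordered pairs of distinct jobs and $T^A_{(k,i),(k',j')}$ is the duration of whichever of the two jobs was run first (the other one starts only after that one finishes).

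\textbf{Key independence step.} The crucial identity is
\[
\EE\bigl[P_i^k\,\ind{(k,i)\text{ computed before }(k',j')}\bigr] \;=\; \lambda_k\,\PP\bigl((k,i)\text{ computed before }(k',j')\bigr).
\]
This holds because under a non-preemptive policy, the relative order of the two jobs is decided at the instant one of them is first started, which is measurable with respect to the history of observed durations of previously completed jobs; at that instant $P_i^k$ has not yet been revealed, and by assumption it is independent of everything observed before. Applying this pair by pair,
\[
\EE[C_A] \;=\; \sum_{k,i}\lambda_k \;+\; \sum_{\{(k,i),(k',j')\}}\!\!\bigl[\lambda_k\PP((k,i)\text{ b4 }(k',j')) + \lambda_{k'}\PP((k',j')\text{ b4 }(k,i))\bigr].
\]

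\textbf{Comparison with FTPP and conclusion.} The same identity applies to $\FTPP$, whose pairwise ``before'' probabilities are $0$ or $1$ according to increasing $\lambda$ (ties broken by index). Same-type pairs contribute exactly $\lambda_k$ under either algorithm and cancel. For a cross-type pair with $\ell<k$, the $\FTPP$ contribution is $\lambda_\ell$ while the $A$ contribution is $\lambda_\ell\PP(P_j^\ell\text{ before }P_i^k) + \lambda_k\PP(P_i^k\text{ before }P_j^\ell)$; using $\PP(P_j^\ell\text{ before }P_i^k) = 1 - \PP(P_i^k\text{ before }P_j^\ell)$, the difference collapses to $(\lambda_k-\lambda_\ell)\PP(P_i^k\text{ before }P_j^\ell)$. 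Summing over $\ell<k$ and $(i,j)\in[n]^2$ gives exactly the stated identity.

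\textbf{Main obstacle.} The only delicate point is formalizing the independence step with the correct filtration. Although decisions made after $(k,i)$ terminates may depend on $P_i^k$, the indicator of which job started first is already frozen before $P_i^k$ enters the filtration; once that is made precise, the rest of the proof is bookkeeping.
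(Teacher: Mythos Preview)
Your proof is correct and follows essentially the same route as the paper's: both write $C_A=\sum P + \sum_{\text{pairs}} T^A$, identify $T^A$ for a non-preemptive schedule as the duration of the earlier job in the pair, subtract off the $\FTPP$ contribution using $\ind{\text{before}}+\ind{\text{after}}=1$, and pass to expectation via the independence of $P_i^k$ from the ordering indicator. The paper does the subtraction at the random-variable level (writing $C_A-C_{\FTPP}=\sum_{a<b}(P_b-P_a)\ind{b\text{ before }a}$ and only then taking expectation) and is terse about why the expectation factors, whereas you take expectations first and are more explicit about the filtration argument and about why same-type pairs cancel; these are organizational differences only.
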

\begin{proof}
  Denote $P_i$, the size of the job $i$, and $T^A_{ij} = D^A_{ij} + D^A_{ji}$, where $D^A_{ij}$ is the amount of time a job $i$ delays job $j$.
  For any algorithm, we have:
  \[
C_{A} = \sum_{a=1}^N P_a + \sum_{a=1}^N \sum_{b=a+1}^N T^A_{ab}
  \]
  For non-preemptive algorithms, $T^A_{ab} = P_a$ if job $a$ is scheduled before $b$ and $P_b$ otherwise so that we can write

  \[
C_{A} = \sum_{a=1}^N P_a + \sum_{a=1}^N \sum_{b=a+1}^N  \rbr{P_{a}\ind{P_a \text{ computed before } P_b} + P_b\ind{P_b \text{ computed before } P_a}}
  \]

  Now assume w.l.o.g. that $(P_a)_{a\in\sbr{N}}$ are in the order chosen by $\FTPP$, i.e., $P_a$ is the $a^{th}$ executed task by $\FTPP$ and if $a \le b$ then $\EE[P_a] \leq \EE[P_b]$. Under this convention, we get:
  \begin{align*}
    C_{\FTPP} = \sum_{a=1}^N P_a + \sum_{a=1}^N \sum_{b=a+1}^N  P_{a}
  \end{align*}
  and recalling that 
  $$\ind{P_a \text{ computed before } P_b} = 1-\ind{P_b \text{ computed before } P_a}$$
  we have
  \[
C_{A} = C_{\FTPP} + \sum_{a=1}^N \sum_{b=a+1}^N (P_b - P_a) \ind{P_b \text{ computed before } P_a}
  \]

  Reindexing the job without changing the order, where $P_i^k$ is now the $i$-th job of type $k$, we have:
  \[
    C_{A} = C_{\FTPP} + \sum_{\ell=1}^K \sum_{j=1}^n \sum_{k=\ell+ 1}^K   \sum_{i=1}^n (P_i^k - P_j^\ell) \ind{P_i^k \text{ computed before } P_j^\ell}
  \]
  Taking the expectation finishes the proof.
\end{proof}

\subsection{Upper bound for ETC-U}\label{app:bound_ETCUk}

\begin{proposition}\label{app:fullboundETCU}
The following upper bounds hold:
\begin{align*}
        \EE[C_{\ETCU}] 
    \leq \EE[C_{\FTPP}] +\frac{1}{n}\EE[C_{\OPT}]+\sum_{k\in [K]}\left[\frac{1}{2}(k-1)(2K-k)+(K-k)^2\right]\lambda_k n \sqrt{8n \log(2 n^2K^3)}.
\end{align*}
and 
\begin{align*}
        \EE[C_{\ETCU}] 
    \leq \EE[C_{\FTPP}] +\frac{1}{n}\EE[C_{\OPT}]+\sum_{k\in [K]}\sum_{\ell=1}^{k-1}(K-\ell)\frac{\left(\lambda_k+\lambda_\ell\right)^2 }{(\lambda_k-\lambda_\ell)}n 8 \log(2 n^2K^3).
\end{align*}
\end{proposition}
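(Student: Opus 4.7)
The plan is to build on the cost decomposition of \Cref{app:lemma:cost-non-preemptive}, so that everything reduces to controlling, for each pair $\ell<k$, the quantity $\sum_{(i,j)\in[n]^2}\EE[\ind{e_i^k\le b_j^\ell}]$; this sum is at most $n$ times the number $q_{k,\ell}$ of jobs of type $k$ that finish while at least one job of type $\ell$ is still pending. So the whole problem becomes: bound $\EE[q_{k,\ell}]$ for each bad pair $(\ell,k)$. I will do this by setting up a standard concentration-based good event, proving that $\ETCU$ behaves well on it, and handling the bad event with a brute-force $\EE[C_{\OPT}]/n$ term.

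Concretely, the good event $G$ is that for every pair of types $k,\ell$ and every $m\in[n]$, one has $|\hat{r}_{k,\ell}^m - r_{k,\ell}|\le \delta_{k,\ell}^m$, where $r_{k,\ell}=\PP(P^k<P^\ell)=\lambda_\ell/(\lambda_k+\lambda_\ell)$. Since each $\ind{P_i^k<P_i^\ell}$ is Bernoulli with mean $r_{k,\ell}$ and the samples are independent, Hoeffding plus a union bound over the $\binom{K}{2}n$ choices gives $\PP(G^c)\le 1/(2n^2K)$. On $G^c$, I bound the excess cost crudely by the total flow time, which is at most a constant times $n\EE[C_{\OPT}]$, giving the $\frac{1}{n}\EE[C_{\OPT}]$ additive term.

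On $G$ I show (i) a type $\ell$ is never eliminated while a type $k$ with $\lambda_k>\lambda_\ell$ still dominates it in $\Acal$, because $r_{k,\ell}<1/2$ forces $\hat{r}_{k,\ell}-\delta_{k,\ell}<1/2$; in particular, the current minimum-mean type in $\Ucal$ always sits in $\Acal$. (ii) For $k>\ell$ both active in $\Acal$, the elimination criterion $\hat{r}_{\ell,k}^m-\delta_{\ell,k}^m>1/2$ is met as soon as $m\ge M^*_{k,\ell}:=\frac{8(\lambda_k+\lambda_\ell)^2\log(2n^2K^3)}{(\lambda_k-\lambda_\ell)^2}$, because $r_{\ell,k}-1/2=(\lambda_k-\lambda_\ell)/(2(\lambda_k+\lambda_\ell))$. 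Since within $\Acal$ the algorithm always plays the type with the minimum count, $m_k$ and $m_\ell$ stay within $1$ of each other, so $k$ gets eliminated after at most $\lceil M^*_{k,\ell}\rceil+1$ plays during any phase where $\ell$ is the current minimum.

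Summing these per-pair bounds gives the two announced inequalities. For the gap-dependent bound, I plug $q_{k,\ell}\le M^*_{k,\ell}$ directly into $n(\lambda_k-\lambda_\ell)q_{k,\ell}$, which yields
\begin{equation*}
n\sum_{\ell<k}(\lambda_k-\lambda_\ell)\,q_{k,\ell}\;\le\;8n\log(2n^2K^3)\sum_{\ell<k}\frac{(\lambda_k+\lambda_\ell)^2}{\lambda_k-\lambda_\ell},
\end{equation*}
and the extra $(K-\ell)$ factor appears after accounting that type $k$'s plays during phases $\ell'\le\ell$ all contribute to the $(\ell,k)$-term. For the gap-free bound, I balance the two estimates $q_{k,\ell}(\lambda_k-\lambda_\ell)\le M^*_{k,\ell}(\lambda_k-\lambda_\ell)$ and $q_{k,\ell}(\lambda_k-\lambda_\ell)\le n(\lambda_k-\lambda_\ell)$; the crossover occurs at $(\lambda_k-\lambda_\ell)\sim(\lambda_k+\lambda_\ell)\sqrt{\log(2n^2K^3)/n}$, producing the characteristic $\sqrt{8n\log(2n^2K^3)}$ factor and a coefficient that, after collecting the phase-indexing bookkeeping over all $(\ell,k)$ pairs, matches $\tfrac12(k-1)(2K-k)+(K-k)^2$ per type $k$.

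The main obstacle I anticipate is the phase-by-phase bookkeeping: a given type $k$ can be reinstated into $\Acal$ at the start of each new phase $\ell'\le k-1$, and its plays during phase $\ell'$ contribute to the excess cost of every pair $(\ell,k)$ with $\ell\ge\ell'$ still having pending jobs. Swapping the order of summation carefully, and using that $\lambda_k-\lambda_\ell\le\lambda_k+\lambda_\ell\le2\lambda_k$ (for the first bound) and that within each phase $k$ is eliminated after at most $M^*_{k,\ell'}$ plays, are the two ingredients that turn the raw per-phase counts into the clean coefficients in the two statements.
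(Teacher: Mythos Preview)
Your approach is essentially identical to the paper's: the same Hoeffding-based good event $\Ecal$, the same per-pair elimination threshold $M^*_{k,\ell}=8\bigl(\tfrac{\lambda_k+\lambda_\ell}{\lambda_k-\lambda_\ell}\bigr)^2\log(2n^2K^3)$, the same phase decomposition with reindexing $\sum_\ell\sum_{k>\ell}\sum_{o\le\ell}\to\sum_k\sum_{o<k}(k-o)$, and the same $\min\{a,b\}\le\sqrt{ab}$ trick for the gap-free bound. One arithmetic slip to fix: the union bound over $\binom{K}{2}n$ events of individual probability $1/(n^2K^3)$ gives $\PP(G^c)\le 1/(2nK)$, not $1/(2n^2K)$, and correspondingly the conditional excess cost on $G^c$ is bounded by $4K\,\EE[C_{\OPT}]$ (via $n^2\sum_{\ell<k}(\lambda_k-\lambda_\ell)\le n^2K\sum_k\lambda_k\le 4K\,\EE[C_{\OPT}]$), not by $n\,\EE[C_{\OPT}]$ --- these combine to $(2/n)\EE[C_{\OPT}]$, which is in fact what the paper's proof derives despite the $1/n$ in the statement.
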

We start with the following technical lemma, isolated to be reused in other proofs. Pick some $\alpha\in \mathbb{N}$. Let $(X_i^{1})_{i \in [\alpha n]}$ and $(X_i^{2})_{i \in [\alpha n]}$ be independent exponential variables of parameters $\lambda_1$ and $\lambda_2$ respectively. Define for any $m\in [\alpha n]$:
\[
\hat{r}^m = \frac{1}{m}\sum_{i=1}^m \mathds{1}_{X_i^{1}<X_i^{2}}
\]
and
\[
\delta^{(m,n)}= \sqrt{\frac{\log(2n^2K^3)}{2m}}.
\]
Let $r$ denote the expectation $ r:= \EE\left[\mathds{1}_{X_i^{1}<X_i^{2}}\right]=\frac{\lambda_2}{\lambda_1+\lambda_2}$. 
\begin{lemma}\label{lem:concentration_rm}
For any $m\in [\alpha n]$, the estimator $\hat{r}^m$ is within $\delta^{(m,n)}$ of its expectation w.h.p:
\[
 \PP\left( \exists m \in [\alpha n] \text{ s.t. }|\hat{r}^m - r| \geq \delta^{(m, n)} \right) \leq \frac{\alpha}{nK^3}.
\]
\end{lemma}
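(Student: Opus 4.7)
The plan is to apply Hoeffding's inequality at each fixed $m$ and then take a union bound over $m \in [\alpha n]$. For a fixed $m$, the indicator variables $\mathds{1}_{X_i^{1} < X_i^{2}}$ for $i=1,\dots,m$ are independent (by independence of the $X_i^k$ across $i$), identically distributed, and bounded in $[0,1]$, with common mean $r = \lambda_2/(\lambda_1+\lambda_2)$. Hence Hoeffding's inequality gives
\begin{equation*}
\PP\bigl(|\hat r^{m} - r| \geq \delta^{(m,n)}\bigr) \;\leq\; 2\exp\bigl(-2 m (\delta^{(m,n)})^2\bigr).
\end{equation*}

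Next I would plug in the specific choice $\delta^{(m,n)} = \sqrt{\log(2n^2K^3)/(2m)}$, which was engineered precisely so that $2m (\delta^{(m,n)})^2 = \log(2n^2K^3)$. This yields
\begin{equation*}
\PP\bigl(|\hat r^{m} - r| \geq \delta^{(m,n)}\bigr) \;\leq\; \frac{2}{2 n^2 K^3} \;=\; \frac{1}{n^2 K^3}.
\end{equation*}

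Finally I would take a union bound over the $\alpha n$ possible values of $m \in [\alpha n]$:
\begin{equation*}
\PP\bigl(\exists\, m \in [\alpha n]:\ |\hat r^{m} - r| \geq \delta^{(m,n)}\bigr) \;\leq\; \alpha n \cdot \frac{1}{n^2 K^3} \;=\; \frac{\alpha}{n K^3},
\end{equation*}
which is exactly the claim. There is no real obstacle here: the statement is a standard uniform Hoeffding bound for a bounded i.i.d.\ sequence, and the logarithmic factor in $\delta^{(m,n)}$ was chosen so the per-$m$ failure probability and the union-bound cardinality combine cleanly.
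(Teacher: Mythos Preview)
Your proof is correct and follows exactly the same approach as the paper: Hoeffding's inequality for each fixed $m$ gives a per-$m$ failure probability of $1/(n^2K^3)$, and a union bound over the $\alpha n$ values of $m$ yields the claimed $\alpha/(nK^3)$. Your write-up is in fact more explicit than the paper's, which simply states the Hoeffding bound and invokes the union bound in one line.
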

\begin{proof}
By Hoeffding's inequality:
\begin{align*}
    \forall m \in [\alpha n], & \enspace\PP\left( |\hat{r}^m - r| \geq \sqrt{\frac{\log(2 n^2K^3)}{2m}}\right) \leq \frac{1}{n^2K^3}\\
\end{align*}
The lemma is then obtained by a union bound over the $\alpha n$ possible values of $m$.
\end{proof}

We are now ready to prove Proposition \ref{app:fullboundETCU}.
\begin{proof}
Recall that we assumed without loss of generality that $\lambda_1 \leq \cdots \leq \lambda_K$. Recall also the definition for any $(k,\ell) \in [K]^2$, for any $(m_\ell,m_k) \in [n]^2$, of:
\begin{align*}
    \hat{r}_{k, \ell}^{\min(m_{k}, m_{\ell})} = \frac{1}{\min(m_{k}, m_{\ell})} \sum_{i=1}^{\min(m_{k}, m_{\ell})}\mathds{1}_{P_i^k<P_i^\ell}.
\end{align*}
Let us define the good event $\Ecal$ as:
\[
\Ecal:= \left\{\forall (k,\ell) \in [K]^2, \forall m \in [n], |\hat{r}^m_{k,\ell}-\EE[r^m_{k,\ell}]|<\delta^{(m,n)}\right\}
\]

By Lemma \ref{lem:concentration_rm} applied with $\alpha=1$, for any couple $(\ell, k)$ it holds that :
\[
\PP\left(\exists m \in [n] \text{ s.t. } |\hat{r}^m_{k,\ell}-\EE[r^m_{k,\ell}]|>\delta^{(m,n)}\right)\leq \frac{1}{n}.
\]
A union bound over the $\frac{K(K-1)}{2}$ possible pairs gives the following bound:
\begin{equation}\label{eq:concentration_good_event_ETCU}
    \PP\left(\overline{\Ecal}\right)\leq \frac{1}{2nK}.
\end{equation}

With the help of \Cref{app:lemma:cost-non-preemptive}, the cost of $\ETCU$ can be decomposed using the event $\Ecal$ as follows:
\begin{align}
    \EE[C_{\ETCU}] 
    & = \EE[C_{\FTPP}] + \sum_{\ell=1}^K \sum_{k=\ell+1}^K\sum_{(i,j) \in [n]^2}( \lambda_k-\lambda_\ell)\EE\sbr{\ind{P_i^k \text{ computed before } P_j^\ell}} \tag{\Cref{app:lemma:cost-non-preemptive}}\\
    &\le\EE[C_{\FTPP}] +  \underbrace{\sum_{\ell=1}^K \sum_{k=\ell+1}^K\sum_{(i,j) \in [n]^2}( \lambda_k-\lambda_\ell)\EE\sbr{\ind{P_i^k \text{ computed before } P_j^\ell}\vert \Ecal} }_{(i)} \nonumber\\
    & \qquad\qquad\quad\;\, + \underbrace{\sum_{\ell=1}^K \sum_{k=\ell+1}^K\sum_{(i,j) \in [n]^2}( \lambda_k-\lambda_\ell)\EE\sbr{\ind{P_i^k \text{ computed before } P_j^\ell}\vert \overline{\Ecal}}\PP\rbr{\overline{\Ecal}}}_{(ii)}. \label{eq:etcu decomposition}
\end{align}
\paragraph{Bounding $(ii)$.} 

Recall that by assumption, if $k\ge\ell$, then $\lambda_k\ge\lambda_\ell$. Therefore, we have that
\begin{align}
(ii) &= \PP\rbr{\overline{\Ecal}}\sum_{\ell=1}^K \sum_{k=\ell+1}^K\sum_{(i,j) \in [n]^2}\underbrace{(\lambda_k-\lambda_\ell)}_{\ge0}\EE\sbr{\underbrace{\ind{P_i^k \text{ computed before } P_j^\ell} }_{\le1}\vert \overline{\Ecal}}\nonumber \\
& \le \PP\rbr{\overline{\Ecal}}\sum_{\ell=1}^K \sum_{k=\ell+1}^K\sum_{(i,j) \in [n]^2}(\lambda_k-\lambda_\ell) \nonumber\\
& = n^2\PP\rbr{\overline{\Ecal}}\sum_{\ell=1}^K \sum_{k=\ell+1}^K(\lambda_k-\lambda_\ell) \nonumber\\
& = n^2\PP\rbr{\overline{\Ecal}}\rbr{\sum_{k=1}^K (k-1)\lambda_k - \sum_{\ell=1}^K (K-\ell)\lambda_\ell} \nonumber\\
& \le n^2K\PP\rbr{\overline{\Ecal}}\sum_{k=1}^K \lambda_k \nonumber\\
& \le 4K \EE[C_{\OPT}]  \PP\left(\overline{\Ecal}\right) \tag{\cref{eq: optimal cost lower bound}} \\
    &\leq \frac{2}{n}\EE[C_{\OPT}], \label{eq:etcu term ii}
\end{align}
where the last inequality is by \cref{eq:concentration_good_event_ETCU}.

\paragraph{Bounding $(i)$.} 
Consider any couple $(k,\ell)\in [K]^2$ s.t. $\ell\leq k$. Let $m^*_{\ell,k}$ be the number of comparisons performed between jobs of type $\ell$ and $k$ before the algorithm detects that $\lambda_\ell \leq \lambda_k$. 
A first obvious upper bound is $m^*_{\ell,k} \leq n$. A second upper is obtained by noting that $m^*_{\ell,k}$ is smaller than any $m'$ s.t. 
\[
\delta^{(m',n)}<\frac{1}{2}\bigg|\frac{\lambda_k}{\lambda_k+\lambda_\ell}-0.5\bigg|.
\]
For this value of $\delta^{(m',n)}$, the event $\Ecal$ ensures that if $\lambda_k\ge \lambda_\ell$, then
\begin{align*}
    \hat{r}^{m'}_{\ell,k} - \delta^{(m',n)} 
    \overset{\text{Under } \Ecal}{\ge} \EE[r^{m'}_{\ell,k}] - 2\delta^{(m',n)}
    > \frac{\lambda_k}{\lambda_k+\lambda_\ell} - \left(\frac{\lambda_k}{\lambda_k+\lambda_\ell}-\frac{1}{2}\right) = \frac{1}{2},
\end{align*}
and type $k$ would be eliminated. This implies the following upper bound on $m^*_{\ell, k}$:
\begin{equation}\label{eq:boundm_ETCUk}
    m^*_{\ell,k}\leq \min\left(n,8\left(\frac{\lambda_k+\lambda_\ell}{\lambda_k-\lambda_\ell}\right)^2\log\left(2n^2K^3\right)\right).
\end{equation}

On the other hand, notice that under the good event $\Ecal$, a type $\ell$ will never be eliminated due to a type $k$ of greater expected duration $\lambda_k\ge \lambda_{\ell}$, since
\begin{align*}
    \hat{r}^{m'}_{k,\ell} - \delta^{(m',n)} 
    \overset{\text{Under } \Ecal}{\le} \left(\EE[r^{m'}_{\ell,k}] +\delta^{(m',n)}\right)  - \delta^{(m',n)}
    =\frac{\lambda_\ell}{\lambda_k+\lambda_\ell} 
    \le \frac{1}{2}.
\end{align*}

We decompose the run of the algorithm into (up to $K$) phases. For each $\ell \in [K]$, we call phase $\ell$ the iterations at which jobs of type $\ell$ are the jobs with the smallest mean still not terminated. Note that during phase $\ell$, job type $\ell$ is always active, as the contrary would mean event $\Ecal$ does not hold. This implies that the number of jobs of any type $k>\ell$ computed during phase $\ell$ is lower than $m^*_{\ell, k}$.
We have the following bound:

\begin{align*}
    (i) & = \sum_{\ell=1}^K \sum_{k=\ell+1}^K\sum_{(i,j) \in [n]^2}( \lambda_k-\lambda_\ell)\EE\sbr{\ind{P_i^k \text{ computed before } P_j^\ell}\vert \Ecal} \nonumber\\
    &\overset{(1)}{\leq} \sum_{\ell=1}^K \sum_{k=\ell+1}^K\sum_{(i,j) \in [n]^2}( \lambda_k-\lambda_\ell)\EE\sbr{\ind{P_i^k \text{ computed before phase } \ell+1}\vert \Ecal} \\ 
    &\leq  \sum_{\ell=1}^K \sum_{k=\ell+1}^K\sum_{i \in [n]}n( \lambda_k-\lambda_\ell)\EE\sbr{\sum_{o=1}^\ell\ind{P_i^k \text{ computed during phase } o}\vert \Ecal} \\ 
    &\overset{(2)}{\leq}  \sum_{\ell=1}^K \sum_{k=\ell+1}^K\sum_{o=1}^\ell \EE [m^*_{o,k}\vert \Ecal]n( \lambda_k-\lambda_\ell) \\ 
    &\leq  \sum_{\ell=1}^K \sum_{k=\ell+1}^K\sum_{o=1}^\ell\EE [m^*_{o,k}\vert \Ecal]n( \lambda_k-\lambda_o) \\
    & \overset{(3)}{=} \sum_{k=1}^K \sum_{o=1}^{k-1}\sum_{l=o}^{k-1}\EE [m^*_{o,k}\vert \Ecal]n( \lambda_k-\lambda_o) \\
    & =  \sum_{k=1}^K\sum_{o=1}^{k-1} \EE [m^*_{o,k}\vert \Ecal]n(k-o)( \lambda_k-\lambda_o) \\
     &\overset{(4)}{\leq} \sum_{k=1}^K\sum_{\ell=1}^{k-1} \EE [m^*_{\ell,k}\vert\Ecal]n(K-\ell)( \lambda_k-\lambda_\ell) \\
    & \overset{(5)}{\leq}  \sum_{k=1}^K \sum_{\ell= 1}^{k-1}  \min\left(n,8\left(\frac{\lambda_k+\lambda_\ell}{\lambda_k-\lambda_\ell}\right)^2\log\left(2n^2K^3\right)\right)n(K-\ell) (\lambda_k - \lambda_\ell). \\
\end{align*}
$(1)$ is since by the beginning of phase $\ell+1$, all jobs of type $\ell$ were completed. $(2)$ is since during phase $o$, the $o^{th}$ type was not eliminated, so there cannot be more than $m^*_{o,k}$ jobs of type $k$ in this phase. In $(3)$, we changed the summation order and in $(4)$, we replaced $o\to\ell$. Finally, $(5)$ is due to the bound of \cref{eq:boundm_ETCUk}, which holds under $\Ecal$.

Next, for any $\lambda_k \geq \lambda_\ell$, we have: 
\[
(\lambda_k - \lambda_\ell) \min\left(n,8\left(\frac{\lambda_k+\lambda_\ell}{\lambda_k-\lambda_\ell}\right)^2\log\left(2n^2K^3\right)\right)\leq  (\lambda_k+ \lambda_\ell) \sqrt{8n \log(2 n^2K^3)},
\]
since $\min\cbr{a,b}\le\sqrt{ab}$ for any $a,b\ge0$.
This implies that
\begin{align*}
	(i) &\le \sum_{k=1}^K \sum_{\ell= 1}^{k-1}  n(K-\ell)(\lambda_k+ \lambda_\ell) \sqrt{8n \log(2 n^2 K^3)}  \\
	&=   \sum_{k=1 }^K\left[\frac{1}{2}(k-1)(2K-k)+(K-k)^2\right]\lambda_k n \sqrt{8n \log(2 n^2 K^3)}.\\
\end{align*}
Substituting this and the bound of \cref{eq:etcu term ii} into the decomposition of \cref{eq:etcu decomposition} gives the first bound of the proposition. 

The second bound is obtained by upper bounding:
\[
(\lambda_k - \lambda_\ell) \min\left(n,8\left(\frac{\lambda_k+\lambda_\ell}{\lambda_k-\lambda_\ell}\right)^2\log\left(2n^2K^3\right)\right)\leq  8\frac{\left(\lambda_k+\lambda_\ell\right)^2}{\lambda_k-\lambda_\ell}\log\left(2n^2K^3\right),
\]
\end{proof}

\subsection{Upper bound for UCB-U} 
\label{sec:ucb-u-analysis}

\begin{proposition}\label{app:bound_UCBU_full}
    The expected cost of $\UCBU$ is upper bounded by:
    \begin{align*}
    &\EE[C_{\UCBU}] 
    \leq \EE[C_{\FTPP}] + n (K-1)\sqrt{3n\ln\rbr{2n^2K^2}} \sum_{k=1}^K\lambda_k + \frac{2}{n}\EE[C_{\OPT}],
    \end{align*}
and:
    \begin{align*}
\EE[C_{\UCBU}] 
    \leq \EE[C_{\FTPP}] + \sum_{\ell=1}^K \sum_{k=\ell+1}^K \frac{(\lambda_k+\lambda_{\ell})^2}{\lambda_k-\lambda_{\ell}}  3n\ln\rbr{2n^2K^2}+ \frac{2}{n} \EE[C_{\OPT}].
\end{align*}
\end{proposition}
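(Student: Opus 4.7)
My starting point is the non-preemptive cost decomposition of \Cref{app:lemma:cost-non-preemptive}, which reduces the task to controlling, for each pair $\ell<k$, the sum $\sum_{(i,j)\in[n]^2}\EE[\ind{P_i^k\text{ computed before }P_j^\ell}]$. Since $\UCBU$ is non-preemptive, every type-$\ell$ job is preceded by at most $M_{k,\ell}$ type-$k$ jobs, where $M_{k,\ell}$ is the total number of type-$k$ pulls performed while type $\ell$ still has an unfinished job. Hence the inner sum is at most $n\cdot M_{k,\ell}$, and it suffices to control $\EE[M_{k,\ell}]$ for each pair.

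Next, I would introduce the LCB-validity event
\[
\Ecal = \bigl\{\forall k\in[K],\,\forall m\in[n]:\ \underline{\lambda}_k^m \le \lambda_k\bigr\}.
\]
Because $\tfrac{2}{\lambda_k}\sum_{i\le m} X_i^k\sim\chi^2_{2m}$, the quantile definition of the index gives $\PP(\underline{\lambda}_k^m>\lambda_k)=\tfrac{1}{2n^2K^2}$, and a union bound over the $nK$ pairs $(k,m)$ yields $\PP(\bar\Ecal)\le \tfrac{1}{2nK}$. Exactly as in the argument for $\ETCU$ (see the derivation of term $(ii)$ in \Cref{app:fullboundETCU}), the contribution of $\bar\Ecal$ to the expected excess cost is bounded crudely by $n^2K\sum_k\lambda_k\cdot \PP(\bar\Ecal)$, which together with \cref{eq: optimal cost lower bound} produces the $\tfrac{2}{n}\EE[C_{\OPT}]$ term of the proposition.

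The heart of the proof is the analysis on $\Ecal$. Whenever $\UCBU$ selects type $k>\ell$ while type $\ell$ is active, the selection rule combined with $\Ecal$ forces
\[
\underline{\lambda}_k^{m_k}\le \underline{\lambda}_\ell^{m_\ell}\le \lambda_\ell ,
\]
which, after unpacking the index definition, becomes $\bar X_k\le \lambda_\ell \chi^2_{2m_k}(1-\delta)/(2m_k)$ with $\delta=1/(2n^2K^2)$. This is a sizeable lower deviation of $\bar X_k$ from $\lambda_k$; a Laurent--Massart tail bound applied to $\chi^2_{2m_k}$ (the law of $2m_k\bar X_k/\lambda_k$), combined with a union bound over $m\in[n]$ absorbed into a slightly enlarged good event, implies that this can only occur for
\[
m_k\le \min\!\left(n,\ \tfrac{3\log(2n^2K^2)(\lambda_k+\lambda_\ell)^2}{(\lambda_k-\lambda_\ell)^2}\right)=:M^*_{k,\ell},
\]
so that $M_{k,\ell}\le M^*_{k,\ell}$ on the good event.

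Plugging back, the $(\ell,k)$ contribution to the excess cost is at most $(\lambda_k-\lambda_\ell)\cdot n\cdot M^*_{k,\ell}$; keeping the gap-dependent form directly yields the second displayed bound, while applying $\min(a,b)\le\sqrt{ab}$ term-wise and rearranging as in the $\ETCU$ proof (using the identity $\sum_{\ell<k}(\lambda_k+\lambda_\ell)=(K-1)\sum_k\lambda_k$) produces the first. The main obstacle is the chi-squared concentration step: unlike the Hoeffding-based elimination used in $\ETCU$, the $\UCBU$ index is defined through a deterministic quantile rather than an additive penalty, so Laurent--Massart tails must be inverted carefully to obtain a clean $(\lambda_k+\lambda_\ell)^2/(\lambda_k-\lambda_\ell)^2$ dependence and keep the log factor at $\log(2n^2K^2)$, while maintaining the uniform-in-$m$ control needed to stitch everything into a single good event with probability $1-O(1/(nK))$.
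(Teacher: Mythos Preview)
Your approach is essentially the paper's: same cost decomposition, same good-event/bad-event split with the $\tfrac{2}{n}\EE[C_{\OPT}]$ term handled exactly as in the $\ETCU$ proof, same Laurent--Massart inversion to reach the $M^*_{k,\ell}=\min\bigl(n,\tfrac{3\log(2n^2K^2)(\lambda_k+\lambda_\ell)^2}{(\lambda_k-\lambda_\ell)^2}\bigr)$ threshold, and the same final algebra (the $\min(a,b)\le\sqrt{ab}$ step and the identity $\sum_{\ell<k}(\lambda_k+\lambda_\ell)=(K-1)\sum_k\lambda_k$).

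The one packaging difference worth noting: you define the good event as one-sided ($\underline{\lambda}_k^m\le\lambda_k$), observe that selecting $k$ over $\ell$ forces a lower deviation of $\bar X_k$, and then propose to control that deviation by a separate Laurent--Massart tail argument ``absorbed into a slightly enlarged good event.'' The paper instead takes the two-sided event $\{\underline{\lambda}_k^m\le\lambda_k\le\overline{\lambda}_k^m\}$ from the outset and exploits that $\underline{\lambda}_k^m/\overline{\lambda}_k^m$ is the \emph{deterministic} quantile ratio $\chi^2_{2m}(\tfrac{1}{2n^2K^2})/\chi^2_{2m}(1-\tfrac{1}{2n^2K^2})$; on the good event this turns $\underline{\lambda}_k^m<\lambda_\ell$ directly into $\lambda_k\cdot(\text{ratio})<\lambda_\ell$, so the bound on $m$ becomes pure algebra on quantiles with no further randomness to track. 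Your ``slightly enlarged good event'' is exactly the paper's two-sided event, so the arguments converge; the paper's framing just avoids having to reopen the probabilistic bookkeeping midway.
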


\paragraph{Concentration of exponential distribution}
If $X \sim \Ecal(\lambda)$, then $\frac{2}{\lambda}X \sim \Ecal(2) = \chi_2^2$ ($\chi^2$ with $2$ degrees of freedom).
It follows that if $\forall i \in [m], X_i \sim \Ecal(\lambda)$, then $\frac{2}{\lambda} \sum_{i=1}^m X_i \sim \chi^2_{2m}$. Denote $\chi^2_{2m}(\alpha)$ the $\alpha$-th percentile, we have with probability $1 - \delta$ that
\[
\frac{2 \sum_{i} X_i} {\chi^2_{2m}(1 - \delta / 2)} \leq \lambda \leq \frac{2 \sum_{i} X_i} {\chi^2_{2m}(\delta / 2)}
\]

Setting $\delta = \frac{1}{n^2K^2}$, we get the following formula for a lower bound:

\begin{equation}
  \label{app:ucbu:lb}
\underline{\lambda}_k^{m} = \frac{2 \sum_{i=1}^{m} X_i^k}{\chi^2_{2 m}(1 - \frac{1}{2 n^2K^2})}
\end{equation}

and another formula for the upper bound

\begin{equation*}
\overline{\lambda}_k^{m} = \frac{2 \sum_{i=1}^{m} X_i^k}{\chi^2_{2 m}(\frac{1}{2 n^2K^2})}
\end{equation*}

\paragraph{}
If a job $k$ is wrongly scheduled before a job of type $\ell$, then the decision rule is misleading meaning that:
\[
\underline{\lambda}_k^{m_k} = \frac{2 \sum_{i=1}^{n_k} X_i^k}{\chi^2_{2 n_k}(1 - \frac{1}{2 n^2K^2})} < \frac{2 \sum_{i=1}^{n_\ell} X_i^\ell}{\chi^2_{2 n_\ell}(1 - \frac{1}{2 n^2K^2})} = \underline{\lambda}_\ell^{m_\ell}
\]
even though $\lambda_\ell < \lambda_k$.

\paragraph{Bounding the cost}
From \Cref{app:lemma:cost-non-preemptive}, the cost of any non preemptive algorithm $A$ writes
  \begin{align}
    \EE[C_{A}] &= \EE[C_{\FTPP}] + \sum_{\ell=1}^K \sum_{k=\ell+1}^K\sum_{(i,j) \in [n]^2}(  \lambda_k-\lambda_\ell)\EE\sbr{\ind{P_i^k \text{ computed before } P_j^\ell}} \\
  \end{align}

  Let us then introduce the GOOD event which is:
  \[
    \mathcal{E} = \{ \forall i \in [n], \forall k \in [K], \underline{\lambda}_k^{i} \leq \lambda_k \leq \overline{\lambda}_k^{i} \}
  \]
  With a union bound, it is easy to show that $\mathcal{E}$ holds with probability $1 - \frac{1}{nK}$ and that the contradictory event $\overline{\mathcal{E}}$ happens with probability $\frac{1}{nK}$.

Using the same method as in the proof of Proposition~\ref{app:fullboundETCU} (the decomposition using $\mathcal{E}$ and $\overline{\mathcal{E}}$ as done in \Cref{eq:etcu decomposition} and the derivation of \Cref{eq:etcu term ii}), we can upper bound the cost of UCB-U as:
\begin{align*}
    \EE[C_{UCB-U}] &\leq \EE[C_{\FTPP}] + \underbrace{\sum_{\ell=1}^K \sum_{k=\ell+1}^K\sum_{(i,j) \in [n]^2}(  \lambda_k-\lambda_\ell)\EE\sbr{\ind{P_i^k \text{ computed before } P_j^\ell} | \mathcal{E}}}_{(i)} \\& \enspace \enspace \enspace+  \EE[C_{OPT}] \underbrace{4K P(\overline{\mathcal{E}})}_{4 / n}
\end{align*}

Furthermore, $P^k_i$ computed before $P^\ell_j$ implies that  $\underline{\lambda}_k^{i} < \underline{\lambda}_\ell^{j}$ and therefore
\[
(i) \leq \sum_{\ell=1}^K \sum_{k=\ell+1}^K\sum_{(i,j) \in [n]^2}(  \lambda_k-\lambda_\ell)\PP( \underline{\lambda}_k^{i} < \underline{\lambda}_\ell^{j} \vert \mathcal{E})
\]
Under $\Ecal$, we have $\underline{\lambda}_\ell^{j} \le \lambda_\ell$. Moreover, it holds that $\overline{\lambda}_k^{i} \ge \lambda_k$, and by the definition of $\underline{\lambda}_k^{i}$, and $\overline{\lambda}_k^{i}$,
\begin{align*}
     \underline{\lambda}_k^{i} = \frac{ \chi^2_{2i}(\frac{1}{2 n^2K^2})}{\chi^2_{2 i}(1 - \frac{1}{2 n^2K^2})} \overline{\lambda}_k^{i} \ge \frac{ \chi^2_{2i}(\frac{1}{2 n^2K^2})}{\chi^2_{2 i}(1 - \frac{1}{2 n^2K^2})} \lambda_k.
\end{align*}
Combined, under $\Ecal$ we can bound $\cbr{\underline{\lambda}_k^{i} < \underline{\lambda}_\ell^{j}} \subseteq\cbr{\lambda_k \frac{ \chi^2_{2i}(\frac{1}{2 n^2K^2})}{\chi^2_{2 i}(1 - \frac{1}{2 n^2K^2})} < \lambda_{\ell}}$ and therefore write
\begin{align*}
(i) &\leq \sum_{\ell=1}^K \sum_{k=\ell+1}^K\sum_{(i,j) \in [n]^2}\ind{\lambda_k \frac{ \chi^2_{2i}(\frac{1}{2 n^2K^2})}{\chi^2_{2 i}(1 - \frac{1}{2 n^2K^2})} < \lambda_{\ell}} (\lambda_k - \lambda_\ell) \\
&= \sum_{\ell=1}^K \sum_{k=\ell+1}^K n \max \cbr{ i \in [n], \lambda_k \frac{ \chi^2_{2i}(\frac{1}{2 n^2K^2})}{\chi^2_{2 i}(1 - \frac{1}{2 n^2K^2})} < \lambda_{\ell} } (\lambda_k-\lambda_\ell)
\end{align*}

So finally we have
\begin{equation}
  \EE[C_{\UCBU}] \leq \EE[C_{\FTPP}] + \sum_{\ell=1}^K \sum_{k=\ell+1}^K n \max \cbr{ i \in [n], \lambda_k \frac{ \chi^2_{2i}(\frac{1}{2 n^2K^2})}{\chi^2_{2 i}(1 - \frac{1}{2 n^2K^2})} < \lambda_{\ell} } (\lambda_k-\lambda_\ell) + \frac{2}{n} \EE[C_{\OPT}]   \label{eq: ucb-u bound}
\end{equation}

\paragraph{Bounding the ratio.} We now focus on bounding the maximum term in \Cref{eq: ucb-u bound}. 
By Lemma 1 of \citep{laurent2000adaptive}, if $U\sim \chi_D^2$, then
\begin{align}
    \PP\rbr{U\ge D + 2\sqrt{Dx} + 2x}\le \exp(-x),\qquad\text{and}\qquad
     \PP\rbr{U\le D - 2\sqrt{Dx}}\le \exp(-x),
\end{align}
and in particular,
\begin{align}
    \chi_D^2(\delta) \ge D-2\sqrt{D\ln\frac{1}{\delta}},\qquad\text{and}\qquad
     \chi_D^2(1-\delta) \le D+2\sqrt{D\ln\frac{1}{\delta}} + 2\ln\frac{1}{\delta}.
\end{align}
Thus, for any $i\in[n]$, a necessary condition to the inequality $\lambda_k \frac{ \chi^2_{2i}(\frac{1}{2 n^2K^2})}{\chi^2_{2 i}(1 - \frac{1}{2 n^2K^2})} < \lambda_{\ell}$ is
\begin{align*}
    &\frac{2i-2\sqrt{2i\ln\rbr{2n^2K^2}}}{2i+2\sqrt{2i\ln\rbr{2n^2K^2}} + 2\ln\rbr{2n^2K^2}}<\frac{\lambda_{\ell}}{\lambda_k} \\ 
    \Rightarrow &\rbr{1-\frac{\lambda_{\ell}}{\lambda_k}}i-\sqrt{2\ln\rbr{2n^2K^2}}\rbr{1+\frac{\lambda_{\ell}}{\lambda_k}}\sqrt{i}-\frac{\lambda_{\ell}}{\lambda_k}\ln\rbr{2n^2K^2}<0 \\
    \Rightarrow &\rbr{\lambda_k-\lambda_{\ell}}i-\sqrt{2\ln\rbr{2n^2K^2}}\rbr{\lambda_k+\lambda_{\ell}}\sqrt{i}-\lambda_{\ell}\ln\rbr{2n^2K^2}<0 \\
    \Rightarrow &\sqrt{i} <\frac{\sqrt{2\ln\rbr{2n^2K^2}}\rbr{\lambda_k+\lambda_{\ell}} + \sqrt{2\ln\rbr{2n^2K^2}\rbr{\lambda_k+\lambda_{\ell}}^2 +4\ln\rbr{2n^2K^2}\lambda_{\ell}\rbr{\lambda_k-\lambda_{\ell}}}}{2\rbr{\lambda_k-\lambda_{\ell}}} \\
    \Rightarrow &\sqrt{i} <\sqrt{2\ln\rbr{2n^2K^2}}\frac{\rbr{\lambda_k+\lambda_{\ell}} + \sqrt{\rbr{\lambda_k+\lambda_{\ell}}^2 +2\lambda_{\ell}\rbr{\lambda_k-\lambda_{\ell}}}}{2\rbr{\lambda_k-\lambda_{\ell}}}
\end{align*}
Now, using the fact that $2\lambda_\ell\le \lambda_\ell+\lambda_k$, we get the simplified bound 
\begin{align}
    \sqrt{i} <\sqrt{2\ln\rbr{2n^2K^2}}\frac{\rbr{\lambda_k+\lambda_{\ell}} + \sqrt{2\lambda_k\rbr{\lambda_k+\lambda_{\ell}}}}{2\rbr{\lambda_k-\lambda_{\ell}}}
    \le \sqrt{2\ln\rbr{2n^2K^2}}\rbr{1+\sqrt{2}}\frac{\lambda_k+\lambda_{\ell}}{2\rbr{\lambda_k-\lambda_{\ell}}}\enspace,
\end{align}
or $i\le 3\ln\rbr{2n^2K^2}\rbr{\frac{\lambda_k+\lambda_{\ell}}{\lambda_k-\lambda_{\ell}}}^2$. Since we also know that $i\in[n]$, we can write
\begin{align*}
\max \cbr{ i \in [n], \lambda_k \frac{ \chi^2_{2i}(\frac{1}{2 n^2K^2})}{\chi^2_{2 i}(1 - \frac{1}{2 n^2K^2})} < \lambda_{\ell} }
&\leq \min\cbr{3\ln\rbr{2n^2K^2}\rbr{\frac{\lambda_k+\lambda_{\ell}}{\lambda_k-\lambda_{\ell}}}^2,n} \\
& \leq \sqrt{3n\ln\rbr{2n^2K^2}}\frac{\lambda_k+\lambda_{\ell}}{\lambda_k-\lambda_{\ell}},
\end{align*}
where the second inequality is since $\min\cbr{a,b}\leq\sqrt{ab}$ for $a,b>0$. Substituting back into \Cref{eq: ucb-u bound}, we get the first bound in the proposition:
\begin{align*}
    \EE[C_{\UCBU}] 
    &\leq \EE[C_{\FTPP}] + \sum_{\ell=1}^K \sum_{k=\ell+1}^K n \sqrt{3n\ln\rbr{2n^2K^2}}\frac{\lambda_k+\lambda_{\ell}}{\lambda_k-\lambda_{\ell}} (\lambda_k-\lambda_\ell) + \frac{2}{n} \EE[C_{\OPT}]   \\
    &= \EE[C_{\FTPP}] + n \sqrt{3n\ln\rbr{2n^2K^2}}\sum_{\ell=1}^K \sum_{k=\ell+1}^K(\lambda_k+\lambda_{\ell}) + \frac{2}{n} \EE[C_{\OPT}]   \\
    & = \EE[C_{\FTPP}] + n \sqrt{3n\ln\rbr{2n^2K^2}} \rbr{\sum_{k=1}^K(k-1)\lambda_k + \sum_{\ell=1}^K(K-\ell)\lambda_{\ell}} + \frac{2}{n} \EE[C_{\OPT}]   \\
    & = \EE[C_{\FTPP}] + n (K-1)\sqrt{3n\ln\rbr{2n^2K^2}} \sum_{k=1}^K\lambda_k + \frac{2}{n} \EE[C_{\OPT}]  .
\end{align*}
The second bound is obtained through the upper bound:
\[
(\lambda_k-\lambda_\ell)\min\cbr{3\ln\rbr{2n^2K^2}\rbr{\frac{\lambda_k+\lambda_{\ell}}{\lambda_k-\lambda_{\ell}}}^2,n}\leq 3\ln\rbr{2n^2K^2}\rbr{\frac{(\lambda_k+\lambda_{\ell})^2}{\lambda_k-\lambda_{\ell}}}.
\]
\subsection{Lower bounds for Non-Preemptive Algorithms}

\subsubsection{Small Differences (\Cref{prop:dependency in n}) }\label{app: lower bound small diff}

\begin{proof}[Proof of \Cref{prop:dependency in n}]
Assume $K=2$ and take any non-preemptive algorithm $A$. Call $P^1_i$ the $i$-th job of type 1 and $P^2_j$ the $j$-th job of type 2. 
According to \Cref{app:lemma:cost-non-preemptive}, $A$ has a cost
  \[
\EE[C_{A}] = \EE[C_{\FTPP}] + ( \lambda_2-\lambda_1)\EE\sbr{\sum_{(i,j) \in [n]^2} \ind{P_j^2 \text{ computed before } P_i^1}}
  \]
if $\lambda_2 > \lambda_1$ (the role of $\lambda_2$ and $\lambda_1$ are reversed if $\lambda_2 < \lambda_1$).

We then follow the same approach as in chapter 15 in \cite{lattimore_szepesvari_2020}. Consider situation 1 where $\lambda_1 = a, \lambda_2 = b$ and situation 2 where $\lambda_1 = b$ and $\lambda_2 = a$ with $a < b$ and assumes that the adversary chooses the situation based on the algorithm $A$. Intuitively, if $A$ tends to complete more of jobs of type $1$ before jobs of type $2$, the adversary will decide that $\lambda_1 > \lambda_2$ (situation 2) otherwise, it will choose $\lambda_2 > \lambda_1$ (situation 1).
Call $\PP_{\nu_1}$ the joint probability over the scheduling decisions and job sizes in situation 1 following the policy prescribed by algorithm $A$ and $\PP_{\nu_2}$ the same probability in situation 2.
Call $P_{a_t}(x_t)$ the probability that the job of type $a_t$ chosen at time $t$ is of size $x_t$. 
Calling $KL$ the KL divergence, we have following the Lemma 15.1 in~\cite{lattimore_szepesvari_2020}:
$KL(\PP_{\nu_1}, \PP_{\nu_2}) = n (KL(X_a, X_b) + KL(X_b, X_a))$ where $X_a$ is an exponential random variable of expectation $a$ and $X_b$ is an exponential random variable of expectation $b$.

Note right away that $KL(X_a, X_b) = \frac{a}{b} - 1 - \log(\frac{a}{b})$ \citep[\eg][Example 4.2.1]{calin2014geometric},
 therefore $KL(X_a, X_b) + KL(X_b, X_a) =  \frac{a}{b} - 2  + \frac{b}{a} = \frac{(b - a)^2}{ab}$ so 
\[
KL(\PP_{\nu_1}, \PP_{\nu_2}) \leq n \frac{(b - a)^2}{a b}.
\]
The cost of algorithm $A$ in situation 1 is lower bounded as:
\[
\EE_{\nu_1}[C_{A}] \geq \EE_{\nu_1}[C_{\FTPP}] + (b - a) \EE_{\nu_1}\sbr{\ind{\sum_{(i,j) \in [n]^2} \ind{P_j^2 \text{ computed before } P_i^1} \geq n^2 / 2}} n^2 / 2.
\]
The cost of algorithm $A$ in situation 2 is lower bounded as:
\[
\EE_{\nu_1}[C_{A}] \geq \EE_{\nu_2}[C_{\FTPP}] + (b - a) \EE_{\nu_2}\sbr{\ind{\sum_{(i,j) \in [n]^2} \ind{P_i^1 \text{ computed before } P_j^2} > n^2 / 2}} n^2 / 2.
\]
Introduce the event $E = \ind{\sum_{(i,j) \in [n]^2} \ind{P_j^2 \text{ computed before } P_i^1} \geq n^2 / 2}$, we have that
\begin{align*}
\EE[C_A] &= \max_{\nu \in \{ \nu_1, \nu_2 \}} \EE_{\nu}[C_A] \\
&\geq  \frac{\EE_{\nu_1}[C_A] + \EE_{\nu_2}[C_A]}{2} \\
&\geq \frac{\EE_{\nu_1}[C_{\FTPP}] + \EE_{\nu_2}[C_{\FTPP}]}{2} + (b - a) n^2 / 2 \frac{\PP_{\nu_1}(E) + \PP_{\nu_2}(\overline{E})}{2}
\end{align*}
First, let us notice that 
\[
\EE[C_{\FTPP}]= \frac{\EE_{\nu_1}[C_{\FTPP}] + \EE_{\nu_2}[C_{\FTPP}]}{2}
\]
Then, using Bretagnolle–Huber inequality (Th 14.2 in~\cite{lattimore_szepesvari_2020}), we get $\PP_{\nu_1}(E) + \PP_{\nu_2}(\overline{E}) \geq \frac12 \exp(-KL(\PP_{\nu_1}, \PP_{\nu_2}))$ and since 
$KL(\PP_{\nu_1}, \PP_{\nu_2}) \leq n \frac{(\lambda_2 - \lambda_1)^2}{\lambda_1 \lambda_2}$, we have

\[
\EE[C_A] \geq \EE[C_{\FTPP}] + (b - a) n^2 / 2 \frac{\exp(-n\frac{(b - a)^2}{ab})}{4}
\]

At this stage, we can rewrite the equation assuming $\lambda_2 \geq \lambda_1$ and so that we get
\begin{align*}
\EE[C_A] &\geq \EE[C_{\FTPP}] + (\lambda_2 - \lambda_1) n^2 \frac{\exp(-n\frac{(\lambda_2 - \lambda_1)^2}{\lambda_1 \lambda_2})}{8},
\end{align*}
which proves the first result of the proposition. In particular, taking $\lambda_2 \leq \lambda_1\rbr{1 + \frac{1}{\sqrt{n}}}$ gives its second result
\begin{align*}
    \EE[C_A] - \EE[C_{\FTPP}] 
     &\geq \lambda_1 n\sqrt{n}  \frac{\exp(-n\frac{1/n}{(1+1/\sqrt{n})^2})}{8} \\
    &\geq \lambda_1n\sqrt{n} \frac{e^{-1/4}}{8} \\
    & \geq (\lambda_1+\lambda_2)n\sqrt{n} \frac{e^{-1/4}}{24}.
\end{align*}
\end{proof}

\subsubsection{Large Differences}
\label{app: lower bound large diff}
\begin{proposition}
    For any non-preemptive algorithm, there exists a problem instance with expected type durations of $\lambda_1\le\lambda_2\dots\le \lambda_K$ such that  
    \begin{align*}
    \EE[C_A] &\geq \EE[C_{\FTPP}] + \frac{n}{K}\sum_{k=1}^K (2k-K-1)\lambda_k.
\end{align*}
In particular, for $K=2$ and $\lambda_2\ge 3\lambda_1$, it holds that 
\begin{align*}
    \EE[C_A] &\geq \EE[C_{\FTPP}] + \frac{n}{4}(\lambda_1+\lambda_2),
\end{align*}
\end{proposition}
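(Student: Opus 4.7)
My plan is to start from the cost decomposition in \Cref{app:lemma:cost-non-preemptive}, which writes $\EE[C_A]-\EE[C_{\FTPP}] = \sum_{k>\ell}\sum_{i,j}(\lambda_k - \lambda_\ell)\, \EE[\ind{e_i^k \le b_j^\ell}]$, and lower bound the right-hand side by isolating only the contribution of the very first job scheduled by $A$. Because $A$ is non-preemptive, once its first job is started it must be completed before any other job begins; hence every other job is unavoidably delayed, and the indicator $\ind{e_1^{J_1} \le b_j^\ell}$ equals $1$ for every $\ell \neq J_1$ and every $j \in [n]$, where $J_1$ denotes the type of the first scheduled job. Keeping only these pairs already gives a lower bound of the form $\EE\big[\sum_{\ell:\lambda_\ell < \lambda_{J_1}} n(\lambda_{J_1} - \lambda_\ell)\big]$.

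Next, I will handle the random variable $J_1$ by an adversarial symmetry argument, which I expect to be the main conceptual step. At time $0$, $A$ has made no observations, so the distribution $p$ of $J_1$ depends only on $A$'s own internal randomness and is independent of how the sorted means $\lambda_1 \le \dots \le \lambda_K$ are assigned to type labels. The adversary is free to pick any permutation $\sigma$ mapping type labels to sorted positions. Writing $c_k := \sum_{\ell<k} n(\lambda_k - \lambda_\ell)$, the first-job excess contribution becomes $\sum_j p(j)\, c_{\sigma(j)}$, and averaging this over all $K!$ permutations collapses it to $\tfrac{1}{K}\sum_k c_k$ (since $\sum_j p(j)=1$), so the worst-case $\sigma$ achieves at least this average. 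The delicate point to state cleanly is precisely that the adversary picks the labeling after the algorithm is fixed, while $A$'s first move remains label-oblivious; this is what licenses the uniform average over $\sigma$.

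What remains is an arithmetic expansion. Swapping the order of summation yields
\[
\sum_k \sum_{\ell<k}(\lambda_k-\lambda_\ell) \;=\; \sum_k (k-1)\lambda_k - \sum_\ell (K-\ell)\lambda_\ell \;=\; \sum_k (2k-K-1)\lambda_k,
\]
giving the main bound $\EE[C_A] - \EE[C_{\FTPP}] \ge \tfrac{n}{K}\sum_k (2k-K-1)\lambda_k$. For the $K=2$ corollary, this specializes to $\tfrac{n}{2}(\lambda_2 - \lambda_1)$, and the elementary equivalence $\lambda_2 \ge 3\lambda_1 \iff 2(\lambda_2-\lambda_1) \ge \lambda_1 + \lambda_2$ immediately yields the claimed $\tfrac{n}{4}(\lambda_1+\lambda_2)$ bound.
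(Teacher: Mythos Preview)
Your proof is correct and follows essentially the same route as the paper: start from the cost decomposition of \Cref{app:lemma:cost-non-preemptive}, lower bound by retaining only the contribution of the very first scheduled job, and then exploit that the first-job distribution is oblivious to the label-to-mean assignment. The only minor difference is in the adversarial step: the paper picks the specific permutation that aligns increasing $p_k$ with increasing $\lambda_k$ and then invokes a monotonicity (Chebyshev-sum) argument to show $\sum_k p_k c_k \ge \tfrac{1}{K}\sum_k c_k$, whereas your averaging over all $K!$ permutations reaches the same bound more directly.
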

Let $p_k$ be the probability that a non-preemptive algorithm completes a job of type $k$ at its first round. Notice that this distribution cannot depend on the expected duration of any of the types, since no data was gathered. Thus, types can be arbitrarily ordered without affecting this distribution. In particular, we assume w.l.o.g. that $p_1\le p_2\le\dots p_K$ and choose a problem instance where job types are ordered in an increasing duration $\lambda_1\le\lambda_2\le\dots\lambda_K$. Then, the expected cost of the algorithm can be bounded according to \Cref{app:lemma:cost-non-preemptive}, by
\begin{align*}
    \EE[C_{A}] 
    &= \EE[C_{\FTPP}] + \sum_{\ell=1}^K \sum_{k=\ell+1}^K\sum_{(i,j) \in [n]^2}( \lambda_k-\lambda_\ell)\EE\sbr{\ind{P_i^k \text{ computed before } P_j^\ell}} \\
    & \ge \EE[C_{\FTPP}] + \sum_{\ell=1}^K \sum_{k=\ell+1}^K\sum_{j \in [n]}( \lambda_k-\lambda_\ell)\EE\sbr{\ind{P_1^k \text{ computed before } P_j^\ell}} \\
    &\geq \EE[C_{\FTPP}] + n\sum_{\ell=1}^K \sum_{k=\ell+1}^K( \lambda_k-\lambda_\ell)\EE\sbr{\ind{P_1^k \text{ was the first job}}} \\
    & = \EE[C_{\FTPP}] + n\sum_{\ell=1}^K \sum_{k=\ell+1}^K( \lambda_k-\lambda_\ell)p_k \\
    & =  \EE[C_{\FTPP}] + n\sum_{k=1}^K p_k\sum_{\ell=1}^{k-1}( \lambda_k-\lambda_\ell).
\end{align*}
Now, one can be easily convinced that between all probability vectors with non-decreasing components, this bound is minimized by the uniform distribution $p_k=1/K$. To see this, observe that the sum $\sum_{\ell=1}^{k-1}( \lambda_k-\lambda_\ell)$ increases with $k$. Therefore, if $p$ is non-uniform, then $p_K>1/K$, and there would exist a coordinate $k<K$ to which we could move weight from $p_K$, which would decrease the bound. 

Substituting $p_k=1/K$ we then get 
\begin{align*}
    \EE[C_{A}] 
    &\geq  \EE[C_{\FTPP}] + \frac{n}{K}\sum_{k=1}^K\sum_{\ell=1}^{k-1}( \lambda_k-\lambda_\ell)\\
    &= \EE[C_{\FTPP}] + \frac{n}{K}\sum_{k=1}^K (2k-K-1)\lambda_k.
\end{align*}
In particular, if $K=2$, we get 
\begin{align*}
    \EE[C_{A}] 
    &\geq \EE[C_{\FTPP}] + \frac{n}{2}(\lambda_2-\lambda_1)
\end{align*}
and, for $\lambda_2\ge 3\lambda_1$, we have $\lambda_2-\lambda_1\ge \frac{\lambda_1+\lambda_2}{2}$ and thus
\begin{align*}
    \EE[C_{A}] 
    &\geq \EE[C_{\FTPP}] + \frac{n}{4}(\lambda_1+\lambda_2).
\end{align*}


\clearpage

\section{Analysis of Preemptive Learning algorithms}\label{app:preemptive}

\subsection{Full Algorithmic Details}
\label{app: preemptive algs}
In this appendix, we present a full description of $\ETCRR$ and $\UCBRR$.

\begin{algorithm}[H]
\caption{Explore-Then-Commit-Round-Robin ($\ETCRR$)}
\label{algo:etc-rr-k}
\begin{algorithmic}[1]
    \STATE {\bfseries Input :} $n \geq 1$ (number of jobs of each type), $K \geq 2$ (number of types)
    \STATE For all pairs of different types $k, \ell$ initialize $\delta_{k, \ell} = 0$, $\hat{r}_{k, \ell} = 0$ and $h_{k, \ell} = 0$
    \STATE For all types $k$, set $c_{k} = 0$ 
    \REPEAT 
    \STATE $\mathcal{U}$ is the set of types with at least one remaining job
     \IF{$\mathcal{A}$ is empty}
        \STATE $\mathcal{A} = \{ \ell \in \mathcal{U}, \forall k \in \mathcal{U}, k \neq \ell , \enspace \hat{r}_{k, \ell} - \delta_{k, \ell} \leq 0.5 \}$
    \ENDIF
    \STATE Run jobs $ (P^{k}_{c_k+1})_{k \in \mathcal{A}}$ in parallel until a job finishes and denote $\ell$ the type of this job
    \STATE $c_\ell = c_\ell + 1$
    \FOR {$k \in \mathcal{A}, k \neq \ell$}
    \STATE $\beta_{ \ell, k} = \beta_{\ell, k} + 1$
    \STATE $\delta_{ \ell, k} = \delta_{ k, \ell}=\sqrt{\frac{\log(2 n^2K^4)}{2 (\beta_{\ell, k} + \beta_{k,\ell })}}$
    \STATE $\hat{r}_{\ell,k} = \frac{\beta_{ \ell,k}}{\beta_{k, \ell} + \beta_{ \ell, k }}$
    \STATE $\hat{r}_{k,\ell} = \frac{\beta_{ k,\ell}}{\beta_{k, \ell} + \beta_{ \ell, k }}$
    \IF{$\hat{r}_{\ell,k} - \delta_{\ell, k} \geq 0.5$}
    \STATE Remove $k$ from $\mathcal{A}$
    \ENDIF
    \IF{$\hat{r}_{k, \ell} - \delta_{k, \ell} \geq 0.5$ or $c_\ell=n$}
    \STATE Remove $\ell$ from $\mathcal{A}$
    \ENDIF
    \ENDFOR
    \UNTIL{$\mathcal{U}$ is empty}
\end{algorithmic}
\end{algorithm}

\begin{algorithm}[H]
\caption{Upper-Confidence-Bound-Round-Robin ($\UCBRR$)}
\label{algo:ucb-rr-k}
\begin{algorithmic}[1]
    \STATE {\bfseries Input :} $n \geq 1$ (number of jobs of each type), $K \geq 2$ (number of types), discretization step $\Delta$
    \REPEAT 
    \STATE $\mathcal{U}$ is the set of types with at least one remaining job
    \STATE Calculate type indices $u_k$ for all jobs $k\in\Ucal$ according to \cref{eq: ucbrr index}
    \STATE Choose type $\ell\in\argmax_{\ell\in\Ucal}u_{\ell}$
    \STATE Run a job of type $\ell$ for $\Delta$ time units
    \UNTIL{$\mathcal{U}$ is empty}
\end{algorithmic}
\end{algorithm}

\subsection{Cost Decomposition}
\label{app:cost decomp preemptive}
We start with a cost decomposition, which relates the performance of preemptive algorithms to the one of $\FTPP$. We limit ourselves to the natural family of preemptive algorithms that do not simultaneously run two tasks of the same type and is formally defined as follows.
\begin{definition}
Denote $b_i^{\ell}$ and $e_i^{\ell}$ the beginning and end dates of the computation of the $i^{th}$ job of type $\ell$. A \textbf{type-wise non-preemptive algorithm} is an algorithm that computes jobs of the same type one after another, i.e., $\forall i \in [n], \forall k \in [K], e_i^{\ell}\leq b_{i+1}^{\ell}$. 
\end{definition}
This property is very natural, as for exponential durations without knowledge of the real execution times, there is no advantage in simultaneously running two tasks of the same type. Specifically, all of our suggested algorithms fall under this definition.

For such algorithms, the cumulative cost can be compared to $\FTPP$ using the following lemma.
\begin{lemma}[Cost of type-wise non-preemptive algorithms]\label{app:lemma:cost-algo}
  Any type-wise non-preemptive algorithm $A$ has the following upper bound on its cost:
  \[
 \EE[C_{A}] \leq   \EE[C_{\FTPP}]+ \sum_{\ell=1}^K\sum_{k=\ell+1}^K \sum_{(i,j) \in [n]^2}(\lambda_k-\lambda_\ell)\EE\sbr{\ind{e_j^k< b_i^\ell} }+(K-1)n\sum_{\ell=1}^K\lambda_{\ell}.
  \]
\end{lemma}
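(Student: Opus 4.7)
The plan is to extend the pairwise cost decomposition of \Cref{app:lemma:cost-non-preemptive} to accommodate preemption, charging the difference to a controlled partial-work overhead. First I would write the cost as
\[
C_A \;=\; \sum_{(k,j)} P_j^k \;+\; \sum_{\{(k,j),(\ell,i)\}}\!\!T^A_{(k,j),(\ell,i)}, \qquad T^A_{ab}=D^A_{ab}+D^A_{ba},
\]
where $D^A_{ab}$ is the total work performed on job $a$ by time $e_b$. For $\FTPP$ this collapses to $T^{\FTPP}_{ij}=P_{\text{earlier in FTPP}}$, and same-type pairs contribute $T^A=T^{\FTPP}$ because $A$ processes each type in the natural order, so they cancel from the difference. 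For a cross-type pair $\ell<k$, I would split by the order in which $A$ completes the two jobs: when $A$ agrees with $\FTPP$ the residual $T^A-P_i^\ell$ reduces to the partial-work term $D^A_{(k,j),(\ell,i)}$, whereas when $A$ disagrees it becomes $(P_j^k-P_i^\ell)+D^A_{(\ell,i),(k,j)}$.

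Summing the partial-work residuals across all pairs aggregates them into $V_{\mathrm{tot}}=\sum_{\ell,i}\sum_{k\ne\ell}V^k(e_i^\ell)$, where $V^k(t)$ is the work already invested in the currently active type-$k$ job at time $t$. Because $A$ is type-wise non-preemptive, at most one job of each of the $K-1$ other types is active at every completion, and memorylessness of the exponential gives $\EE[V^k(t)]\le\lambda_k$ uniformly in $t$. Summing therefore yields $\EE[V_{\mathrm{tot}}]\le\sum_{\ell,i}\sum_{k\ne\ell}\lambda_k=(K-1)n\sum_\ell\lambda_\ell$, which furnishes the last term of the lemma. For the ``flipped-order'' contribution I would further split the indicator $\ind{e_j^k<e_i^\ell}$ via $A_1=\{e_j^k<b_i^\ell\}$ and $A_2=\{b_i^\ell\le e_j^k<e_i^\ell\}$. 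On $A_1$, $P_i^\ell$ is drawn freshly at $b_i^\ell$ by memorylessness and is independent of $\ind{A_1}$, so $\EE[P_i^\ell\ind{A_1}]=\lambda_\ell\PP(A_1)$; a stochastic-domination argument (using that $A_1$ is monotone decreasing in $P_j^k$) gives $\EE[P_j^k\ind{A_1}]\le\lambda_k\PP(A_1)$, reproducing the middle term of the lemma. The $A_2$ case always has an active type-$\ell$ partner for $(k,j)$ at time $e_j^k$, so it is already absorbed by the $V_{\mathrm{tot}}$ bookkeeping.

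The main obstacle I foresee is justifying $\EE[P_j^k\ind{A_1}]\le\lambda_k\PP(A_1)$: because the algorithm may adapt its schedule based on whether $(k,j)$ is still running, $P_j^k$ and $A_1$ are genuinely correlated, so no naive independence is available. The intended route is to condition on the filtration at $b_j^k$, apply the strong memoryless property to regenerate the residual size of $(k,j)$ as a fresh exponential at every subsequent stopping time while it is still active, and then exploit that $A_1$ is monotone in $P_j^k$ to rewrite it as $\{P_j^k\le a\}$ for a data-measurable threshold $a$, from which the bound follows by an elementary exponential tail computation. A secondary delicate point is to verify that the $A_2$ contribution is absorbed by $V_{\mathrm{tot}}$ without double counting, which should be traceable by matching each $A_2$ event to the specific partial-work term $V^\ell(e_j^k)$ that it consumes.
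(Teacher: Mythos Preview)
Your decomposition goes astray at exactly the step you flag as the main obstacle, and the proposed fix via monotonicity does not work for arbitrary type-wise non-preemptive algorithms. The claim $\EE[P_j^k\ind{A_1}]\le\lambda_k\PP(A_1)$ with $A_1=\{e_j^k<b_i^\ell\}$ requires $P_j^k$ and $\ind{A_1}$ to be negatively correlated, but an adaptive schedule can make $A_1$ \emph{more} likely when $P_j^k$ is large. For instance, with $K=3$, $n=1$: start the type-$k$ job; if it finishes before time $1$ start $(\ell,1)$ immediately, otherwise run $(k,1)$ to completion, then $(m,1)$, then $(\ell,1)$. Here $A_1=\{P_j^k\ge1\}$, which is positively correlated with $P_j^k$, and $\EE[P_j^k\ind{A_1}]=(\lambda_k+1)e^{-1/\lambda_k}>\lambda_k e^{-1/\lambda_k}=\lambda_k\PP(A_1)$. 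So the monotone-threshold representation $A_1=\{P_j^k\le a\}$ you propose is not available without further hypotheses on $A$. A second gap hides in your $A_2$ case: the flipped-order contribution there is $(P_j^k-P_i^\ell)\ind{A_2}$ \emph{plus} the partial-work term $D^A_{(\ell,i),(k,j)}$, and only the latter is a $V_{\mathrm{tot}}$-type quantity; the term $(P_j^k-P_i^\ell)\ind{A_2}$ is not absorbed by your bookkeeping.

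The paper sidesteps both issues by crudely bounding the delay \emph{before} any case split: it uses $D^A_{ab}\le P_a\,\ind{b_a<e_b}$ for every ordered pair. The key point is that the event $\{b_a<e_b\}$ asks only whether job $b$ has already completed when $a$ starts, so it is $\Fcal_{b_a}$-measurable and hence independent of the fresh exponential $P_a$. This immediately yields
\[
\EE\sbr{T^A_{(k,j),(\ell,i)}}\le\lambda_\ell\,\PP(b_i^\ell<e_j^k)+\lambda_k\,\PP(b_j^k<e_i^\ell)
\]
with no correlation argument at all. Only \emph{after} the sizes have been replaced by their means does the paper decompose each start-before-end event into three pieces (the partner finishes during my run; I finish before the partner starts; the partner is mid-run when I finish). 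The ``mid-run'' pieces sum to at most $1$ over the partner index by type-wise non-preemption, producing the $(K-1)n\sum_\ell\lambda_\ell$ overhead, and the two ``finish before start'' pieces are disjoint, giving $\lambda_\ell\PP(e_i^\ell<b_j^k)+\lambda_k\PP(e_j^k<b_i^\ell)\le\lambda_\ell+(\lambda_k-\lambda_\ell)\PP(e_j^k<b_i^\ell)$, which is the middle term of the lemma. The moral: pair each job size with the indicator anchored at its \emph{own} start time rather than at the other job's end time, so that independence is free.
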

\begin{proof}
Recall that if $D^A_{ij}$ is the amount of time a job $i$ delays job $j$ when running algorithm $A$, then we can write the cost of algorithm $A$ as
  \[
C_{A} = \sum_{j=1}^N P_j + \sum_{i=1}^N \sum_{j=i+1}^N \rbr{D^A_{ij} + D^A_{ji}}.
  \]
Moreover, if $b_i$,$e_i$ are the start (end) time of job $i$, it always holds that $D^A_{ij}\le P_i\ind{b_i < e_j}$. Using this inequality and dividing the summation into types, we get
\begin{align*}
\EE[C_{A}] \leq & \sum_{\ell=1}^K\sum_{k=\ell+1}^K \sum_{(i,j) \in [n]^2}\left(\EE\sbr{P_i^\ell\ind{b_i^\ell < e_j^k} }+\EE\sbr{P_j^k\ind{b_j^k < e_i^\ell} }\right)\\
& +\sum_{\ell=1}^K \rbr{\sum_{i=1}^n\EE[P_i^\ell]+\sum_{j=i+1}^n \EE[P_i^\ell\ind{b_i^\ell< e_j^\ell}]+\EE[P_j^\ell\ind{b_j^\ell< e_i^\ell}]}.
\end{align*}
Since jobs are independent, the expected duration of a job of type $\ell$ is $\lambda_\ell$, independently of its start time. Also, as the algorithm is type-wise non-preemptive, for all $\ell\in[K], j>i$, we have $\ind{b_j^\ell< e_i^\ell]}=0$ and $\ind{b_i^\ell \leq e_j^\ell]}=1$. Thus, 
\begin{align}
\EE[C_{A}] \leq & \sum_{\ell=1}^K\sum_{k=\ell+1}^K \sum_{(i,j) \in [n]^2}\left(\lambda_\ell\EE\sbr{\ind{b_i^\ell < e_j^k} }+\lambda_k\EE\sbr{\ind{b_j^k< e_i^\ell} }\right) \nonumber\\
&+\sum_{\ell=1}^K \rbr{\sum_{i=1}^n\lambda_\ell+\sum_{j=i+1}^n \lambda_\ell} \nonumber\\
= & \sum_{\ell=1}^K\lambda_\ell \frac{n(n+1)}{2}+\underbrace{\sum_{\ell=1}^K\sum_{k=\ell+1}^K \sum_{(i,j) \in [n]^2}\left(\lambda_\ell\EE\sbr{\ind{b_i^\ell < e_j^k} }+\lambda_k\EE\sbr{\ind{b_j^k < e_i^\ell} }\right)}_{(i)}. \label{eq:rr-decomposition 1}
\end{align}
We can now decompose the event that job $i$ of type $\ell$ started before job $j$ of type $k$ finished:
\begin{align*}
\ind{b_i^\ell < e_j^k}
& = \ind{b_i^\ell < e_j^k \leq e_i^\ell } + \ind{e_i^\ell < e_j^k} \\
& = \ind{b_i^\ell < e_j^k \leq e_i^\ell } + \ind{e_i^\ell < b_j^k} +  \ind{b_j^k \leq e_i^\ell < e_j^k }
\end{align*}
$\cbr{e_i^\ell < b_j^k}$ is the event that job $i$ of type $\ell$ was fully computed before job $j$ of type $k$ started. $\cbr{b_i^\ell < e_j^k \leq e_i^\ell }$ is the event that job $i$ of type $\ell$ was running when job $j$ of type $k$ finished, and reciprocally for $\cbr{b_j^k \leq e_i^\ell < e_j^k }$. This gives the following decomposition:
\begin{align*}
    (i)=& \underbrace{\sum_{\ell=1}^K\sum_{k=\ell+1}^K \sum_{(i,j) \in [n]^2}\left(\lambda_\ell\EE\sbr{\ind{e_i^\ell< b_j^k} }+\lambda_k\EE\sbr{\ind{e_j^k< b_i^\ell} }\right)}_{(ii)}\\
    &+\underbrace{\sum_{\ell=1}^K\sum_{k=\ell+1}^K \sum_{(i,j) \in [n]^2}\lambda_\ell\EE\sbr{\ind{b_i^\ell< e_j^k\leq e_i^\ell} } + \lambda_k \ind{b_j^k\leq e_i^\ell< e_j^k}}_{(iii)} \\
    &+\underbrace{\sum_{\ell=1}^K\sum_{k=\ell+1}^K \sum_{(i,j) \in [n]^2}\lambda_k\EE\sbr{\ind{b_j^k< e_i^\ell\leq e_j^k}+ \lambda_k \ind{b_i^\ell\leq e_j^k< e_i^\ell} }}_{(iv)}
\end{align*}
Since the algorithm is type-wise non-preemptive, a single job of each type may run at a given time. This implies that any job of type $\ell$ cannot be in a middle of two  different jobs of type $k$, namely
\[
\forall (\ell,k) \in [K]^2, \ \ell \neq k, \ \forall (i,j) \in [n]^2,\ \sum_{j=1}^n \ind{b_j^k\leq e_i^\ell< e_j^k} \leq 1.
\]
The same conclusion similarly holds for all other sums in terms $(iii)$ and $(iv)$, and therefore implies the following bound:
\begin{align*}
    (iii)+(iv)\leq  n \sum_{\ell=1}^K\sum_{k=\ell+1}^K (\lambda_\ell+\lambda_k)= (K-1)n\sum_{\ell=1}^K\lambda_{\ell}.
\end{align*}
 We also have $\ind{e_j^k< b_i^\ell}\leq 1- \ind{b_i^\ell< e_j^k}$, thus 
 \begin{align*}
     (ii) 
     &\leq \sum_{\ell=1}^K\sum_{k=\ell+1}^K \sum_{(i,j) \in [n]^2}\rbr{\lambda_\ell+(\lambda_k-\lambda_\ell)\EE\sbr{\ind{e_j^k< b_i^\ell} }} \\
     & = n^2\sum_{\ell=1}^K\sum_{k=\ell+1}^K (K-\ell)\lambda_\ell + \sum_{\ell=1}^K\sum_{k=\ell+1}^K \sum_{(i,j) \in [n]^2}(\lambda_k-\lambda_\ell)\EE\sbr{\ind{e_j^k< b_i^\ell} } .
 \end{align*}
 Combining everything into \Cref{eq:rr-decomposition 1}, we get:
 \begin{align*}
  \EE[C_{A}] & \leq  \sum_{\ell=1}^K\lambda_\ell \frac{n(n+1)}{2} + n^2\sum_{\ell=1}^K\sum_{k=\ell+1}^K (K-\ell)\lambda_\ell \\
  &\quad+ \sum_{\ell=1}^K\sum_{k=\ell+1}^K \sum_{(i,j) \in [n]^2}(\lambda_k-\lambda_\ell)\EE\sbr{\ind{e_j^k< b_i^\ell} }+(K-1)n\sum_{\ell=1}^K\lambda_{\ell}\\
  & = \EE[C_{\FTPP}]+ \sum_{\ell=1}^K\sum_{k=\ell+1}^K \sum_{(i,j) \in [n]^2}(\lambda_k-\lambda_\ell)\EE\sbr{\ind{e_j^k< b_i^\ell} }+(K-1)n\sum_{\ell=1}^K\lambda_{\ell},
 \end{align*}
 where the last equality is by \Cref{lemma:cost FTPP}.
\end{proof}
\subsection{Upper bound for ETC-RR}\label{app:ETC-RR-k-types}

\begin{proposition}
\label{app:bound-ETC-RR}
The following bound holds:
    \begin{align}
	\EE[C_{\ETCRR}]&\leq \EE[C_{\FTPP}]+\frac{12K}{n}\EE[C^{OPT}]  +4n\sqrt{n\log(2n^2K^3)}\sum_{\ell=1}^K (K-\ell)^2\lambda_\ell.
	\end{align}
\end{proposition}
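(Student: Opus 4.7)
The plan is to build on Lemma~\ref{app:lemma:cost-algo}, which gives
\[
\EE[C_{\ETCRR}] \leq \EE[C_{\FTPP}] + (K-1)n\sum_{\ell=1}^K \lambda_\ell + \sum_{\substack{\ell<k\\ i,j}}(\lambda_k-\lambda_\ell)\PP(e_j^k < b_i^\ell).
\]
The first extra term is absorbed using \eqref{eq: optimal cost lower bound}: $(K-1)n\sum_\ell \lambda_\ell \leq 4(K-1)\EE[C_{\OPT}]/n$. The rest of the proof is devoted to bounding the sum, in which each term is the probability of a ``bad'' event --- a long-mean job completing before a short-mean job is even started.

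\textbf{Good event and bad event.} Define $\mathcal{E}$ to be the event that for every pair $(\ell,k)$ and every value of $m=\beta_{k,\ell}+\beta_{\ell,k}$ reached during the run, $|\hat r_{k,\ell} - \lambda_\ell/(\lambda_k+\lambda_\ell)| < \delta_{k,\ell}$. The critical first step is the independence argument: by the memoryless property of exponentials, successive parallel terminations between types $k$ and $\ell$ are i.i.d.\ Bernoulli$(\lambda_\ell/(\lambda_k+\lambda_\ell))$ draws. A Hoeffding bound analogous to Lemma~\ref{lem:concentration_rm} together with a union bound over pairs and over the $\leq 2n$ possible values of $m$ then yields $\PP(\overline{\mathcal{E}}) = O(1/(nK))$. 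On $\overline{\mathcal{E}}$, the trivial bound $\sum_{\ell<k}(\lambda_k-\lambda_\ell) \leq K\sum_k\lambda_k$ combined with \eqref{eq: optimal cost lower bound} contributes an $O(K\EE[C_{\OPT}]/n)$ term, which together with the $(K-1)n\sum_\ell \lambda_\ell$ term above yields the $12K/n$ coefficient in the proposition.

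\textbf{Phase decomposition and the preemption gain.} Under $\mathcal{E}$, mimic the ETC-U argument and split the run into (up to $K$) phases indexed by the smallest type still with remaining jobs; under $\mathcal{E}$ this smallest active type is never eliminated, so phases are well-defined. For each $k > o$, in phase $o$ the total number of concurrent terminations between types $o$ and $k$ is at most $m^*_{o,k} := \min\{n,\, 8((\lambda_k+\lambda_o)/(\lambda_k-\lambda_o))^2\log(2n^2K^3)\}$, since past this threshold $\delta_{o,k}$ becomes small enough to eliminate $k$ under $\mathcal{E}$. The new step --- where preemption pays off and the analysis diverges from ETC-U --- is a \emph{second} concentration, on the fraction of those $m^*_{o,k}$ terminations that correspond to type-$k$ completions: by memorylessness this fraction is $\lambda_o/(\lambda_k+\lambda_o)$ up to a Hoeffding slack, so the number of type-$k$ completions in phase $o$ is bounded by $m^*_{o,k}\cdot \lambda_o/(\lambda_k+\lambda_o)$.

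\textbf{Assembling the bound and main obstacle.} Each type-$k$ completion in phase $o$ delays at most $n$ jobs of each type $\ell \in \{o,\ldots,k-1\}$, at cost $(\lambda_k - \lambda_\ell) \leq (\lambda_k - \lambda_o)$ per delayed job. Thus the contribution of pair $(o,k)$ during phase $o$ is at most $n(k-o)(\lambda_k-\lambda_o) \cdot m^*_{o,k}\lambda_o/(\lambda_k+\lambda_o)$; applying $\min(a,b) \leq \sqrt{ab}$ to $m^*_{o,k}$ collapses $(\lambda_k-\lambda_o)\cdot m^*_{o,k}/(\lambda_k+\lambda_o)$ into $\sqrt{8n\log(2n^2K^3)}$, leaving a clean $\lambda_o$ factor. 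Summing over $k \in \{o+1,\ldots,K\}$ (a $K-o$ factor) and over the $k-o \leq K-o$ values of $\ell$, then reindexing $o \to \ell$, produces the $(K-\ell)^2 \lambda_\ell$ coefficient and the $4n\sqrt{n\log(2n^2K^3)}$ prefactor. The main obstacle will be rigorously justifying the two-layer concentration --- on $\hat r_{k,\ell}$ \emph{and} on the number of type-$k$ terminations among $m$ parallel ones --- despite both counts being observed at data-dependent stopping times; this is exactly the ``independence argument'' alluded to in the main-paper sketch, and both bounds reduce to Hoeffding on i.i.d.\ Bernoulli sequences only because the memoryless property guarantees that each termination event contributes a fresh independent draw.
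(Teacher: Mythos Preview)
Your outline tracks the paper through the cost decomposition of Lemma~\ref{app:lemma:cost-algo}, the good event $\mathcal E$ on the $\hat r$ statistics, the phase decomposition, the bound $m^*_{o,k}$ on the number of comparisons, and the final $\min(a,b)\le\sqrt{ab}$ collapse. The divergence is precisely at the step you flag as new.

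You propose to obtain the factor $\lambda_o/(\lambda_k+\lambda_o)$ via a \emph{second concentration}: among the $\le m^*_{o,k}$ joint terminations in phase $o$, the fraction that are type-$k$ is $\lambda_o/(\lambda_k+\lambda_o)$ ``up to a Hoeffding slack,'' which you then drop. Two remarks. First, that fraction is literally $\hat r_{k,o}$, the statistic already controlled by $\mathcal E$, so there is no genuinely second layer. Second, and more importantly, the slack you discard is $m\,\delta^{(m,n)}=\sqrt{m\log(2n^2K^3)/2}$; after multiplying by $n(\lambda_k-\lambda_\ell)$ and summing over $o\le\ell<k$ it leaves a $\lambda_k$-weighted residue of order $n\log(2n^2K^3)\,K^2\sum_k\lambda_k$, which is not absorbed by the stated $\tfrac{12K}{n}\EE[C_{\OPT}]$ term (you would need a $K^2\log n/n$ coefficient there instead).

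The paper extracts the same factor with no slack, by a different device. It discretizes time into steps of size $\Delta$ and, on each step in phase $o$ with $k\in\mathcal A$, uses memorylessness to write \emph{exactly}
\[
\PP(\text{type-}k\text{ job ends}\mid\text{history})=1-e^{-\Delta/\lambda_k},\qquad
\PP(\text{type-}k\text{ or type-}o\text{ job ends}\mid\text{history})=1-e^{-\Delta/\lambda_k-\Delta/\lambda_o}.
\]
Summing over steps, taking the ratio of these two quantities, and sending $\Delta\to 0$ yields the identity-in-expectation
\[
\EE\bigl[\#\{\text{type-}k\text{ completions in phase }o,\ k\in\mathcal A\}\cdot\ind{\mathcal E}\bigr]
\;\le\;\frac{\lambda_o}{\lambda_k+\lambda_o}\,m^*_{o,k},
\]
with no concentration error. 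This exact-ratio trick (not a second Hoeffding bound) is the ``independence argument'' the main-text sketch refers to, and it is what delivers the clean $\lambda_o$ factor and hence the $(K-\ell)^2\lambda_\ell$ coefficient with the stated constants. Your route is recoverable but would yield a slightly weaker bound unless you find another way to kill the slack.
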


\textbf{Good Event.} For any couple $(k,\ell)$, if at some iteration
$\beta_{k,\ell}+ \beta_{\ell,k} = m$, we define the more precise notation for $\hat{r}_{\ell,k}$ at that iteration as $\hat{r}_{\ell,k}^m$. Notice that $m$ represents the number of times that jobs of either type were completed while both were active. Therefore, $m$ can have $2n$ different values, where the extreme case $m=2n-1$ is, for example, when $n-1$ jobs of type $k$ are first completed and only then all $n$ jobs of type $\ell$ are completed. 

Let us start by showing that the estimators $\hat{r}_{\ell,k}$  well-concentrate around their expectations. Exponential random variables are memory-less, i.e., if $X_i\sim Exp(\lambda_i)$, the law of $X_i$ conditionally on it being larger than a constant is unchanged. In particular, `resetting' (replacing by an independent copy) an exponential random variable at any time that precedes its activation does not affect its distribution. We employ reset when one of the types is completed, or when either of the types is removed from $\Acal$ (and then we discard the comparison between types $k,\ell$), and say that way a comparison is triggered, it is taken from an i.i.d. sequence of comparisons. Specifically, given a deterministic number of comparisons $m$ between types $k,\ell$, we write 
\begin{align*}
    \hat{r}^{m}_{\ell,k}
    &\stackrel{\mathcal{L}}{=} \frac{1}{m}\sum_{i=1}^m\mathds{1}\{X_i^{\ell}<X_i^{k}\},
\end{align*}
with $(X_i^\ell)_{i \in [m]}$ and $(X_i^k)_{i \in [m]}$ independent exponential variables of parameters $\lambda_\ell$ and $\lambda_k$ respectively.

Finally, $\Ecal$ be the event that all comparisons between $k,\ell$ are well-concentrated, namely,
\begin{align*}
\Ecal= \cbr{\forall (k,\ell) \in [K]^2, \forall m \in [2n], \bigg|\hat{r}^m_{\ell,k}-\frac{\lambda_k}{\lambda_\ell+\lambda_k}\bigg|<\delta^{(m,n)}}
\end{align*}
with $\delta^{(m,n)}= \sqrt{\frac{\log(2n^2K^3)}{2m}}$, and recall that for any $m \in [2n]$, $ \EE[\hat{r}^m_{\ell,k}]=\frac{\lambda_k}{\lambda_\ell+\lambda_k}$.
By Lemma \ref{lem:concentration_rm} applied with $\alpha =2$, and a union bound over the $\frac{K(K-1)}{2}$ possible pairs, we have:
\begin{equation}\label{eq:ETCRRboundbadevent}
    \PP(\overline{\Ecal})\leq \frac{1}{nK}.
\end{equation}
Notice that under $\Ecal$ a type $\ell$ will never be eliminated by a type $k$ with $\lambda_k\geq\lambda_\ell$, since 
\begin{align*}
    \hat{r}_{k,\ell} - \delta_{\ell,k} < \rbr{\frac{\lambda_\ell}{\lambda_k+\lambda_\ell}+\delta_{\ell,k}} - \delta_{\ell,k} = \frac{\lambda_\ell}{\lambda_k+\lambda_\ell}\leq\frac{1}{2},
\end{align*}
so if $k$ is the minimal type in $\Acal$, then under the good event, it will never be eliminated. Moreover, type $k$ can only be compared to a type $\ell$ with $\lambda_\ell\leq\lambda_k$ at most 
\begin{align}
    m^\text{max}_{\ell,k}&\leq \min\cbr{2n,8\rbr{\frac{\lambda_k+\lambda_\ell}{\lambda_k-\lambda_\ell}}^2\log(2n^2K^3)}:=m^*_{\ell,k} \label{eq:etcrr max comparisons}
\end{align}
since clearly $m\leq 2n$ and if $m\geq 8\rbr{\frac{\lambda_k+\lambda_\ell}{\lambda_k-\lambda_\ell}}^2\log(2n^2K^3)$, then $\delta_{\ell,k}\leq\frac{1}{4}\frac{\lambda_k-\lambda_\ell}{\lambda_k+\lambda_\ell}$ and
\begin{align*}
    \hat{r}_{\ell,k} - \delta_{\ell,k}
    > \rbr{\frac{\lambda_k}{\lambda_k+\lambda_\ell} - \delta_{\ell,k}} - \delta_{\ell,k}
    \geq \frac{\lambda_k}{\lambda_k+\lambda_\ell} -2\cdot \frac{1}{4}\frac{\lambda_k-\lambda_\ell}{\lambda_k+\lambda_\ell}
    = \frac{1}{2}.
\end{align*}

\textbf{Cost Analysis.} Assume that the active type set $\Acal$ can only change at discrete times $t\in\cbr{0,\Delta,2\Delta,\dots}\triangleq\Tcal$ for some $\Delta>0$. We will later take the limit $\Delta\to0$, which coincides with the following \Cref{algo:etc-rr-k}. 
In the following, we denote by $\Acal(t)$ the active type set at time interval $[t, t+\Delta)$, and $\Ucal(t)$ incomplete type set at $[t, t+\Delta)$. Also, let $b_i^\ell\in\Tcal$ be the start time of the $i^{th}$ job of the $\ell^{th}$ type and $e_i^\ell\in\Tcal$ be its end-time (w.l.o.g., if a task ended at a time $t\notin\Tcal$, we delay its ending to $\ceil{\frac{t}{\Delta}}\Delta$). 

Starting from the cost decomposition of \Cref{app:lemma:cost-algo}, we have 
\begin{align}
     \EE[C_{A}] 
     &\leq   \EE[C_{\FTPP}]+(K-1)n\sum_{\ell=1}^K\lambda_{\ell} 
     + \sum_{\ell=1}^K\sum_{k=\ell+1}^K \sum_{(i,j) \in [n]^2}(\lambda_k-\lambda_\ell)\EE\sbr{\ind{e_j^k< b_i^\ell} } \nonumber\\
     & \leq \EE[C_{\FTPP}]+(K-1)n\sum_{\ell=1}^K\lambda_{\ell}
     + \sum_{\ell=1}^K\sum_{k=\ell+1}^K n(\lambda_k-\lambda_\ell)\underbrace{\sum_{j=1}^n\EE\sbr{\ind{e_j^k< b_n^\ell} }}_{(*)}. \label{eq:etc-rr decomposition}
\end{align}
We focus our attention on bounding term $(*)$. For any $t\in\Tcal$ and every $o\in[K]$, define the events
\begin{align*}
&\Fcal_o(t) = \cbr{ o \in \Acal(t), \forall p<o: p\notin \Acal(t)},\qquad 
\bar{\Fcal}_o(t) = \cbr{\forall p\leq o: p\notin \Acal(t)} \\
\end{align*}
These events capture the notion of \textit{phases}, namely, when $\Fcal_o$ is active, the $o$ is the type of the smallest mean that has not been finished. Then, we can write 
\begin{align*}
(*) &= \sum_{j=1}^n\EE\sbr{\ind{e_j^k< b_n^\ell} } \\
&=  \sum_{j=1}^n\sum_{t\in\Tcal}\EE\sbr{\ind{e_j^k=t+\Delta, e_j^k< b_n^\ell} } \\
&\overset{(1)}{\leq}  \sum_{j=1}^n\sum_{t\in\Tcal}\EE\sbr{\ind{e_j^k=t+\Delta, \ell\in\Ucal(t)} } \\
&\leq \sum_{o=1}^\ell\sum_{j=1}^n \sum_{t\in\Tcal}\EE\sbr{\ind{e_j^k=t+\Delta, \Fcal_o(t)} } + \sum_{j=1}^n \sum_{t\in\Tcal} \EE\sbr{\ind{e_j^k=t+\Delta,\ell\in\Ucal(t), \bar{\Fcal}_\ell(t)} } \\
& \overset{(2)}{\leq} \sum_{o=1}^\ell\sum_{j=1}^n \sum_{t\in\Tcal}\EE\sbr{\ind{e_j^k=t+\Delta, \Fcal_o(t)} } + \sum_{j=1}^n \sum_{t\in\Tcal}\EE\sbr{\ind{e_j^k=t+\Delta,\overline{\Ecal}} }\\
& \overset{(3)}{\leq} \underbrace{\sum_{o=1}^\ell\sum_{j=1}^n \sum_{t\in\Tcal}\EE\sbr{\ind{e_j^k=t+\Delta, \Fcal_o(t)} }}_{(i)} + n\PP(\overline{\Ecal}).
\end{align*}
$(1)$ is true since the event $\cbr{e_j^k=t+\Delta, e_j^k< b_n^\ell}$ implies that $b_n^\ell>t$, so type $\ell$ was not completed at time $t$. $(2)$ holds since under $\Ecal$, the type of the minimal duration expected is never eliminated, so if $\ell\in\Ucal(t)$, it is impossible that $p\notin\Acal(t)$ for all $p\leq \ell$ (either there is a type $p<\ell,p\in\Ucal(t)$ that should not have been eliminated, or type $\ell$ should not have been eliminated since it is still incomplete). Finally, $(3)$ is since every job can only end at one time point.

We now further continue to bound term $(i)$. To do so, observe that if $e_j^k=t+\Delta$, then $k\in\Acal(t)$ and the task $j$ of this type was completed at the interval $[t,t+\Delta)$. Moreover, since the job durations are exponential, the completion of any job in $\Acal(t)$ at interval $[t,t+\Delta)$ is independent of the events that occurred until time $t$. Taking into consideration that only one job of any type can run in every interval, the two following equalities hold:
\begin{align*}
    &\EE\sbr{\ind{e^k_j=t+\Delta, \Fcal_o(t)}} = \rbr{1-\exp(-\Delta/\lambda_k)} \EE\sbr{\ind{\Fcal_o(t),k\in\Acal(t)}} \\
    & \EE\sbr{\ind{e^k_j=t+\Delta \text{ or } e^o_j=t+\Delta, \Fcal_o(t),k\in\Acal(t)}} \\ &\space = \rbr{1-\exp(-\Delta/\lambda_k-\Delta/\lambda_o)} \EE\sbr{\ind{\Fcal_o(t),k\in\Acal(t)}}.
\end{align*}

Thus, $(i)$ can be written as
\begin{align*}
&(i) = \sum_{o=1}^\ell\frac{\rbr{1-\exp(-\Delta/\lambda_k)}}{\rbr{1-\exp(-\Delta/\lambda_k-\Delta/\lambda_o)}}\sum_{j=1}^n \sum_{t\in\Tcal}\EE\sbr{\ind{e^k_j=t+\Delta \text{ or } e^o_j=t+\Delta, \Fcal_o(t),k\in\Acal(t)}} \\
& \leq \underbrace{\sum_{o=1}^\ell\frac{\rbr{1-\exp(-\Delta/\lambda_k)}}{\rbr{1-\exp(-\Delta/\lambda_k-\Delta/\lambda_o)}}\EE\sbr{\sum_{j=1}^n \sum_{t\in\Tcal}\ind{e^k_j=t+\Delta \text{ or } e^o_j=t+\Delta, \Fcal_o(t),k\in\Acal(t),\Ecal}}}_{(ii)} \\
&\quad + \underbrace{\sum_{o=1}^\ell\sum_{j=1}^n \sum_{t\in\Tcal}\EE\sbr{\ind{e^k_j=t+\Delta \text{ or } e^o_j=t+\Delta, \Fcal_o(t),\overline{\Ecal}}}}_{(iii)}
\end{align*}
The bad-event term can be bounded by
\begin{align*}
(iii) &\leq \EE\sbr{\ind{\overline{\Ecal}}\sum_{o=1}^\ell\sum_{j=1}^n \sum_{t\in\Tcal}\ind{e^k_j=t+\Delta \text{ or } e^o_j=t+\Delta, \Fcal_o(t)}} \\
& \leq (\ell+1) n \PP(\overline{\Ecal}).
\end{align*}
For the second term, recall that the $\ell^{th}$ phase, in which type $\ell$ is the smallest one that was not completed, is represented by the event $\Fcal_\ell(t)$. Furthermore, given the good event, the smallest type is never eliminated. so once $\Fcal_\ell(t)$ becomes active, it would end only when all jobs of type $\ell$ are completed. In  other words, the time indices in which $\Fcal_\ell(t)$ hold form a (possibly empty) interval $\Ical_\Delta(\ell)$, which represents the $\ell^{th}$ phase. Thus, term $(ii)$ counts the expected number of times that jobs of either type $k$ or $o$ could finish in this interval while type $k$ is still active.

Now, we take the limit $\Delta\to0$ (and denoting the limit interval $\Ical_\Delta(\ell)\to\Ical(\ell)$). Notice that the limit and expectation are interchangeable by the bounded convergence theorem, as the number of times a job of either type $k$ or $o$ can be completed is bounded by $2n$. 
\begin{align*}
    (ii) 
&\underset{\Delta\to0}{\to} \sum_{o=1}^\ell\frac{\lambda_o}{\lambda_k+\lambda_o}\EE\sbr{\sum_{t\in\Ical(o)}\sum_{j=1}^n \ind{e^k_j\in\Ical(o) \text{ or } e^o_j\in\Ical(o),k\in\Acal(t),\Ecal}}\\
& \leq  \sum_{o=1}^\ell\frac{\lambda_o}{\lambda_k+\lambda_o}m^*_{o,k}.
\end{align*}
The inequality holds since under the good event, at any interval where both types $k$ and $o$ with $\lambda_o\le\lambda_k$ are active, there can be at most $m^*_{o,k}$ comparisons. Substituting $(ii)$ and $(iii)$ back into $(i)$, we get
\begin{align*}
(i) \leq \sum_{o=1}^\ell \frac{\lambda_o}{\lambda_k+\lambda_o}m^*_{o,k} + (\ell+1) n \PP(\overline{\Ecal}),
\end{align*}
and yet again, substituting this, through $(*)$, back into \Cref{eq:etc-rr decomposition}, yields
{\small
\begin{align*}
    \EE[C_{A}] &- \EE[C_{\FTPP}] \\
     & \leq (K-1)n\sum_{\ell=1}^K\lambda_{\ell}
     + \sum_{\ell=1}^K\sum_{k=\ell+1}^K n(\lambda_k-\lambda_\ell)\rbr{\sum_{o=1}^\ell \frac{\lambda_o}{\lambda_k+\lambda_o}m^*_{o,k} + (\ell+2) n\PP(\overline{\Ecal}) } \\
     & \leq (K-1)n\sum_{\ell=1}^K\lambda_{\ell}
     + 2Kn^2\PP(\overline{\Ecal})\sum_{\ell=1}^K\sum_{k=\ell+1}^K (\lambda_k-\lambda_\ell) \\
     &\quad+ \sum_{\ell=1}^K\sum_{k=\ell+1}^K n(\lambda_k-\lambda_\ell)\sum_{o=1}^\ell \frac{\lambda_o}{\lambda_k+\lambda_o}\min\cbr{2n,8\rbr{\frac{\lambda_k+\lambda_o}{\lambda_k-\lambda_o}}^2\log(2n^2K^3)}\tag{\cref{eq:etcrr max comparisons}} \\
     & \leq (K-1)n\sum_{\ell=1}^K\lambda_{\ell}
     + 2K^2n^2\PP(\overline{\Ecal})\sum_{\ell=1}^K\lambda_\ell \\
     &\quad+ \sum_{\ell=1}^K\sum_{k=\ell+1}^K\sum_{o=1}^\ell n(\lambda_k-\lambda_\ell) \frac{\lambda_o}{\lambda_k+\lambda_o}4\sqrt{n\log(2n^2K^3)}\frac{\lambda_k+\lambda_o}{\lambda_k-\lambda_o} \tag{$\min\cbr{a,b}\leq \sqrt{ab},\forall a,b\ge0$} \\
     &\leq \rbr{2\frac{K-1}{n}
     + \frac{4K}{n}}\EE[C^{OPT}] \tag{\cref{eq:ETCRRboundbadevent} and \eqref{eq: optimal cost lower bound}}\\
     &\quad+ 4n\sum_{\ell=1}^K\sum_{k=\ell+1}^K\sum_{o=1}^\ell \lambda_o\sqrt{n\log(2n^2K^3)} \tag{$\lambda_o\le \lambda_\ell$} \\
     & \leq \frac{6K}{n}\EE[C^{OPT}]
     + 4n\sqrt{n\log(2n^2K^3)}\sum_{\ell=1}^K\sum_{o=1}^\ell (K-\ell)\lambda_o \\
     & \leq \frac{6K}{n}\EE[C^{OPT}]
     + 4n\sqrt{n\log(2n^2K^3)}\sum_{\ell=1}^K (K-\ell)^2\lambda_\ell
\end{align*}
\qed
}

\subsection{Analysis of UCB-RR}
\begin{proposition}
\label{prop:bound-UCBRR}
The following bound holds for any $\Delta\leq \frac{\lambda_1}{4}$ and $n\geq \max(20,10\ln(K))$ :
    \begin{align}
	\EE[C_{\UCBRR}]\leq&  \EE[C_{\FTPP}]+\frac{12K}{n}\EE[C^{OPT}]  +6n\sqrt{2n\log(2n^2K^2)}\sum_{\ell=1}^K (K-\ell)\lambda_\ell.
	\end{align}
\end{proposition}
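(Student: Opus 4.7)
The plan is to start from the bound of \Cref{app:lemma:cost-algo}, which already contains the $\FTPP$ term, the additive $(K-1)n\sum_\ell \lambda_\ell$ artefact (absorbed by the $\EE[C_{\OPT}]/n$ term using \eqref{eq: optimal cost lower bound}), and the crucial quantity $\sum_{(\ell,k),i,j}(\lambda_k-\lambda_\ell)\PP(e_j^k<b_i^\ell)$. As in the analysis of $\ETCRR$, I would bound $\sum_j \PP(e_j^k<b_n^\ell)$ for each pair $\ell<k$ by introducing the ``phase'' events $\Fcal_o(t)=\{o\in\Acal(t),\ \forall p<o:p\notin\Acal(t)\}$ and decomposing per active phase. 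The new ingredient is that in $\UCBRR$, the set of candidate types is governed by KL-UCB indices on the Bernoulli completion variables $x_k^s$, whose means are $p_k=1-e^{-\Delta/\lambda_k}\approx \Delta/\lambda_k$ (valid for $\Delta\le\lambda_1/4$).

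I would then define two failure events. The \emph{lower-deviation event} is $\Gcal^{-}=\bigcap_{k,s}\{u_k(s)\ge p_k\}$: a standard KL-UCB argument (see \Cref{app:boundtaul} referenced in the sketch) yields $\PP(\overline{\Gcal^{-}})\le C/n^2$, uniformly in $\Delta$, because the confidence width $(\log n^2)/T_k(t)$ is tailored to $n$, not to $\Delta$. On $\Gcal^{-}$, the index of the smallest-mean active type dominates, so as in $\ETCRR$ the smallest active type is never evicted, and phase $\ell$ is a time interval $\Ical(\ell)$ during which only types $k\ge\ell$ can be active. The contribution of $\overline{\Gcal^{-}}$ is absorbed into the $O(K/n)\EE[C_{\OPT}]$ term just as in the $\ETCRR$ proof.

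The main obstacle is the \emph{upper-deviation} failure: an arm $k$ of true mean $p_k$ can be played while $u_k\gg p_k$. A direct KL-UCB count of such rounds scales as $\log(n^2)/\kl(p_k,p_\ell)$, and $\kl(p_k,p_\ell)\to 0$ as $\Delta\to 0$, so the number of ``bad'' $\Delta$-rounds diverges. The key observation, stated in \eqref{eq:bad_pull_diverge} of the sketch, is that only terminations contribute to $\sum_j \ind{e_j^k\in\Ical(\ell)}$, and each $\Delta$-round of type $k$ terminates with probability $p_k=1-e^{-\Delta/\lambda_k}$, which vanishes at the same rate. Writing the expected number of completions of type $k$ during the bad rounds of phase $\ell$ as $p_k\cdot\EE[\#\text{bad }\Delta\text{-rounds}]$, the $\Delta$'s cancel and the Pinsker-type lower bound $\kl(p_k,p_\ell)\ge (p_k-p_\ell)^2/(2\max(p_k,p_\ell))$ together with $p_k-p_\ell\asymp \Delta(\lambda_\ell-\lambda_k)/(\lambda_k\lambda_\ell)$ yields a $\Delta$-independent upper bound on the expected number of completions of a suboptimal type during phase $\ell$, of order $\sqrt{n\log(2n^2K^2)}\cdot(\lambda_k+\lambda_\ell)/(\lambda_k-\lambda_\ell)$ via the $\min\{n,\cdot\}\le\sqrt{n\cdot}$ trick.

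Substituting, each pair $(\ell,k)$ contributes at most $n(\lambda_k-\lambda_\ell)\cdot\sqrt{2n\log(2n^2K^2)}\cdot(\lambda_k+\lambda_\ell)/(\lambda_k-\lambda_\ell)\le 2n\sqrt{2n\log(2n^2K^2)}\,\lambda_k$ to the excess, after a constant-factor slack absorbing the low-order terms (this is where the conditions $\Delta\le\lambda_1/4$ and $n\ge\max(20,10\ln K)$ are used, to control the Bernoulli-vs-exponential approximation and sum the low-order corrections). Finally, swapping summation order,
\begin{equation*}
\sum_{\ell=1}^K\sum_{k=\ell+1}^K 2n\sqrt{2n\log(2n^2K^2)}\,\lambda_k\cdot\tfrac{3}{2}=3n\sqrt{2n\log(2n^2K^2)}\sum_{k=1}^K(k-1)\lambda_k,
\end{equation*}
and symmetrizing with the analogous $(K-\ell)\lambda_\ell$ term (since either of $k$ or $\ell$ can play the role of the ``big'' type in the pair sum) produces the claimed $6n\sqrt{2n\log(2n^2K^2)}\sum_\ell(K-\ell)\lambda_\ell$, while the $\overline{\Gcal^{-}}$ contribution together with the $(K-1)n\sum_\ell\lambda_\ell$ artefact of \Cref{app:lemma:cost-algo} merges into the $12K\,\EE[C_{\OPT}]/n$ term via \eqref{eq: optimal cost lower bound}.
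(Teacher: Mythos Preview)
Your overall plan—start from \Cref{app:lemma:cost-algo}, pass to the Bernoulli completion variables $p_k=1-e^{-\Delta/\lambda_k}$, and exploit the cancellation between the termination probability and the diverging KL-UCB bad-round count—is indeed the paper's route. Two points, however, are not sound as written. The minor one: the phase machinery with $\Acal(t)$ and ``the smallest active type is never evicted'' is lifted verbatim from the $\ETCRR$ analysis, but $\UCBRR$ maintains no candidate set $\Acal$. The paper works directly with the event $\{a(h)=k,\ \ell\in\Ucal(h),\ u_k(h)\ge u_\ell(h)\}$ and splits on whether $u_\ell(h)\ge \mu_\ell-\varepsilon_{\ell,k}$; no phases are used.

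The substantive gap is in the leading-term dependence on $\lambdab$. Your Pinsker step gives a per-pair contribution of order $n\sqrt{n\log(\cdot)}\,(\lambda_k+\lambda_\ell)$, and $\sum_{\ell<k}(\lambda_k+\lambda_\ell)=(K-1)\sum_k\lambda_k$ contains a $\lambda_K$ term—precisely what the proposition is supposed to \emph{avoid}. Your ``symmetrization'' does not fix this: $\sum_k(k-1)\lambda_k$ and $\sum_\ell(K-\ell)\lambda_\ell$ are different numbers (for $K=2$ they are $\lambda_2$ and $\lambda_1$), and no relabeling of a fixed ordered pair makes them equal. The paper gets the sharper $\lambda_\ell$-only scaling via a case split on $\mu_\ell\gtrless 5\mu_k$. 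When $\mu_\ell\le 5\mu_k$, the refined Pinsker bound $d(p,q)\ge (p-q)^2/(2\max(p,q))$ yields $\mu_k/ d(\mu_k{+}\varepsilon,\mu_\ell{-}\varepsilon')\lesssim \mu_k^2/(\mu_\ell-\mu_k)^2\asymp \lambda_\ell^2/(\lambda_k-\lambda_\ell)^2$ (the $\max$ is comparable to $\mu_k$, not $\mu_\ell$), so after the $\min\{n,\cdot\}\le\sqrt{n\cdot}$ trick the pair contributes $n\sqrt{n\log(\cdot)}\,\lambda_\ell$. When $\mu_\ell\ge 5\mu_k$, a different choice of $\varepsilon_{k,\ell}=\mu_k$, $\varepsilon_{\ell,k}=\mu_\ell/5$ gives a bound that is only $O(n\lambda_\ell\log)$, hence lower order and absorbed into the $K\EE[C_{\OPT}]/n$ term. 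Without this split your argument cannot reach $\sum_\ell(K-\ell)\lambda_\ell$ in the $n\sqrt{n}$ term.
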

\vspace{-.25cm}

Assume discretization of the time to units of $\Delta$, as was done in the analysis of $\ETCRR$. Specifically, assume that the active job only changes at times $t\in\cbr{0,\Delta,2\Delta,\dots}\triangleq\Tcal$ for some $\Delta>0$. We then denote the index of the discretization step by $h = \frac{t}{\Delta}+1\in\cbr{1,2,\dots}$.
 
For each job type $\ell \in [K]$, we introduce $T_\ell(h)$, the number of times job type $\ell$ has been chosen up to iteration $h$. Due to the fact that job durations are exponential, their increments are independent, and increments of length $\Delta$ of jobs of type $\ell$ have a termination probability of $\mu_\ell=1-e^{-\frac{\Delta}{\lambda_k}}$. Leveraging this, let $(x_\ell^s)_{s\ge1}$ be sequences of i.i.d Bernoulli random variables of mean $\mu_{\ell}$,. We then fix our probability space for the analysis s.t. when choosing a job of type $\ell$ for the $s^{th}$ time, it is terminated if $x_\ell^s=1$. Notice that while we allow the sequence $(x_\ell^s)_{s\ge1}$ to have more than $n$ job terminations, it is of no consequence of the analysis, as the algorithm will never choose a job type after its $n^{th}$ job was terminated.

Next, define the empirical means after running $m$ discretized intervals of type-$\ell$ jobs as $\hat{\mu}_\ell(m):= \frac{1}{m}\sum_{s=1}^{m}x_\ell^s$, and the index at iteration $h$ as
\begin{align} \label{eq: ucbrr index}
u_\ell(h)= \max \left\{\tilde{\mu} \in[0,1]: d\left(\hat{\mu}_\ell(T_\ell(t-1)), \tilde{\mu}\right) \leq \frac{\log\rbr{n^2K^2}}{T_\ell(t-1)}\right\}.
\end{align}
Starting from the cost decomposition of \Cref{app:lemma:cost-algo}, we have 
\begin{align}
     \EE[C_{A}] 
     &\leq   \EE[C_{\FTPP}]+(K-1)n\sum_{\ell=1}^K\lambda_{\ell} 
     + \sum_{\ell=1}^K\sum_{k=\ell+1}^K \sum_{(i,j) \in [n]^2}(\lambda_k-\lambda_\ell)\EE\sbr{\ind{e_j^k< b_i^\ell} } \nonumber\\
     & \leq \EE[C_{\FTPP}]+(K-1)n\sum_{\ell=1}^K\lambda_{\ell}
     + \underbrace{\sum_{\ell=1}^K\sum_{k=\ell+1}^K n(\lambda_k-\lambda_\ell)\sum_{j=1}^n\EE\sbr{\ind{e_j^k< e_n^\ell} }}_{(*)}. \label{eq:ucb-rr decomposition}
\end{align}
Denote $a(h)\in [K]$, the type of job chosen at iteration $h$, and let $\varepsilon_{\ell,k}>0$ be some constant that will be determined later in the proof. Notice that if $e_j^k< e_n^\ell$, then there must be an iteration where type $\ell$ was not completed and the $j^{th}$ job of type $k$ were played and completed:
\begin{align}
(*) \leq & \sum_{h =1}^{\infty}\sum_{\ell=1}^K\sum_{k=\ell+1}^K n(\lambda_k-\lambda_\ell)\EE\sbr{\ind{a(h)=k,\  \ell \in \Ucal(h), x_k^{T_k(h)}=1} }\nonumber\\
 = & \sum_{h=1}^{\infty}\sum_{\ell=1}^K\sum_{k=\ell+1}^K n(\lambda_k-\lambda_\ell)\EE\sbr{\ind{a(h)=k,\  \ell \in \Ucal(h), u_\ell(h)\leq u_k(h),x_k^{T_k(h)}=1} }\nonumber\\
 \leq& \sum_{h=1}^{\infty}\sum_{\ell=1}^K\sum_{k=\ell+1}^K n(\lambda_k-\lambda_\ell)\EE\sbr{\ind{a(h)=k,\  \ell \in \Ucal(h), u_k(h)\geq u_\ell(h)\geq \mu_{\ell}-\varepsilon_{\ell, k},x_k^{T_k(h)}=1} }\nonumber\\
 &+  \sum_{h=1}^{\infty}\sum_{\ell=1}^K\sum_{k=\ell+1}^K n(\lambda_k-\lambda_\ell)\EE\sbr{\ind{a(h)=k,\  \ell \in \Ucal(h), u_\ell(h)\leq \mu_\ell-\varepsilon_{\ell,k},x_k^{T_k(h)}=1} }\nonumber\\
  \overset{(1)}{\leq}& \underbrace{\sum_{h=1}^{\infty}\sum_{\ell=1}^K\sum_{k=\ell+1}^K n(\lambda_k-\lambda_\ell)(1-e^{-\frac{\Delta}{\lambda_k}})\EE\sbr{\ind{a(h)=k, u_k(h)\geq \mu_{\ell}-\varepsilon_{\ell}} }}_{(i)}\nonumber\\
 &+  \underbrace{\sum_{\ell=1}^K\sum_{k=\ell+1}^K n^2(\lambda_k-\lambda_\ell)\PP\rbr{\exists h \text{ s.t. } u_\ell(h)\leq \mu_{\ell}-\varepsilon_{\ell}}}_{(ii)}\label{eq: ucbrr decomposition}
\end{align}
In $(1)$, we get the first line by the memoryless property of exponential random variables, noting that all the events inside the indicator are determined before the beginning of the $h^{th}$ iteration. The second line of this relation uses the fact that all tasks will eventually be completed, so $\sum_{h =1}^{\infty}\EE\sbr{\ind{a(h)=k,x_k^{T_k(h)}=1} }=n$.

\paragraph{Bounding term $(i)$. } We now bound the first term of the decomposition in \cref{eq: ucbrr decomposition}.
\begin{align*}
    (i):=&\sum_{h =1}^{\infty}\sum_{\ell=1}^K\sum_{k=\ell+1}^K n(\lambda_k-\lambda_\ell)(1-e^{-\frac{\Delta}{\lambda_k}})\EE\sbr{\ind{a(h)=k, u_k(h)\geq \mu_{\ell}-\varepsilon_{\ell}} } \\ 
    =&\sum_{\ell=1}^K\sum_{k=\ell+1}^K n(\lambda_k-\lambda_\ell)(1-e^{-\frac{\Delta}{\lambda_k}})\sum_{h =1}^{\infty} \EE\sbr{\ind{a(h)=k, u_k(h)\geq \mu_{\ell}-\varepsilon_{\ell}} }.
\end{align*}
Denoting $\underline{d}(p,q) = d(p,q)\ind{p\leq q}$, we have
$$
\{\mu_\ell-\varepsilon_{\ell,k} \leq u_k(h)\} \implies \{ \hat{\mu}_k(T_k(h))\leq \mu_\ell-\varepsilon_{\ell,k} \text{ and }d(\hat{\mu}_k(T_k(h)),\mu_\ell-\varepsilon_{\ell,k} )\leq \frac{\log\rbr{n^2K^2}}{T_k(h)}\} \text{ or } \{ \hat{\mu}_k(T_k(h))\geq \mu_\ell-\varepsilon_{\ell,k} \} ,
$$
which is equivalent to $\cbr{\underline{d}(\hat{\mu}_k(T_k(h)),\mu_\ell-\varepsilon_{\ell,k} )\leq \frac{\log\rbr{n^2K^2}}{T_k(h)}}$. Thus, we can bound
\begin{align*}
    (i) 
    \leq &\sum_{\ell=1}^K\sum_{k=\ell+1}^K n(\lambda_k-\lambda_\ell)(1-e^{-\frac{\Delta}{\lambda_k}})\sum_{h =1}^{\infty} \EE\sbr{\ind{a(h)=k,\underline{d}(\hat{\mu}_k(T_k(h)),\mu_\ell-\varepsilon_{\ell,k} )\leq \frac{\log\rbr{n^2K^2}}{T_k(h)}} } \\
    \leq & \sum_{\ell=1}^K\sum_{k=\ell+1}^K n(\lambda_k-\lambda_\ell)(1-e^{-\frac{\Delta}{\lambda_k}})\sum_{s =1}^{\infty} \EE\sbr{\underline{d}(\hat{\mu}_k(s),\mu_\ell-\varepsilon_{\ell,k} )\leq \frac{\log\rbr{n^2K^2}}{T_k(h)}} ,
\end{align*}
where the second inequality is since $T_k(h)$ increases by $1$ every time that $a(h)=k$. This can be naturally bounded using the following lemma.
\begin{lemma}  Let $X_1, X_2, \ldots$ be a sequence of Bernoulli independent random variables with mean $\mu$, and let $\hat{\mu}_s=$ $\frac{1}{s} \sum_{t=1}^s X_t$ be the sample mean. Further, let $a>0, \mu'>\mu$ and define $\kappa =\sum_{s=1}^{\infty} \ind{\underline{d}(\hat{\mu}_s,\mu' )\leq \frac{a}{s}}$. Then,
\begin{align*}
\EE[\kappa] 
\leq \inf _{\varepsilon \in(0, \mu'-\mu)}\rbr{\frac{a}{d(\mu+\varepsilon, \mu')}+\frac{1}{d\left(\mu+\varepsilon, \mu\right)}}.
\end{align*}
\end{lemma}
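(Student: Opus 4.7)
The plan is to split the sum defining $\kappa$ according to whether the empirical mean $\hat{\mu}_s$ lies below or above the threshold $\mu+\varepsilon$, for an arbitrary $\varepsilon\in(0,\mu'-\mu)$. On the low side, the event is forced to hold only for finitely many $s$ by a deterministic argument; on the high side, the event is controlled by a Chernoff bound on $\hat{\mu}_s$. Optimizing over $\varepsilon$ at the end yields the infimum on the right-hand side.

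First, I would write
\begin{align*}
\kappa \;=\; \sum_{s=1}^{\infty}\ind{\underline{d}(\hat{\mu}_s,\mu')\le \tfrac{a}{s},\ \hat{\mu}_s\le \mu+\varepsilon}\; +\; \sum_{s=1}^{\infty}\ind{\underline{d}(\hat{\mu}_s,\mu')\le \tfrac{a}{s},\ \hat{\mu}_s>\mu+\varepsilon}.
\end{align*}
For the first (``low'') sum, note that on the event $\{\hat{\mu}_s\le\mu+\varepsilon<\mu'\}$ the underline is inactive, so $\underline{d}(\hat{\mu}_s,\mu')=d(\hat{\mu}_s,\mu')$. Since $p\mapsto d(p,\mu')$ is strictly decreasing on $[0,\mu']$, we have $d(\hat{\mu}_s,\mu')\ge d(\mu+\varepsilon,\mu')$, and the condition $\underline{d}(\hat{\mu}_s,\mu')\le a/s$ forces $s\le a/d(\mu+\varepsilon,\mu')$. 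This gives a deterministic (pathwise) bound of $a/d(\mu+\varepsilon,\mu')$ on the first sum.

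For the second (``high'') sum, I would take expectation and upper-bound each indicator by $\ind{\hat{\mu}_s>\mu+\varepsilon}$. The standard Chernoff bound for Bernoulli averages then yields $\PP(\hat{\mu}_s\ge\mu+\varepsilon)\le \exp(-s\,d(\mu+\varepsilon,\mu))$. Summing the geometric series and using the elementary inequality $e^{-x}/(1-e^{-x})\le 1/x$ for $x>0$, which follows from $1-(1+x)e^{-x}\ge 0$, gives
\begin{align*}
\sum_{s=1}^{\infty}\PP(\hat{\mu}_s>\mu+\varepsilon)\;\le\;\frac{1}{d(\mu+\varepsilon,\mu)}.
\end{align*}

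Combining the two bounds,
\begin{align*}
\EE[\kappa]\;\le\;\frac{a}{d(\mu+\varepsilon,\mu')}+\frac{1}{d(\mu+\varepsilon,\mu)},
\end{align*}
and since $\varepsilon\in(0,\mu'-\mu)$ was arbitrary, taking the infimum finishes the proof. The only subtle step is handling the underline in $\underline{d}$: this is precisely why splitting at $\mu+\varepsilon<\mu'$ works, as it guarantees we land in the regime where $\underline{d}$ coincides with $d$. The Chernoff bound in the form $\PP(\hat{\mu}_s\ge \mu+\varepsilon)\le e^{-s d(\mu+\varepsilon,\mu)}$ is classical (Cram\'er--Chernoff for Bernoulli) and should be cited rather than re-derived.
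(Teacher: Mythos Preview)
Your proof is correct and follows essentially the same approach as the paper: split according to whether $\hat{\mu}_s$ exceeds $\mu+\varepsilon$, use the monotonicity of $p\mapsto d(p,\mu')$ on $[0,\mu']$ to deterministically bound the number of ``low'' terms by $a/d(\mu+\varepsilon,\mu')$, and use the Chernoff bound $\PP(\hat{\mu}_s\ge\mu+\varepsilon)\le e^{-s\,d(\mu+\varepsilon,\mu)}$ summed as a geometric series for the ``high'' terms. The only cosmetic difference is that the paper takes expectations first and manipulates unions of events, while you work pathwise before taking expectation; the content is identical.
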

\begin{proof}
The proof closely follows the one of \citep[][Lemma 10.8]{lattimore_szepesvari_2020}. For completeness, we now state the well-known Chernoff bound.
\begin{lemma} [Chernoff's bound, e.g., \citet{lattimore_szepesvari_2020}, Lemma 10.3] \label{lemma: chernoff}
Let $X_1, X_2, \ldots, X_T$ be a sequence of Bernoulli independent random variables with mean $\mu$, and let $\hat{\mu}=$ $\frac{1}{T} \sum_{t=1}^T X_t$ be the sample mean. Then, for $a\geq 0$:
$$
\mathbb{P}(d(\hat{\mu}, \mu) \geq a, \hat{\mu} \leq \mu) \leq \exp (-T a).
$$
\end{lemma}
Let $\epsilon\in(0,\mu'-\mu)$ and $u=\frac{a}{d(\mu+\varepsilon, \mu')}$. Then, it holds that
\begin{align*}
    \EE[\kappa] 
    = &\sum_{s=1}^{\infty} \PP\cbr{\underline{d}(\hat{\mu}_s,\mu' )\leq \frac{a}{s}}\\
    = & \sum_{s=1}^{\infty} \PP\cbr{\hat{\mu}_s\ge\mu'\text{ or } d(\hat{\mu}_s,\mu' )\leq \frac{a}{s}}\\
    \leq & \sum_{s=1}^{\infty} \PP\cbr{\hat{\mu}_s\ge\mu+\varepsilon\text{ or } d(\hat{\mu}_s,\mu' )\leq \frac{a}{s}} \tag{$\mu'>\mu+\epsilon$}\\
    \leq & \sum_{s=1}^{\infty} \PP\cbr{\hat{\mu}_s\ge\mu+\varepsilon\text{ or } d(\mu+\varepsilon,\mu' )\leq \frac{a}{s}}\tag{$d(\cdot,\mu')$ is decreasing in $[0,\mu']$}\\
    \leq & u + \sum_{s=1}^{\infty} \PP\cbr{\hat{\mu}_s\ge\mu+\varepsilon}\\
    \leq & u + \sum_{s=1}^{\infty} \sum_{s=1}^{\infty}  \exp \left(-sd(\mu+\epsilon,\mu)\right) \tag{Chernoff's bound}\\
    \leq & \frac{a}{d(\mu+\varepsilon, \mu')} + \frac{1}{d\left(\mu+\varepsilon, \mu\right)},
\end{align*}
and the proof is concluded by taking the infimum over all $\varepsilon\in(0,\mu'-\mu)$.
\end{proof}
Now, assume w.l.o.g. that $\lambda_k>\lambda_\ell$ (or, equivalently, $\mu_\ell<\mu_k$) for all $k>\ell$; otherwise, terms where $\lambda_k=\lambda_\ell$ in $(i)$ will be equal to $0$. Then, letting $\kappa_{k,\ell}=\sum_{s=1}^{\infty} \ind{\underline{d}(\hat{\mu}_k(s),\mu_\ell-\varepsilon_{\ell,k} )\leq \frac{\log\rbr{n^2K^2}}{s}}$, the last lemma implies that
\begin{align}
    (i) &\leq  \sum_{\ell=1}^K\sum_{k=\ell+1}^K n(\lambda_k-\lambda_\ell)(1-e^{-\frac{\Delta}{\lambda_k}})\mathbb{E}[\kappa_{k,\ell}] \nonumber\\
    & \leq \sum_{\ell=1}^K\sum_{k=\ell+1}^K n(\lambda_k-\lambda_\ell)(1-e^{-\frac{\Delta}{\lambda_k}})\left(\frac{\log\rbr{n^2K^2}}{d(\mu_k+\varepsilon_{k,\ell}, \mu_\ell-\varepsilon_{\ell})}+\frac{1}{d\left(\mu_k+\varepsilon_{k,\ell}, \mu_k\right)}\right) \label{eq:bad_pull_diverge}
\end{align}
for some $\varepsilon_{k,\ell} \in(0, \mu_\ell-\varepsilon_{\ell,k}-\mu_k)$ that will be determined later.

\paragraph{Bounding term $(ii)$. } We next focus on bounding the probabilities at the second summation of \cref{eq: ucbrr decomposition}. To do so, we prove the following lemma, which bounds each of the summands of term $(ii)$.
\begin{lemma}\label{app:boundtaul}
The following bound holds:
$\PP\rbr{\exists h \text{ s.t. } u_\ell(h)\leq \mu_{\ell}-\varepsilon_{\ell,k}}\leq \frac{\mu_\ell}{ n^2 K^2d(\mu_\ell-\varepsilon_{\ell,k},\mu_\ell)}$.
\end{lemma}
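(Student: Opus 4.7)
The plan is to first reduce the event to one on the empirical mean $\hat{\mu}_\ell$. Since $u_\ell(h)$ is a deterministic function only of $T_\ell(h-1)$ and $\hat{\mu}_\ell(T_\ell(h-1))$, and $T_\ell$ increases by one exactly when type $\ell$ is selected, the union over $h$ collapses to a union over the selection count $s \in \{1, 2, \dots\}$. Writing $u_\ell^{(s)}$ for the index after $s$ selections, $u_\ell^{(s)}$ is the right endpoint of the Bernoulli-KL ball $\{\tilde{\mu} : d(\hat{\mu}_\ell(s), \tilde{\mu}) \leq \log(n^2 K^2)/s\}$; since this ball contains $\hat{\mu}_\ell(s)$, the event $\{u_\ell^{(s)} \leq \mu_\ell - \varepsilon_{\ell,k}\}$ is equivalent to $\hat{\mu}_\ell(s) \leq \mu_\ell - \varepsilon_{\ell,k}$ together with $s\cdot d(\hat{\mu}_\ell(s), \mu_\ell - \varepsilon_{\ell,k}) \geq \log(n^2 K^2)$.

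The central analytical tool will be a Pythagorean-like inequality for the Bernoulli KL: for any $p \leq q_1 \leq q_2$ in $[0,1]$,
\[
d(p, q_2) \;\geq\; d(p, q_1) + d(q_1, q_2),
\]
which follows by integrating $\partial_q d(p,q) = (q-p)/(q(1-q))$ over $[q_1, q_2]$ and observing that the integrand pointwise dominates the one obtained after replacing $p$ by $q_1$. Applied with $p = \hat{\mu}_\ell(s)$, $q_1 = \mu_\ell - \varepsilon_{\ell,k}$, $q_2 = \mu_\ell$, this upgrades the reduced event to $s\cdot d(\hat{\mu}_\ell(s), \mu_\ell) \geq \log(n^2 K^2) + s\cdot d(\mu_\ell - \varepsilon_{\ell,k}, \mu_\ell)$ together with $\hat{\mu}_\ell(s) \leq \mu_\ell$. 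The Chernoff bound already stated earlier in the paper then gives a per-$s$ probability of at most $\exp\!\bigl(-\log(n^2 K^2) - s\cdot d(\mu_\ell - \varepsilon_{\ell,k}, \mu_\ell)\bigr)$.

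A union bound over $s \geq 1$ turns this into the geometric series $\frac{1}{n^2 K^2}\sum_{s\geq 1} e^{-s\, d(\mu_\ell - \varepsilon_{\ell,k}, \mu_\ell)} = \frac{1}{n^2 K^2 (e^{d(\mu_\ell - \varepsilon_{\ell,k}, \mu_\ell)} - 1)}$, which is at most $1/\bigl(n^2 K^2\, d(\mu_\ell - \varepsilon_{\ell,k}, \mu_\ell)\bigr)$ via $e^x - 1 \geq x$. The hard part will be sharpening this crude estimate by the extra factor of $\mu_\ell$ that appears in the target bound. I expect to get it either by (i) splitting the union bound at the scale $s \asymp 1/\mu_\ell$, below which no termination has typically occurred and the reduced event is void, while handling $s \gg 1/\mu_\ell$ with a tighter large-deviation estimate; or (ii) replacing the naive per-$s$ Chernoff argument by a maximal inequality on the optimally tuned exponential supermartingale $M_s = \exp(\lambda^\star S_s - s\psi(\lambda^\star))$ for the Bernoulli$(\mu_\ell)$ distribution, whose first-hitting-time estimate automatically carries a $\mu_\ell$ prefactor. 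This last refinement is where I expect the bulk of the technical work to lie.
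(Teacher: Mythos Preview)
Your reduction, the Pythagorean-type KL inequality, and the per-$s$ Chernoff estimate are all exactly what the paper does, and you correctly arrive at the bound $1/\bigl(n^2K^2\, d(\mu_\ell-\varepsilon_{\ell,k},\mu_\ell)\bigr)$ before the final refinement. The gap is in how you propose to gain the extra factor $\mu_\ell$.

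Your route (i) does not work as stated: when no termination has occurred by step $s$, the empirical mean is $\hat{\mu}_\ell(s)=0$, which makes the condition $\hat{\mu}_\ell(s)\le \mu_\ell-\varepsilon_{\ell,k}$ \emph{easier}, not void; the KL condition $s\cdot d(0,\mu_\ell-\varepsilon_{\ell,k})\ge \log(n^2K^2)$ then kicks in for moderate $s$, and its probability $(1-\mu_\ell)^s$ is of constant order near $s\asymp 1/\mu_\ell$. Route (ii) may be salvageable but is far heavier than necessary.

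The paper's device is a one-line monotonicity observation you are missing: between two consecutive successes of $(x_\ell^s)$, the empirical mean $\hat{\mu}_\ell(s)$ is nonincreasing in $s$, so both conditions defining the bad event (being below $\mu_\ell-\varepsilon_{\ell,k}$, and the KL exceeding $\log(n^2K^2)/s$) can only improve as $s$ moves forward within a no-success block. Hence if the bad event holds at some $s$, it holds at the last $s$ of that block, i.e.\ at some $s$ with $x_\ell^{s+1}=1$. The union bound can therefore be restricted to such $s$; independence of $x_\ell^{s+1}$ from $\hat{\mu}_\ell(s)$ then factors out $\PP(x_\ell^{s+1}=1)=\mu_\ell$ in every term, and the rest of your argument goes through unchanged to give exactly $\mu_\ell/\bigl(n^2K^2\, d(\mu_\ell-\varepsilon_{\ell,k},\mu_\ell)\bigr)$.
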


\begin{proof}
Define $S_\ell^{\text{max}}:= \sum_{h=1}^{\infty}\ind{a(h)=\ell}$, the number of iterations $\ell$ is picked by the algorithm. We have:
\begin{align*}
    \PP\rbr{\exists h \text{ s.t. } u_\ell(h)\leq \mu_{\ell}-\varepsilon_{\ell,k}}
    =& \PP\rbr{\exists h \text{ s.t. } \hat{\mu}_\ell(T_\ell(h))\leq \mu_{\ell}-\varepsilon_{\ell,k} \text{ and } d(\hat{\mu}_\ell(T_\ell(h)),\mu_{\ell}-\varepsilon_{\ell,k} ) \geq \frac{\log\rbr{n^2K^2}}{T_\ell(h)}}\\
    = & \PP\rbr{\exists s\leq S_\ell^{\text{max}} \text{ s.t. } \hat{\mu}_\ell(s)\leq \mu_{\ell}-\varepsilon_{\ell,k} \text{ and } d(\hat{\mu}_\ell(s),\mu_{\ell}-\varepsilon_{\ell,k} ) \geq \frac{\log\rbr{n^2K^2}}{s}} \\
    \leq & \PP\rbr{\exists 1\leq s < \infty \text{ s.t. } \hat{\mu}_\ell(s)\leq \mu_{\ell}-\varepsilon_{\ell,k} \text{ and } d(\hat{\mu}_\ell(s),\mu_{\ell}-\varepsilon_{\ell,k} ) \geq \frac{\log\rbr{n^2K^2}}{s}}.
\end{align*}

Now, observe that the empirical means $\hat{\mu}_\ell$ decrease in intervals without successes. Namely, if $a<b$ are time indices such that $x_\ell^a=1, x_\ell^b=1$ and for all $s\in[a+1,b-1]$, $x_\ell^s=0$, then for any $s\in [a,b-1]$, it holds that $\hat{\mu}_\ell(s)\geq \hat{\mu}_\ell(b-1)$. We thus have:
\begin{align*}
    \mathbb{P}&\left(\exists 1 \leq s <\infty: d\left(\hat{\mu}_\ell(s), \mu_\ell-\varepsilon_{\ell,k}\right)>\frac{\log\rbr{n^2K^2}}{s},\ \hat{\mu}_\ell(s)\leq \mu_\ell-\varepsilon_{\ell,k}\right) \\
    &\qquad\qquad=\mathbb{P}\left(\exists 1 \leq s <\infty: d\left(\hat{\mu}_\ell(s), \mu_\ell-\varepsilon_{\ell,k}\right)>\frac{\log\rbr{n^2K^2}}{s},\ \hat{\mu}_\ell(s)\leq  \mu_\ell-\varepsilon_{\ell,k},\ x_\ell^{s+1}=1\right).
\end{align*}
Using the union bound, this implies
\begin{align*}
 \PP\rbr{\exists h \text{ s.t. } u_\ell(h)\leq \mu_{\ell}-\varepsilon_{\ell,k}}&\leq \sum_{s=1}^{\infty} \mathbb{P}\left(d\left(\hat{\mu}_\ell(s), \mu_\ell-\varepsilon_{\ell,k}\right)>\frac{\log\rbr{n^2K^2}}{s},\hat{\mu}_\ell(s)\leq \mu_\ell-\varepsilon_{\ell,k}, \text{ and } x_\ell^{s+1}=1\right) \\
& = \sum_{s=1}^{\infty} \mathbb{P}\left( x_\ell^{s+1}=1\right)\mathbb{P}\left(d\left(\hat{\mu}_\ell(s), \mu_\ell-\varepsilon_{\ell,k}\right)>\frac{\log\rbr{n^2K^2}}{s},\hat{\mu}_\ell(s)\leq \mu_\ell-\varepsilon_{\ell,k} \vert x_\ell^{s+1}=1\right)\\
& =  \sum_{s=1}^{\infty}\mu_\ell \mathbb{P}\left(d\left(\hat{\mu}_\ell(s), \mu_\ell-\varepsilon_{\ell,k}\right)>\frac{\log\rbr{n^2K^2}}{s}, \hat{\mu}_\ell(s)<\mu_\ell-\varepsilon_{\ell,k}  \right)\\
& \leq \sum_{s=1}^{\infty}\mu_\ell  \mathbb{P}\left(d\left(\hat{\mu}_\ell(s), \mu\right)>\frac{\log\rbr{n^2K^2}}{s}+d(\mu_\ell-\varepsilon_{\ell,k},\mu_\ell) , \hat{\mu}_\ell(s)<\mu_\ell\right),
\end{align*}
where we used the fact that the sequence $x_\ell^s$ is independent and the last inequality is by \citep[Lemma 10.2, (c)]{lattimore_szepesvari_2020}. Next, using Chernoff's bound (\Cref{lemma: chernoff}), we get
\begin{align*}
 \PP\rbr{\exists h \text{ s.t. } u_\ell(h)\leq \mu_{\ell}-\varepsilon_{\ell,k}}& \leq \mu_\ell \sum_{s=1}^{\infty} \exp \left(-s\left(d(\mu_\ell-\varepsilon_{\ell,k},\mu_\ell)+\frac{\log\rbr{n^2K^2}}{s}\right)\right) \\
& \leq \frac{\mu_\ell}{n^2K^2} \sum_{s=1}^{\infty}  \exp \left(-sd(\mu_\ell-\varepsilon_{\ell,k},\mu_\ell)\right) \\
& \leq \frac{\mu_\ell}{ n^2K^2 d(\mu_\ell-\varepsilon_{\ell,k},\mu_\ell)} ,
\end{align*}
which concludes the proof of \Cref{app:boundtaul}.
\end{proof}
Finally, substituting back into $(ii)$ leads to the bound 
\begin{align}
(ii) 
\leq &\sum_{\ell=1}^K\sum_{k=\ell+1}^K n^2(\lambda_k-\lambda_\ell)\frac{\mu_\ell}{ n^2K^2 d(\mu_\ell-\varepsilon_{\ell,k},\mu_\ell)} \nonumber\\
= & \frac{1}{ K^2 }\sum_{\ell=1}^K\sum_{k=\ell+1}^K \frac{\mu_\ell(\lambda_k-\lambda_\ell)}{d(\mu_\ell-\varepsilon_{\ell,k},\mu_\ell)} . \label{eq: ucbrr term ii bound}
\end{align}

\paragraph{Combining both bounds.} 
\begin{align*}
    (*) \leq & \sum_{\ell=1}^K\sum_{k=\ell+1}^K n(\lambda_k-\lambda_\ell)\left(\frac{\mu_k\log\rbr{n^2K^2}}{d(\mu_k+\varepsilon_{k,\ell}, \mu_\ell-\varepsilon_{\ell,k})}+\frac{\mu_k}{d\left(\mu_k+\varepsilon_{k,\ell}, \mu_k\right)}\right)\\
    &+\sum_{\ell=1}^K\sum_{k=\ell+1}^K (\lambda_k-\lambda_\ell)\frac{\mu_\ell}{  K^2d(\mu_\ell-\varepsilon_{\ell,k},\mu_\ell)}.
\end{align*}
We now use a local refinement of Pinsker's inequality \cite{doi:10.1287/moor.2017.0928}:
\[
d(p,q)\geq \frac{1}{2\max(p,q)}(p-q)^2.
\]
This implies:
\begin{align*}
    (*) \leq & \sum_{\ell=1}^K\sum_{k=\ell+1}^K n(\lambda_k-\lambda_\ell)\left(\frac{2\mu_k\left(\mu_\ell-\varepsilon_{\ell,k}\right)\log\rbr{n^2K^2}}{( \mu_\ell-\varepsilon_{\ell,k}-\mu_k-\varepsilon_{k,\ell})^2}+\frac{2\mu_k(\mu_k+\varepsilon_{k,\ell})}{\varepsilon_{k,\ell}^2}\right)\\
    &+\sum_{\ell=1}^K\sum_{k=\ell+1}^K (\lambda_k-\lambda_\ell)\frac{2\mu_\ell^2}{ K^2 \varepsilon_{\ell,k}^2}.
\end{align*}
\textbf{Case 1}: Assume $\mu_\ell\geq 5\mu_k$. Setting $\varepsilon_{k,\ell}= \mu_k$,   we obtain,
\begin{align*}
    (*) \leq & \sum_{\ell=1}^K\sum_{k=\ell+1}^K n(\lambda_k-\lambda_\ell)\left(\frac{2\mu_k(\mu_\ell-\varepsilon_{\ell,k})\log\rbr{n^2K^3}}{(\mu_\ell-\varepsilon_{\ell,k}-2\mu_k)^2}+4\right)+\sum_{\ell=1}^K\sum_{k=\ell+1}^K (\lambda_k-\lambda_\ell)\frac{2\mu_\ell^2}{ K^2 \varepsilon_{\ell,k}^2}\\
    \leq & \sum_{\ell=1}^K\sum_{k=\ell+1}^K n(\lambda_k-\lambda_\ell)\left(\frac{2\mu_k(\mu_\ell-\varepsilon_{\ell,k})\log\rbr{n^2K^3}}{(\frac{3}{5}\mu_\ell-\varepsilon_{\ell,k})^2}+4\right)+\sum_{\ell=1}^K\sum_{k=\ell+1}^K (\lambda_k-\lambda_\ell)\frac{2\mu_\ell^2}{ K^2 \varepsilon_{\ell,k}^2}
\end{align*}
Setting $\varepsilon_{\ell,k}=\frac{1}{5}\mu_\ell$, we get:
\begin{align*}
    (*) \leq & \sum_{\ell=1}^K\sum_{k=\ell+1}^K n(\lambda_k-\lambda_\ell)\left(\frac{10\mu_k\log\rbr{n^2K^2}}{\mu_\ell}+4\right)+\sum_{\ell=1}^K\sum_{k=\ell+1}^K (\lambda_k-\lambda_\ell)\frac{50}{ K^2}.
\end{align*}
We have $\mu_k= 1-e^{-\frac{\Delta}{\lambda_k}}\leq \frac{\Delta}{\lambda_k}$, and if $\Delta\leq \frac{1}{4}\lambda_\ell$, $\frac{1}{\mu_\ell}\leq 1.13\frac{\lambda_\ell}{\Delta}$, this implies:
\begin{align*}
    (*) \leq & \sum_{\ell=1}^K\sum_{k=\ell+1}^K n\lambda_\ell
    11.3\log\rbr{n^2K^2}+\sum_{\ell=1}^K\lambda_\ell\left(\frac{50}{ K }+4nK\right).
\end{align*}
Since $K\geq 2$, this implies:
\begin{align*}
    (*) \leq & \sum_{\ell=1}^K\sum_{k=\ell+1}^K n\lambda_\ell
    11.3\log\rbr{n^2K^2}+\sum_{\ell=1}^K\lambda_\ell\left(12.5+4n\right)K.
\end{align*}
\textbf{Case 2}: Assume $\mu_\ell\leq 5 \mu_k$. Setting $\varepsilon_{k,\ell}= (\mu_\ell-\mu_k)/4$ and $\varepsilon_{\ell,k}= (\mu_\ell-\mu_k)/4$, we obtain,
\begin{align*}
    (*) \leq & \sum_{\ell=1}^K\sum_{k=\ell+1}^K n(\lambda_k-\lambda_\ell)\frac{\mu_k^2}{(\mu_\ell-\mu_k)^2}  \left(32\log\rbr{n^2K^2}+64\right)+\sum_{\ell=1}^K\sum_{k=\ell+1}^K (\lambda_k-\lambda_\ell)\frac{32\mu_\ell^2}{K^2(\mu_\ell-\mu_k)^2}.
\end{align*}
It also holds that $(*)\leq \sum_{\ell=1}^K\sum_{k=\ell+1}^K n^2(\lambda_k-\lambda_\ell)$. Thus:
\begin{align*}
    (*) \leq & \sum_{\ell=1}^K\sum_{k=\ell+1}^K n(\lambda_k-\lambda_\ell)\frac{4\mu_k}{(\mu_\ell-\mu_k)} \sqrt{2n\left(\log\rbr{n^2K^3}+4\right)}+\sum_{\ell=1}^K\sum_{k=\ell+1}^K (\lambda_k-\lambda_\ell)\frac{4\sqrt{2}\mu_\ell n}{K(\mu_\ell-\mu_k)}.
\end{align*}
If $\Delta \leq \frac{1}{4}\lambda_\ell$, we have:
$$
\frac{1}{\mu_\ell-\mu_k}\leq 1.46 \frac{\lambda_k\lambda_\ell}{(\lambda_k-\lambda_\ell)}.
$$
We thus obtain:
\begin{align*}
    (*) \leq & \sum_{\ell=1}^K\sum_{k=\ell+1}^K 6n\lambda_\ell \sqrt{2n\left(\log\rbr{n^2K^2}+4\right)}+\sum_{\ell=1}^K9n\lambda_\ell.
\end{align*}
For any $n\geq \max(10,10\log(K))$, we have:
\[
\ln(n^2K^2)\leq \frac{1}{2}n
\]
which implies $6n\lambda_\ell \sqrt{2n\left(\log\rbr{n^2K^2}+4\right)}\geq 11.3\log\rbr{n^2K^2}$.
Thus for any $n\geq \max(10,10\log(K))$ and $\Delta \leq \frac{1}{4}\lambda_\ell$,
\begin{align*}
    (*) \leq & \sum_{\ell=1}^K\sum_{k=\ell+1}^K 6n\lambda_\ell \sqrt{2n\left(\log\rbr{n^2K^2}+4\right)}+\frac{10K}{n}\EE[C_{OPT}].
\end{align*}

\clearpage

\section{Additional experiments}\label{app:comparisonLSPET}
We implemented the Bayesian approach of~\cite{marban2011learning} that we call LSEPT. We used an uninformative prior $\alpha=1$, $w=0$ (the same for all job types). LSEPT is then in essence a greedy algorithm. Whenever a job finishes, it runs until completion a job whose type has the lowest empirical mean size (computed across jobs that have been processed so far).

We ran all algorithms with $K=2$, where jobs of type $1$ have a mean size $\lambda_1=0.8$ and jobs of type $2$ have a mean $\lambda_2 = 1$. 

As can be seen in Figure~\ref{fig:etc-vs-opt-2jobs-rebuttal}, $\LSEPT$ has better mean performance than $\RRR$, a non-adaptive method. However, it has a large variance and its performance does not improve with $n$. This is typical of the performance of greedy algorithms: since the algorithm commits very early, it can either get very good or very bad performances. We plot the mean over $200$ seeds.

\begin{figure}[H]
  \centering
  \includegraphics[width=.65\textwidth]{./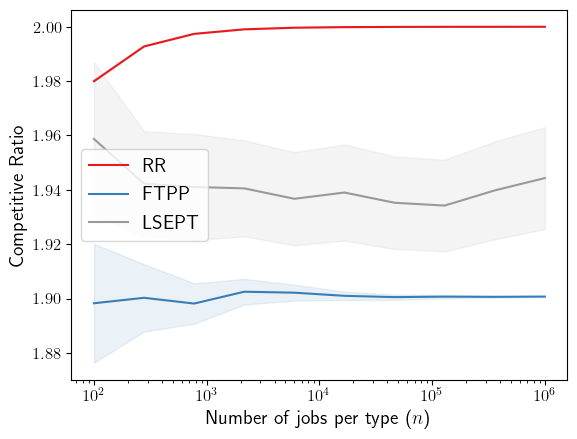}

  \caption{\textbf{CR on jobs with 2 different types}.  $K=2$,  $\lambda_2=1$ and $\lambda_1=0.8$, $n$ takes a grid of values. }
  \label{fig:etc-vs-opt-2jobs-rebuttal}
\end{figure}

\end{document}